\documentclass[conference,compsocconf,10pt]{IEEEtran}
\IEEEoverridecommandlockouts
\usepackage{cite}
\usepackage{amsmath,amssymb,amsfonts}
\usepackage{amsbsy}
\usepackage{mathbbol}
\usepackage{algorithmic}
\usepackage{graphicx}
\usepackage{textcomp}
\usepackage{xcolor}
\usepackage{tcolorbox}
\usepackage{pbox}
\usepackage{url}
\usepackage{hyperref}
\hypersetup{colorlinks=true,urlcolor=blue,citecolor=teal,linkcolor=violet}
\urlstyle{same}
\usepackage{balance} 
\usepackage{subcaption} 

\def\BibTeX{{\rm B\kern-.05em{\sc i\kern-.025em b}\kern-.08em
    T\kern-.1667em\lower.7ex\hbox{E}\kern-.125emX}}

\usepackage{amsthm}   
\usepackage{thm-restate}
\newtheorem{definition}{Definition}

\newtheorem{example}{Example}

\newcommand{\changed}[1]{{#1}}

\newcommand{\fix}[1]{\textcolor{red}{#1}}

\newcommand{\mods}[1]{{#1}}
\newcommand{\hide}[1]{{}}

\newcommand{\calx}{{\cal X}}
\newcommand{\caly}{{\cal Y}}
\newcommand{\calz}{{\cal Z}}
\newcommand{\calw}{{\cal W}}
\newcommand{\call}{{\cal L}}
\newcommand{\expect}{{\mathbb{E}}}

\newcommand{\loss}{\mathbb{L}}
\newcommand{\CE}{\mathit{CE}}

\DeclareMathOperator*{\argmax}{argmax}
\DeclareMathOperator*{\argmin}{argmin}

\pagestyle{plain}
\setcounter{page}{1}
\pagenumbering{arabic}

\usepackage{mathtools}
\usepackage{blkarray, bigstrut}
\usepackage{multirow}
\usepackage{rotating}
\usepackage{dcolumn}
\newcolumntype{L}{@{\hspace{1.5em}}D{.}{.}{1.1}@{\hspace{1.5em}}}
\newcolumntype{V}{>{\centering\arraybackslash}m{.1em}}
\newcolumntype{N}{@{}m{0pt}@{}}
\usepackage{tikz}
\usepackage{comment}

\usepackage{algorithmic}
\usepackage[]{algorithm2e}
\SetAlgoCaptionLayout{centerline}%
\SetKwInput{KwModels}{Models}
\usepackage[section]{placeins}

\usepackage{diagbox}
\usepackage{xcolor,colortbl}
\usepackage{hhline}
\definecolor{lightblue}{rgb}{0.8, 0.9, 1}
\definecolor{lightpurple}{rgb}{0.95, 0.9, 1}
\newcolumntype{a}{>{\columncolor{lightblue}}c}
\definecolor{lightgray}{rgb}{0.85, 0.9, 0.9}
\newcolumntype{b}{>{\columncolor{lightgray}}c}

\usepackage{subcaption}
\usepackage{nicefrac}
\usepackage{stmaryrd}
\usepackage{booktabs}
\usepackage{balance}
\usepackage{cleveref}


\providecommand{\UsePackageFor}[2]{ \ifx#2\undefined\usepackage{#1}\fi }

\UsePackageFor{amssymb}{\boxtimes}
\usepackage[normalem]{ulem}		
\usepackage{bbding}				
\usepackage{endnotes}			
\usepackage{hyperref}			
\usepackage{pdfcolfoot}			

%
\ifdefined\OrigFootnote\relax\else
	\newenvironment{FootnoteContent}{}{}
	\let\OrigFootnote\footnote
	\let\OrigFootnoteText\footnotetext
	\renewcommand{\footnotetext}[1]{\OrigFootnoteText{\begin{FootnoteContent}#1\end{FootnoteContent}}}
	\renewcommand{\footnote    }[1]{\OrigFootnote    {\begin{FootnoteContent}#1\end{FootnoteContent}}}
\fi

%
\definecolor{PurplePlum}{rgb}{0.1,0,0.55} 
\definecolor{Brown}{rgb}{0.5,.25,0}
\definecolor{Orange}{rgb}{1,.3,0}
\definecolor{Gray}{rgb}{.7,.7,.7}
\definecolor{DarkGreen}{rgb}{.1,.41,.1}

\newif\ifBleck
\newcommand\Bleck {\Blecktrue} 

\newcommand\Colour[1] {\color{#1}}

\newcommand\PrintToCLinks{	
  {\Colour{blue}\mbox{
    \hyperlink{w1619}{\sf$\rightarrow$~top}\quad
    \hyperlink{w1031}{\sf$\rightarrow$~toc}\quad
    \hyperlink{w1148}{\sf$\rightarrow$~lof}\quad
    \hyperlink{GreenRoom}{\sf$\rightarrow$~gr}\quad
    \hyperlink{EndNotes}{\sf$\rightarrow$~en}\quad
    \hyperlink{Sargasso}{\sf$\rightarrow$~sg}\quad
    \hyperlink{Index}{\sf$\rightarrow$~idx}
  }}
}
\makeatletter 
\newcommand\ToCLinks{
  \ifx\@onlypreamble\@notprerr		
    \hypertarget{w1619}{}			
  \else
    \AtBeginDocument{\hypertarget{w1619}{}}	
  \fi

  \ifBleck\else	
    \ifdefined\cofoot
      \cofoot{\PrintToCLinks}
      \cefoot{\PrintToCLinks}
    \else
      \def\@oddfoot{\PrintToCLinks}
      \def\@evenfoot{\PrintToCLinks}
    \fi
 \fi
}
\makeatother

\newif\ifEndNotes 

\newcommand\FnSym{{\scriptsize\PencilLeftDown\kern.1em}}		
\newcommand\EnSym {{$\bigtriangledown$}}



\def\MarkupsHowto{} 
\newcommand{\MarkupsHowtoAdd}[1]{\expandafter\def\expandafter\MarkupsHowto\expandafter{\MarkupsHowto{}#1}} 

\newif\ifMarkupsHowtoPrinted 
\newif\ifSuppress 

\newcommand\MakeMarkups[3][.]{

     \Suppressfalse
     \ifBleck\Suppresstrue\fi
     \ifx0#1\Suppresstrue\fi
     \ifx1#1\Suppressfalse\fi
     
     \expandafter\providecommand\csname#2x\endcsname {} 
     \ifSuppress\expandafter\renewcommand\csname#2x\endcsname{\relax}\else
                       \expandafter\renewcommand\csname#2x\endcsname{#3}\fi
                       
     \expandafter\providecommand\csname#2\endcsname {} 
     \ifSuppress\expandafter\renewcommand\csname#2\endcsname[1]{##1}\else
                       \expandafter\renewcommand\csname#2\endcsname[1]{{\csname#2x\endcsname##1}}\fi

     \expandafter\providecommand\csname#2d\endcsname {} 
     \ifSuppress\expandafter\renewcommand\csname#2d\endcsname[1]{\relax}\else
                       \expandafter\renewcommand\csname#2d\endcsname[1]{{\csname#2x\endcsname\sout{##1}}}\fi
                       
     \expandafter\providecommand\csname#2r\endcsname {} 
     \ifSuppress\expandafter\renewcommand\csname#2r\endcsname[2]{{##2}}\else
                       \expandafter\renewcommand\csname#2r\endcsname[2]{\csname#2d\endcsname{##1} \csname#2\endcsname{##2}}\fi

     \expandafter\providecommand\csname#2i\endcsname {} 
     \ifSuppress\expandafter\renewcommand\csname#2i\endcsname[1]{\relax}\else
                       \expandafter\renewcommand\csname#2i\endcsname[1]{\csname#2\endcsname{##1}}\fi

     \expandafter\providecommand\csname#2t\endcsname {} 
     \ifSuppress\expandafter\renewcommand\csname#2t\endcsname[1]{\relax}\else
                       \expandafter\renewcommand\csname#2t\endcsname[1]{{\csname#2x\endcsname{\mbox{$\langle\!\langle$}##1{\csname#2x\endcsname\mbox{$\rangle\!\rangle$}}}}}\fi 

     \expandafter\providecommand\csname#2b\endcsname {} 
     \ifSuppress\expandafter\renewcommand\csname#2b\endcsname[1][empty]{\relax}\else 
                       \expandafter\renewcommand\csname#2b\endcsname[1][\empty]{\ifx\empty##1\empty
                       	\label{#2-bookmark} 
                              \marginpar [\raggedleft\csname#2\endcsname{{\footnotesize\fbox{#2 working here}}~$\Longrightarrow$}]
                                                {\csname#2\endcsname{$\Longleftarrow$~{\footnotesize\fbox{#2 working here}}}}
                       \else 
                       	\marginpar [\raggedleft\csname#2\endcsname{\ifx\empty##1\empty\else\fbox{\tiny\parbox{8em}{\raggedright##1}}~\fi$\Longrightarrow$}]
                                                {\csname#2\endcsname{$\Longleftarrow$\ifx\empty##1\empty\else~{\tiny\fbox{\parbox{8em}{\raggedright##1}}}\fi}}\fi}\fi

     \expandafter\providecommand\csname#2TD\endcsname {} 
     \ifSuppress\expandafter\renewcommand\csname#2TD\endcsname{\relax}\else
                       \expandafter\renewcommand\csname#2TD\endcsname{\csname#2\endcsname{\fbox{#2 to do}}}\fi

     \expandafter\providecommand\csname#2Bar\endcsname {} 
     \ifSuppress\expandafter\renewcommand\csname#2Bar\endcsname{\relax}\else
                       \expandafter\renewcommand\csname#2Bar\endcsname{\csname#2\endcsname{\scriptsize\XSolidBrush}}\fi

     \expandafter\providecommand\csname#2f\endcsname {} 
     \ifSuppress\expandafter\renewcommand\csname#2f\endcsname[2][]{\relax}\else
      \expandafter\renewcommand\csname#2f\endcsname[2][\empty]{ 
        {\mbox{\csname#2x\endcsname\tiny$\boxtimes$}\marginpar{\hsize1cm\csname#2x\endcsname\fbox{\FnSym\footnotemark}}\relax 
        \footnotetext{\csname#2x\endcsname##2}}}\fi

     \expandafter\providecommand\csname#2e\endcsname {}
     \ifSuppress\expandafter\renewcommand\csname#2e\endcsname[1]{\relax}\else%
      \expandafter\renewcommand\csname#2e\endcsname[1]{%
       \global\EndNotestrue
       \mbox{\scriptsize\csname#2x\endcsname$\boxtimes$}\relax%
       \marginpar{\hsize1cm\csname#2x\endcsname\fbox{\EnSym\endnotemark%
                          \hypertarget{ENmark\thepage-\theendnote}{}~\hyperlink{ENtext\thepage-\theendnote}{{\Colour{blue}$\downarrow$}}}%
       }%
       {
        \def\zz{\noexpand#3}%
        \edef\z{~{[Endnote \theendnote\ %
        on p.\noexpand\hypertarget{ENtext\thepage-\theendnote}{}\thepage%
                    ~\noexpand\hyperlink{ENmark\thepage-\theendnote}{{\noexpand\Colour{blue}$\uparrow$}}]}%
        }%
        \expandafter\endnotetext\expandafter{\z\vspace{2ex}\\ ##1\newpage}%
       }
      }\fi

     \expandafter\providecommand\csname#2n\endcsname {}
     \ifSuppress\expandafter\renewcommand\csname#2n\endcsname[1]{\relax}\else%
      \expandafter\renewcommand\csname#2n\endcsname[1]{%
       \global\EndNotestrue
    \marginpar{{\tiny\endnotemark}\hypertarget{ENmark\thepage-\theendnote}{}~\hyperlink{ENtext\thepage-\theendnote}{}}
       {
        \def\zz{\noexpand#3}%
        \edef\z{~{\zz[Endnote (deferred) 
        from p.\noexpand\hypertarget{ENtext\thepage-\theendnote}{}\thepage%
        ]}%
        }%
        \expandafter\endnotetext\expandafter{\z\vspace{2ex}\\ ##1\newpage}%
       }
      }\fi

     \expandafter\providecommand\csname#2fe\endcsname {} 
     \ifSuppress\expandafter\renewcommand\csname#2fe\endcsname[2][]{\relax}\else 
      \expandafter\renewcommand\csname#2fe\endcsname[2][]{ 
       \def\File{##1}\relax
       \ifx\File\empty\csname#2f\endcsname{##2}\else 
        \global\EndNotestrue 
        \mbox{\scriptsize\csname#2x\endcsname$\boxtimes$}
        \marginpar{\csname#2x\endcsname\fbox{\FnSym\footnotemark}}\relax
        \footnotetext{~\csname#2x\endcsname##2\
                             --- See [\EnSym\endnotemark\hypertarget{ENmark\thepage-\theendnote}{}
                             \kern-.2em\hyperlink{ENtext\thepage-\theendnote}{{\Colour{blue}$\downarrow$}}].}\relax
       { 
         \def\zz{\noexpand#3}
         \edef\z{~{\zz[Endnote~\thefootnote~on~p.\noexpand\hypertarget{ENtext\thepage-\theendnote}{}\thepage
                     ~\noexpand\hyperlink{ENmark\thepage-\theendnote}
                     {{\noexpand\Colour{blue}\kern-0.1em$\uparrow$}]}}
                     {\noexpand\footnotesize\noexpand\newline\noexpand\hspace*{2em} (~from file {\noexpand\tt\File.tex}~)}
         }    
         \expandafter\endnotetext\expandafter{\z~\par\input{##1}\newpage}
        } 
       \fi 
      } 
     \fi 

     \ifSuppress\relax\else\ifBleck\relax\else
      \MarkupsHowtoAdd{\par\csname#2t\endcsname{
       $\backslash$\texttt{#2}$\cdots$\ markups are in \textbf{this} colour\ifx#1..\else\ifx1#1.\else, e.g.\ for #1.\fi\fi
       \ifMarkupsHowtoPrinted\relax\else 
        \global\MarkupsHowtoPrintedtrue 
        \begin{quote}\begin{tabular}{l@{\hspace{2em}}p{.7\linewidth}}
         \multicolumn{2}{l}{\texttt{$\backslash$MakeMarkups\ifx#1.\relax\else[#1]\fi\{#2\}\{{\it$\langle$colour command\/$\rangle$}\}}
         				 --- Defines the macros below:}\\
             & see comments at \texttt{$\backslash$MakeMarkups} definition. \\[1ex]
         \texttt{$\backslash$#2\{$\langle$text$\rangle$\}} & Sets \texttt{$\langle$text$\rangle$} in \texttt{#2}'s colour. \\
         \texttt{$\backslash$#2x} & Changes to \texttt{#2}'s colour (until end of context). \\
         \texttt{$\backslash$#2d\{$\langle$text$\rangle$\}} & Sets \texttt{$\langle$text$\rangle$} in \texttt{#2}'s colour with a strikethrough (i.e.\ delete). \\
         \texttt{$\backslash$#2r\{$\langle$this$\rangle$\}\{$\langle$that$\rangle$\}} &
          Strikes through \texttt{$\langle$this$\rangle$} and inserts \texttt{$\langle$that$\rangle$} (i.e.\ replace). \\
         \texttt{$\backslash$#2f\{$\langle$text$\rangle$\}} & Meta-comment: puts \texttt{$\langle$text$\rangle$} in a \texttt{#2}-footnote with a {\tiny$\boxtimes$} in the main text. \\
         \texttt{$\backslash$#2t\{$\langle$text$\rangle$\}} & Use for meta when  \texttt{$\backslash$#2f} isn't allowed (``Not in outer-par mode.'') \\
         \texttt{$\backslash$#2b[$\langle$optional$\rangle$]} & Marginal pointer, with label for hyper-linking directly there. \\
         \texttt{$\backslash$#2e\{$\langle$text$\rangle$\}} & Puts \texttt{$\langle$text$\rangle$} in a \texttt{#2}-endnote with a (big) $\boxtimes$ in the main text. \\[.5ex]
         \texttt{$\backslash$#2n\{$\langle$text$\rangle$\}} & Like \texttt{$\backslash$#2e}
         except there's no reference from the main text. Good for ``decluttering''
         when you still want to have the footnote- or endnote texts as reminders. \\[.5ex]
         \texttt{$\backslash$#2fe[$\langle$this$\rangle$]\{$\langle$that$\rangle$\}} & Makes a \texttt{$\backslash$#2f\{$\langle$that$\rangle$\}} that refers to a \\
           & \texttt{$\backslash$#2e\{$\langle$contents of file this.tex$\rangle$\}}. \\ 
           & Without the optional argument, acts as \texttt{$\backslash$#2f\{$\langle$that$\rangle$\}}. \\[.5ex]
         \texttt{$\backslash$#2Bar} & Inserts ``burn after reading'' symbol \csname#2Bar\endcsname, meaning
          \begin{quote}\begin{itemize}\setlength\itemsep{0pt}
           \item If yours is the only \csname#2Bar\endcsname\ in this (presumably someone else's) footnote, and you are happy that the footnote has been addressed,
           go ahead and comment-out the whole footnote. (The \csname#2Bar\endcsname\ is their request for you to ``approve and remove''.)
           \item If you are not happy, delete only your \csname#2Bar\endcsname\ and follow-on in the footnote
            (in your colour, i.e.\ with \texttt{$\backslash$#2x}) saying why you are not happy.
           \item If you are happy, but there are others' burn-after-reading symbols as well as yours, just delete yours; the other people have not yet responded.
          \end{itemize}
          \end{quote}
          The idea is that when everyone's happy, the last person will comment-out the meta-text. \\[0.5ex]
         \texttt{$\backslash$#2TD} & Inserts {\csname#2TD\endcsname}\ . \\
        \end{tabular}\end{quote}
       \fi
      }}
     \fi\fi
}


\newif\ifNoGreenRoom
\newcommand\MakeGreenRoom {\ifBleck\relax\else\ifNoGreenRoom\relax\else
\newcommand\NewGRLabel[1] {\OldGRLabel{GreenRoom-##1}} 
 \newcommand\NewGRRef[1] 
 {\expandafter\ifx\csname r@GreenRoom-##1\endcsname\relax\OldGRRef{##1}\else\OldGRRef{GreenRoom-##1}\fi}
 \let\OldGRLabel\label \let\label\NewGRLabel
 \let\OldGRRef\ref \let\ref\NewGRRef
 \hrule
 ~\\\begin{center}\Huge \hypertarget{GreenRoom}{Green Room}
 \end{center}~\\
 \hrule
\fi\fi}

\newcommand\EndGreenRoom  {\ifBleck\relax\else\ifNoGreenRoom\relax\else
\let\label\OldGRLabel
\let\ref\OldGRRef
\fi\fi}

\newif\ifNoEndNotes
\newcommand\MakeEndNotes {\ifBleck\relax\else\ifNoEndNotes\relax\else
 \hrule\vspace{20pt}\begin{center}\ifEndNotes\Huge Endnotes\else\textit{No endnote macros were called in this run.}\fi\end{center}\vspace{20pt}\hrule
 {\parindent 0pt \parskip 2ex \def\enotesize{\normalsize} \def\notesname{} 
 \ifEndNotes\theendnotes\fi} 
\fi\fi}

\newif\ifNoSargasso
\newcommand\MakeSargasso {
 \hypertarget{Sargasso}{}
 \newcommand\NewLabel[1] {\OldLabel{Sargasso-##1}} 
 \newcommand\NewRef[1] 
 {\expandafter\ifx\csname r@Sargasso-##1\endcsname\relax\OldRef{##1}\else\OldRef{Sargasso-##1}\fi}
 \let\OldLabel\label \let\label\NewLabel
 \let\OldRef\ref \let\ref\NewRef
\ifBleck\end{document}\else\ifNoSargasso
\relax
\else
  \hrule
  ~\\\begin{center}\Huge Sargasso
  \end{center}~\\
  \hrule
 \fi\fi
}

\newcommand\EndSargasso  {\ifBleck\relax\else\ifNoSargasso\relax\else
\let\label\OldLabel
\let\ref\OldRef
\fi\fi}

\newcommand\EndDocument {\ifBleck\end{document}\fi} 

\newcommand\Cite[2][\empty] {{\Colour{red}\ifx#1\empty[#2]\else[#2,~#1]\fi}}



%
%
\makeatletter
\newif\ifenotelinks
\newcounter{Hendnote}

\def\endnotemark{%
\@ifnextchar[\@xendnotemark{%
\stepcounter{endnote}%
\protected@xdef\@theenmark{\theendnote}%
\protected@xdef\@theenvalue{\number\c@endnote}%
\@endnotemark
}%
}%
\def\@xendnotemark[#1]{%
\begingroup\c@endnote#1\relax
\unrestored@protected@xdef\@theenmark{\theendnote}%
\unrestored@protected@xdef\@theenvalue{\number\c@endnote}%
\endgroup
\@endnotemark
}%
\def\endnotetext{%
\@ifnextchar[\@xendnotenext{%
\protected@xdef\@theenmark{\theendnote}%
\protected@xdef\@theenvalue{\number\c@endnote}%
\@endnotetext
}%
}%
\def\@xendnotenext[#1]{%
\begingroup
\c@endnote=#1\relax
\unrestored@protected@xdef\@theenmark{\theendnote}%
\unrestored@protected@xdef\@theenvalue{\number\c@endnote}%
\endgroup
\@endnotetext
}%
\def\endnote{%
\@ifnextchar[\@xendnote{%
\stepcounter{endnote}%
\protected@xdef\@theenmark{\theendnote}%
\protected@xdef\@theenvalue{\number\c@endnote}%
\@endnotemark\@endnotetext
}%
}%
\def\@xendnote[#1]{%
\begingroup
\c@endnote=#1\relax
\unrestored@protected@xdef\@theenmark{\theendnote}%
\unrestored@protected@xdef\@theenvalue{\number\c@endnote}%
\show\@theenvalue
\endgroup
\@endnotemark\@endnotetext
}%
\def\@endnotemark{%
\leavevmode
\ifhmode
\edef\@x@sf{\the\spacefactor}\nobreak
\fi
\ifenotelinks
\expandafter\@firstofone
\else
\expandafter\@gobble
\fi
{%
\Hy@raisedlink{%
\hyper@@anchor{Hendnotepage.\@theenvalue}{\empty}%
}%
}%
\hyper@linkstart{link}{Hendnote.\@theenvalue}%
\makeenmark
\hyper@linkend
\ifhmode
\spacefactor\@x@sf
\fi
\relax
}%
\long\def\@endnotetext#1{%
\if@enotesopen
\else
\@openenotes
\fi
\immediate\write\@enotes{%
\@doanenote{\@theenmark}{\@theenvalue}%
}%
\begingroup
\def\next{#1}%
\newlinechar='40
\immediate\write\@enotes{\meaning\next}%
\endgroup
\immediate\write\@enotes{%
\@endanenote
}%
}%
\def\theendnotes{%
\immediate\closeout\@enotes
\global\@enotesopenfalse
\begingroup
\makeatletter
\edef\@tempa{`\string>}%
\ifnum\catcode\@tempa=12
\let\@ResetGT\relax
\else
\edef\@ResetGT{\noexpand\catcode\@tempa=\the\catcode\@tempa}%
\@makeother\>%
\fi
\def\@doanenote##1##2##3>{%
\def\@theenmark{##1}%
\def\@theenvalue{##2}%
\par
\smallskip 
\begingroup
\def\href{\expandafter\savedhref}%
\def\url{\expandafter\savedurl}%
\@ResetGT
\edef\@currentlabel{\csname p@endnote\endcsname\@theenmark}%
\enoteformat
}%
\def\@endanenote{%
\par\endgroup
}%
\renewcommand*\@makeenmark{%
\hbox{\normalfont\@theenmark~}%
}%
\enoteheading
\enotesize
\input{\jobname.ent}%
\endgroup
}%
\def\enoteformat{%
\rightskip\z@
\leftskip1.8em
\parindent\z@
\leavevmode\llap{%
\setcounter{Hendnote}{\@theenvalue}%
\addtocounter{Hendnote}{-1}%
\refstepcounter{Hendnote}%
\ifenotelinks
\expandafter\@secondoftwo
\else
\expandafter\@firstoftwo
\fi
{\@firstofone}%
{\hyperlink{Hendnotepage.\@theenvalue}}%
{\makeenmark}%
}%
}%
\makeatother
\enotelinkstrue
%
\Bleck		
\MakeMarkups[Kostas]{K}{\Colour{Brown}}
\MakeMarkups[Catuscia]{U}{\Colour{PurplePlum}}
\MakeMarkups[Marco]{M}{\Colour{Orange}}


\begin{document}

\title{Optimal Obfuscation Mechanisms via Machine Learning 
}


\author{\IEEEauthorblockN{Marco Romanelli}
\IEEEauthorblockA{\textit{Inria} \\
\textit{LIX, Ecole Polytechnique, IPP}
}
\and
\IEEEauthorblockN{Konstantinos Chatzikokolakis}
\IEEEauthorblockA{\textit{University of Athens, Greece} 
}
\and
\IEEEauthorblockN{Catuscia Palamidessi}
\IEEEauthorblockA{\textit{Inria} \\
\textit{LIX, Ecole Polytechnique, IPP}
}
}
\maketitle



\begin{abstract}
We consider the problem of obfuscating sensitive information while preserving utility, and
we propose a  machine-learning approach    inspired by the generative adversarial networks  paradigm. 
The idea is to set up two nets: the generator, that tries to produce an optimal obfuscation mechanism to protect the data,  and the classifier, that tries to de-obfuscate the data. By letting the two nets compete against  each other, the mechanism improves its degree of protection, until an equilibrium is reached. 
We apply our method to the case of location privacy, and we perform experiments on synthetic data and on real data from the Gowalla dataset.  We evaluate the privacy of the  mechanism not only by its capacity to defeat the classifier, but also in terms of 
the Bayes error, which represents the strongest possible adversary.  We compare the privacy-utility tradeoff  of our method with that of the planar Laplace mechanism used in geo-indistinguishability, showing favorable results.  Like the  Laplace mechanism,  our system can be deployed at the user end for protecting his location. 

%
\end{abstract}

\section{Introduction}
\label{section_1}
Data analytics are crucial for modern companies and, consequently, there is an enormous interest in collecting and processing 
all sort of personal information. Individuals, on the other hand, are often willing to provide their data  in exchange of improved services and experiences.
However there is the risk that such disclosure of personal information could be used against them.
The rise of machine learning, with its capability of performing powerful  analytics on massive amounts of data, has further exacerbated the risks. 
Several researchers have   pointed out possible threats such as the  model inversion attacks~\cite{Fredrikson:15:CCS} and the membership inference attacks~\cite{Shokri:17:SP,Pyrgelis:18:NDSS,Hayes:19:PETS,Melis:19:SP}.


Nonetheless, if machine learning can be a threat, it can also be a powerful means to build good privacy protection mechanisms, as we will  demonstrate in this paper. 
We focus on mechanisms that obfuscate data  by adding  controlled noise.
Usually the quality of service (QoS) that the user receives in exchange of his obfuscated data degrades with the amount of obfuscation, hence the challenge is to find a  
good trade-off between privacy and utility. Following the approach of~\cite{Shokri:17:TPS},  we aim at maximizing the privacy protection while preserving the desired QoS\footnote{Other approaches  take the opposite view, and aim at maximizing utility while achieving the desired amount of privacy, see for instance~\cite{Bordenabe:14:CCS}.}.
We consider the case of \emph{ location privacy } and in particular  the  \emph{re-identification} of the user from his location, but the framework 
that we develop is general and can be applied to any situation in which an attacker might infer sensitive  information from accessible  correlated  data. 

Utility is typically expressed as a bound on the expected distance between the real location and the obfuscated one\footnote{This notion is known as \emph{distortion} in information theory~\cite{Cover:06:BOOK}.}  \cite{Shokri:17:TPS,Andres:13:CCS,Bordenabe:14:CCS,Chatzikokolakis:17:POPETS},  capturing the fact that location based services usually offer a better QoS when they receive a more accurate location. 
If also privacy is expressed as a linear function, then the optimal trade-off  can in principle be achieved with linear programming \cite{Shokri:17:TPS,Bordenabe:14:CCS,Shokri:15:PETS,Oya:17:CCS}.
The limitation of this approach, however,  is that it does not scale to large datasets.
The problem is that the linear program needs one variable for every pair $(w,z)$ of real and obfuscated locations. 
Such variables represent the probability of producing the obfuscated location $z$ when the real one is $w$.
For a $50\times 50$ grid this is more than six million variables, which is already at the limit
of what modern solvers can do. For a $260\times 260$ grid, the program has 4.5 billion variables, making
it completely intractable (we could not even launch such a program due to the huge memory requirements).
Furthermore, the background knowledge and the correlation between data points affect privacy and  are usually  difficult to determine and express formally.  
\Kt{Write about ``analytic'' methods in the sense of Planar Laplace}

Our position is that \emph{machine learning can help to solve this problem}. Inspired by the GANs paradigm~\cite{Goodfellow:14:NIPS}, 
 we propose a {system} consisting of two adversarial neural networks, $\mathit{G}$ (\emph{generator}) and $\mathit{C}$ (\emph{classifier}). 
 The idea is that   $\mathit{G}$  generates noise so to confuse the adversary as much as possible, within the boundaries of the utility constraints, 
while  $\mathit{C}$ inputs the noisy locations produced by  $\mathit{G}$ and tries to  re-identify  (classify) the corresponding user.  
While fighting against $\mathit{C}$, $\mathit{G}$ refines its 
strategy, until  a point where it cannot improve any longer. 
Note that a  significant difference from the standard GANs is that, in the latter, the generator has to learn to reproduce an existing distribution from samples. In our case, instead, 
the generator has to ``invent'' a distribution from scratch. 

The interplay between $\mathit{G}$ and $\mathit{C}$  can be seen as an instance of a
zero-sum  Stackelberg game~\cite{Shokri:17:TPS}, where $\mathit{G}$ is the \emph{leader}, 
and $\mathit{C}$ is the \emph{follower},  and the 
payoff function $f$ is the privacy loss. Finding the optimal point of equilibrium between $\mathit{G}$ and 
$\mathit{C}$ corresponds to solving a minimax problem on $f$ with $\mathit{G}$ being the minimizer and  $\mathit{C}$ the maximizer. 

A major challenge in our setting  is represented by the choice of $f$. 
A first idea would be to measure it in terms of  $\mathit{C}$'s capability to re-associate a location to the right user. 
Hence we could define  $f$ as 
the expected success probability of $\mathit{C}$'s classification.
Such function $f$ would be convex/concave with respect to the strategies of $\mathit{G}$  and $\mathit{C}$   respectively, so from game theory we would derive the existence   of a saddle point corresponding to  the optimal obfuscation-re-identification pair. 
The problem, however, is that it is difficult to reach the  saddle point via the typical  alternation between the two nets. 
Let us clarify this point with a simple example\footnote{A similar example was independently pointed out  in \cite{Abadi:16:CoRR}.}: 
\begin{figure*}
\centering
		 {\footnotesize		
		\begin{subfigure}{1\columnwidth}
		\centering
\begin{tabular}{|b|c|c|c|c|c|}
	\cline{3-6}
	\hhline{*2{>{\arrayrulecolor{white}}-}>{\arrayrulecolor{black}}|*4{-}|} 
	\multicolumn{2}{c|}{} & \multicolumn{4}{c|}{\parbox[c]{2.4mm}{\cellcolor{lightgray}\vspace{1mm}$\mathit{C}$\vspace{1mm}}} \\ 
	\cline{3-6}
	\hhline{*2{>{\arrayrulecolor{white}}-}>{\arrayrulecolor{black}}|*4{-}|}
	\multicolumn{2}{c|}{} & \multicolumn{1}{c|}{\cellcolor{lightgray}$ \begin{array}{c} \scriptstyle  \mathit{a\shortrightarrow A}\\ \scriptstyle \mathit{b\shortrightarrow B}\end{array}$} & \multicolumn{1}{c|}{\cellcolor{lightgray}$   \begin{array}{c}\scriptstyle \mathit{a\shortrightarrow A}\\\scriptstyle  \mathit{b\shortrightarrow A}\end{array}$} & \multicolumn{1}{c|}{\cellcolor{lightgray}$  \begin{array}{c}\scriptstyle \mathit{a\shortrightarrow B}\\ \scriptstyle \mathit{b\shortrightarrow B}\end{array}$} & \multicolumn{1}{c|}{\cellcolor{lightgray}$  \begin{array}{c}\scriptstyle \mathit{a\shortrightarrow B}\\ \scriptstyle \mathit{b\shortrightarrow A}\end{array}$}\\
	\hline
	\multirow{7}{*}{\begin{sideways} $\color{red}\mathit{G}$~ \end{sideways}} 
	&\cellcolor{lightgray} $\begin{array}{c}\scriptstyle  A\shortrightarrow a\\ \scriptstyle  B\shortrightarrow b\end{array}$ & ${1}$  & ${0.5}$ & ${0.5}$  &${0}$ \\  
	\cline{2-6}
	\hhline{|*1{>{\arrayrulecolor{lightgray}}-}>{\arrayrulecolor{black}}|*5{-}|}
	&\cellcolor{lightgray} ... & ...  &... & ... & ... \\  
	\hhline{|*1{>{\arrayrulecolor{lightgray}}-}>{\arrayrulecolor{black}}|*5{-}|}
	$G$&\cellcolor{lightgray}$\begin{array}{c}\scriptstyle A{\shortrightarrow}{ a} \\ \scriptstyle B{\shortrightarrow}{a}\end{array}$  &  ${0.5}$  & ${0.5}$ &  ${0.5}$ &  ${0.5}$ \\  
	\hhline{|*1{>{\arrayrulecolor{lightgray}}-}>{\arrayrulecolor{black}}|*5{-}|}
	&\cellcolor{lightgray} ... & ...  &... &...  &...  \\  
	\hhline{|*1{>{\arrayrulecolor{lightgray}}-}>{\arrayrulecolor{black}}|*5{-}|}
	&\cellcolor{lightgray} $\begin{array}{c}\scriptstyle A\shortrightarrow b\\ \scriptstyle  B\shortrightarrow a\end{array}$ & ${0}$  & ${0.5}$ &  ${0.5}$ &  ${1}$ \\  
	\hline 
\end{tabular}
\subcaption{$f=$ Expected success probability of the  classification. 
}\label{subtb:classification}
\end{subfigure}
\begin{subfigure}{1\columnwidth}
\centering
\begin{tabular}{|b|c|c|c|c|c|}
	\hhline{*2{>{\arrayrulecolor{white}}-}>{\arrayrulecolor{black}}|*4{-}|}
	\multicolumn{2}{c|}{} & \multicolumn{4}{c|}{\parbox[c]{2.4mm}{\cellcolor{lightgray}\vspace{1mm}$\mathit{C}$\vspace{1mm}}} \\ 
	\hhline{*2{>{\arrayrulecolor{white}}-}>{\arrayrulecolor{black}}|*4{-}|}
	\multicolumn{2}{c|}{} & \multicolumn{1}{c|}{\cellcolor{lightgray}$\begin{array}{c}\scriptstyle \mathit{a\shortrightarrow A}\\ \scriptstyle  \mathit{b\shortrightarrow B}\end{array}$} & \multicolumn{1}{c|}{\cellcolor{lightgray}$\begin{array}{c}\scriptstyle \mathit{a\shortrightarrow A}\\ \scriptstyle \mathit{b\shortrightarrow A}\end{array}$} & \multicolumn{1}{c|}{\cellcolor{lightgray}$\begin{array}{c}\scriptstyle \mathit{a\shortrightarrow B}\\ \scriptstyle \mathit{b\shortrightarrow B}\end{array}$} & \multicolumn{1}{c|}{\cellcolor{lightgray}$\begin{array}{c}\scriptstyle \mathit{a \shortrightarrow B}\\ \scriptstyle \mathit{b\shortrightarrow A}\end{array}$}\\
	\hline
	\multirow{7}{*}{\begin{sideways} $\mathit{G}$~ \end{sideways}} 
	& \cellcolor{lightgray}$\begin{array}{c}\scriptstyle  A\shortrightarrow a\\ \scriptstyle  B\shortrightarrow b\end{array}$ & $\mathbf{1}$ \;\;\; $\mathbb{1}$  & $\mathbf{0}$ \;\;\; $\mathbb{0.5}$ & $\mathbf{0}$ \;\;\; $\mathbb{0.5}$  &$\color{red}{\mathbf{1}}$\;\;\; $\color{red}{\mathbb{1}}$ \\  
	\hhline{|*1{>{\arrayrulecolor{lightgray}}-}>{\arrayrulecolor{black}}|*5{-}|}
	&\cellcolor{lightgray} ... & ...  &... & ... & ... \\  
	\hhline{|*1{>{\arrayrulecolor{lightgray}}-}>{\arrayrulecolor{black}}|*5{-}|}
	$G$&\cellcolor{lightgray}$\begin{array}{c}\scriptstyle A{\shortrightarrow}{a} \\ \scriptstyle B{\shortrightarrow}{a}\end{array}$  &  $\mathbf{0}$ \;\;\; $\mathbb{0.5}$  &  $\mathbf{0}$ \;\;\; $\mathbb{0.5}$ & $\mathbf{0}$ \;\;\; $\mathbb{0.5}$ &  $\mathbf{0}$ \;\;\; $\mathbb{0.5}$ \\  
	\hhline{|*1{>{\arrayrulecolor{lightgray}}-}>{\arrayrulecolor{black}}|*5{-}|}
	& \cellcolor{lightgray}... & ...  &... &...  &...  \\  
	\hhline{|*1{>{\arrayrulecolor{lightgray}}-}>{\arrayrulecolor{black}}|*5{-}|}
	&\cellcolor{lightgray} $\begin{array}{c}\scriptstyle  A\shortrightarrow b\\ \scriptstyle B\shortrightarrow a\end{array}$ & $\color{red}{\mathbf{1}}$ \;\;\; $\color{red}{\mathbb{1}} $  & $\mathbf{0}$ \;\;\; $\mathbb{0.5}$ & $\mathbf{0}$ \;\;\; $\mathbb{0.5}$  &${\mathbf{1}}$\;\;\; $\mathbb{1} $ \\  
	\hline 
\end{tabular}
\subcaption{{\bf Bold:} $f = I(X;Y)$. \;\;
$\mathbb{Hollow}$: $f=  1-B(X|Y)$.}\label{subtb:Bayes}
\end{subfigure}%
\caption{Payoff tables of the games  in Example~\ref{exa:AliceandBob}, for various payoff functions $f$.  
   $\mathit{A}$ stands for $\mathit{Alice}$  and $\mathit{B}$ for $\mathit{Bob}$.}\label{tab:ideal_game_cross_entropy_and_MI}
}	
\end{figure*}
\begin{example}\label{exa:AliceandBob}
Consider two users,  Alice and Bob,  in locations $a$ and $b$ respectively. Assume that at first  $\mathit{G}$ reports their true locations  (no noise). Then $\mathit{C}$   learns that $a$ corresponds to $\mathit{Alice}$ and $b$ to $\mathit{Bob}$. At the next round, $\mathit{G}$ will figure that  to maximize the misclassification error (given the prediction of $\mathit{C}$) it should swap  the locations, i.e., report $a$ for  $\mathit{Alice}$ and $b$ for  $\mathit{Bob}$. Then, on its turn, $\mathit{C}$ will  have to  ``unlearn'' the previous classification  and learn the   new one. But then, at the next round, $\mathit{G}$ will again swap the locations, and bring the situation   back to the starting point, and so on, without ever reaching an equilibrium. Note that a possible equilibrium point   for  $\mathit{G}$ would be  the mixed strategy that reports $a$  for both $\mathit{Alice}$ and 
$\mathit{Bob}$\footnote{There are two more equilibrium points: one is when both $\mathit{Alice}$ and  $\mathit{Bob}$  report $a$ or $b$ with uniform probability, the other is when they both report $b$. All the three strategies 
are equivalent.}  (so that   $\mathit{C}$ could only make a bling guess), but $\mathit{G}$ may not stop there. The problem is that it is difficult to calibrate 
the training of $\mathit{G}$ so that it stops in proximity of the saddle point   rather than continuing all the way to reach its relative  optimum. The situation is illustrated in Fig.\ref{subtb:classification}. 
\end{example}

In order to address this issue we  adopt a different target function,  less sensitive to the particular labeling strategy of $\mathit{C}$. 
The idea is to consider not just the \emph{precision} of the classification, but, rather, the \emph{information} contained in it.
There are two main ways of formalizing this intuition: the \emph{mutual information} $I(X;Y)$ and the  \emph{Bayes error} $B(X|Y)$, where
$X, Y$ are respectively the random variable associated to the \emph{true ids},  and to the ids resulting from the classification (\emph{predicted ids}).
We recall that $I(X;Y)=H(X)-H(X|Y)$, where $H(X)$ is the entropy of $X$ and 
$H(X|Y)$ is the residual entropy of $X$ given $Y$, while  
 $B(X|Y)$ is the  probability  of error when we select the value of X with \emph{maximum aposteriori probability}, given $Y$. 
 Mutual information and Bayes error are  related  by the  
Santhi-Vardy bound~\cite{Santhi:06:ACCCC}:
$B(X|Y) \leq 1-2^{-H(X|Y)}.$

If we set $f$ to be $I(X;Y)$ or $1-B(X|Y)$, we obtain the  payoff table illustrated in Fig.\ref{subtb:Bayes}. 
Note that the mimimum $f$ in the first and last columns   corresponds  now to a  point of equilibrium for any choice of $\mathit{C}$. 
This is not always the case, but in general it is closer to the equilibrium and  makes the training of  
$\mathit{G}$ more stable:  training $\mathit{G}$ for a longer time does not risk to increase the distance from the equilibrium point. 

In this paper we   use the mutual information to generate the noise, but we   evaluate the level of privacy  also in terms of the Bayes error, which 
 represents the probability of error of the strongest possible adversary. 
Both notions have been used in the literature as privacy measures,  
for instance mutual information has been applied to quantify anonymity \cite{Zhu:05:ICDCS,Chatzikokolakis:08:JCS}. 
The   Bayes error 
has been considered  in \cite{Chatzikokolakis:08:JCS,McIver:10:ICALP,Cherubin:17:POPETS1, Alvim:16:CSF}, and indirectly as \emph{min-entropy leakage} in \cite{Smith:09:FOSSACS}. 
\changed{Oya et al. advocate in \cite{Oya:17:CCS} that to guarantee a good level of location privacy a mechanism should measure well in terms of both  the Bayes error and the residual entropy (which is strictly related to mutual  information)}.  
Fig.~\ref{tab:anticipate_results} anticipates some of the experimental results of Sections~\ref{synthetic_experiments} and~\ref{Govalla_experiments}. 
We note that the performance of our mechanism is much better than the planar  Laplace, and comparable to that of the optimal solution in  all the three cases in which we can determine the latter. 
\changed{Of course, this comparison is not completely fair, because the planar Laplace was designed to satisfies a different notion of privacy, called \emph{geo-indistinguishability}~\cite{Andres:13:CCS} (see next paragraph). Our mechanism on the contrary does not satisfy this notion.}
\renewcommand{\arraystretch}{1.5}
 \begin{figure*}[t]\centering
{\footnotesize
\begin{subfigure}{.45\columnwidth}\centering
\begin{tabular}{|c|c|c|}
\hline
 \multicolumn{3}{|c|}{\cellcolor{lightblue} {Synthetic data, low utility}} \\ 
\hline
Laplace & Ours &  Optimal\\
\hline
$0.39$ & $0.74$ &  $0.75$\\
\hline
\end{tabular}
\end{subfigure}\hfill
\begin{subfigure}{.45\columnwidth}\centering
\begin{tabular}{|c|c|c|}
\hline
 \multicolumn{3}{|c|}{\cellcolor{lightblue}  {Synthetic data, high utility}} \\ 
\hline
Laplace & Ours &  Optimal\\
\hline
$0.23$ & $0.42$ &  $0.50$\\
\hline
\end{tabular}
\end{subfigure}\hfill%
\begin{subfigure}{.45\columnwidth}\centering
\begin{tabular}{|c|c|c|}
\hline
 \multicolumn{3}{|c|}{ \cellcolor{lightpurple} {Gowalla data, low utility}} \\ 
\hline
Laplace & Ours & Optimal \\
\hline
$0.33$ & $0.80$ &  $0.83$\\
\hline
\end{tabular}
\end{subfigure}\hfill
\begin{subfigure}{.45\columnwidth}\centering
\begin{tabular}{|c|c|c|}
\hline
 \multicolumn{3}{|c|}{ \cellcolor{lightpurple} {Gowalla data, high utility}} \\ 
\hline
Laplace & Ours & Optimal\\
\hline
$0.28$ &  $0.38$ & ?\\
\hline
\end{tabular}
\end{subfigure}\hfill
}
\caption{\footnotesize Bayes error on synthetic and Gowalla data, for the Laplace mechanism, our mechanism, and the optimal one, on a grid of $260\times 260$ cells. 
In the last table the Bayes error of the optimal mechanism is unknown: the linear program contains 4.5 billion variables, making it intractable in practice. }
\label{tab:anticipate_results}
\end{figure*}
\renewcommand{\arraystretch}{1}

Other popular privacy metrics are  \emph{differential privacy} (DP) \cite{Dwork:06:TCC}, \emph{local differential privacy} (LPD)~\cite{Duchi:13:FOCS}, and $d$-\emph{privacy}~\cite{Chatzikokolakis:13:PETS}, of which geo-indistinguishability is an instance. 
The main difference between these and the notions used in this paper is that they are worst-case measures, while ours are average. 
\changed{In other words, ours refer to the \emph{expected} level of privacy over all sensitive data, while the others are concerned with the protection of \emph{each} individual datum. 
Clearly, the latter is stronger, as proved in \cite{Alvim:11:ICALP} and \cite{De:12:TCC}, although \cite{Cuff:16:CCS} has proved that a conditional version of mutual information correspond to a relaxed form of differential privacy called $(\varepsilon,\delta)$-differential privacy.
We regard the individual protection as an important issue, and we plan to investigate the possibility of generating worst-case mechanisms via ML in future work}.  This paper 
is a preliminary exploration of the applicability of ML to  privacy, and as a starting point we focus on the average notions that have been considered in location privacy~\cite{Shokri:17:TPS,Shokri:15:PETS,Oya:17:CCS}.    


From a practical point of view  our method belongs to the    \emph{local privacy} category, like LDP and geo-indistinguishability, in the sense that it can be deployed at the user's end, with no   need of a trusted third party. Once the training is done the system can be used as a personal device   that, each time the user needs to report his location to a LBS, generates a sanitized   version of it by adding noise to the real location.   

\subsection{Contribution}
The contributions of the paper are the following:
\begin{itemize}
\item We propose an approach based on adversarial nets to generate obfuscation mechanisms with a good  privacy-utility tradeoff. 
The advantage of our method  is twofold:
\begin{itemize}
\item wrt linear programming methods, we can work on a continuous domain instead of a small  grid;
\item wrt analytic methods (such as the Planar Laplace mechanism) our approach is data-driven, taking
	into account prior knowledge about the users.
\end{itemize}
\item Although our approach is inspired by the GANs paradigm, it departs significantly from it: In our case, the distribution has to be ``invented'' rather than ``imitated''. Hence we need different techniques for evaluating a distribution. To achieve our goal, we propose a new method based on the mutual information
between the supervised and the predicted class labels. 

\item We show that the use of the use of mutual information (instead of the cross entropy) for the generator  is crucial for convergence. On the other hand for the classifier it is possible to use cross entropy and it is more efficient. 

\item We evaluate the obfuscation  mechanism produced by our method on real location data from the  Gowalla dataset.
\item We compare our mechanism with the  planar  Laplace~\cite{Andres:13:CCS} and with the optimal one, when it is possible to compute or determine theoretically the latter. 
We show that the performance of our mechanism is much better than Laplace, and not so far from the optimal.
\item We have made publicly available the implementation and the experiments at  \url{https://gitlab.com/MIPAN/mipan}.
\end{itemize}

\subsection{Related work}
Optimal mechanisms, namely mechanisms providing an optimal compromise between  utility and privacy, have attracted the interest  of many researchers.
Many of the studies so far have focused on optimization methods based on linear programming \cite{Shokri:17:TPS,Bordenabe:14:CCS,Shokri:15:PETS,Oya:17:CCS}. Although they can provide exact solutions, the huge size of the corresponding linear programs limits the scalability of these methods. 
Our approach, in contrast, using the efficient optimization process of neural networks (the gradient descent), does not suffer from this drawback. All the experiments were done 
on grid sizes for which linear programming is completely intractable. 

Adversarial  networks to construct privacy-protection mechanisms have been also proposed by  \cite{Abadi:16:CoRR,Tripathy:17:ArXiv,Huang:17:Entropy}, 
with applications on image data (the MNIST and the GENKI datasets).
The   authors of  \cite{Tripathy:17:ArXiv,Huang:17:Entropy} have also developed a theoretical framework similar to ours. 
From the methodological point of view the  main difference is that in the implementation
 they use  as target function  the cross entropy rather than the mutual information. 
 Hence in our setting the convergence of their method may be problematic,  due to the ``swapping effect'' described in Example~\ref{exa:AliceandBob}. 
 We have actually experimented \mods{the use of cross entropy} as target function on our examples in Section ~\ref{synthetic_experiments}, 
 and we could not achieve convergence. The intermediate mechanisms were unstable and the level of privacy  was poor. 
Another related paper is \cite{Jia:18:USENIX}, which uses  an adversarial  network to produce mechanisms against attribute inference attacks. The target function is the Kullback-Liebler divergence, which, in this particular context where the distribution of the secrets is fixed, reduces to cross entropy. Hence in our setting we would get the  same swapping effect explained above. 

\changed{Other works that have proposed the use of minimax learning to preserve privacy are \cite{Hamm:17:JMLR,Hayes:18:NIPS,Edwards:16:ICLR,Ren:18:ECCV}. 
The author of  \cite{Hamm:17:JMLR}  introduces the notion of minimax filter as a solution to the optimization problem between privacy as expected risk and utility as distortion, and propose various learning-based methods to approximate such solution. 
The authors of \cite{Hayes:18:NIPS} consider multi-party machine learning, and use  adversarial training  to mitigate privacy-related attacks such as party membership inference of individual
records. 
The authors of \cite{Edwards:16:ICLR} propose the minimax technique to  remove private information from personal images. Their approach is to use a stochastic gradient alternate min-max optimizer, but since they express the objective in terms of cross entropy, they may incur in the same problem as described above, i.e., they cannot guarantee convergence.  
The authors of \cite{Ren:18:ECCV}   consider personal images, and in particular the problem of preventing their re-identification  while preserving their utility, such as the the discernibility of the actions in the images. They use  the angular softmax loss as objective function, and do not analyze the problem of convergence, but  their experimental results are impressive.  }

\changed{Another related line of work is the generation of synthetic data via machine learning. An example is \cite{Beaulieu-Jones:19:CIRCOUTCOMES}, where the authors use an adversarial network to generate artificial medical records that  closely resemble participants of the Systolic Blood Pressure Trial dataset. In this case, the paradigm they use is the same as the original GAN: the discriminator takes in input both the records produced by the generator and samples from the  original dataset, and tries to distinguish them. The  original dataset is also obfuscated with differential privacy techniques to prevent membership attacks.}

One of the side contributions of our paper is a method to compute  mutual information in neural network (cfr. Section~\ref{implementation_in_nn}). 
Recently, Belghazi et al. have proposed MINE, an efficient method to neural  estimation of mutual information~\cite{Belghazi:18:ICML}, inspired  by the framework of \cite{Nowozin:16:NIPS} for the estimation of a general class of functions representable as  $f$-divergencies. These methods work also in the continuous case and for high-dimensional data. 
In our case, however, we are dealing with a discrete domain, and we can compute directly and \emph{exactly} the mutual information. 
Another reason for developing our own method is that we need to deal with \mods{a loss function} that contains not only the mutual information, but also a component representing utility, and depending on the notion of utility  the result may not be an $f$-divergence. 

Our paradigm has been inspired by the \emph{GANs}~\cite{Goodfellow:14:NIPS}, but it comes with some fundamental differences:
	\begin{itemize}
	\item $\mathit{C}$ is a classifier performing re-identification while in the \emph{GANs} there is a discriminator able to distinguish a real data distribution from a generated one;
	\item in the \emph{GANs} paradigm the generator network \mods{tries to reproduce} the original data distribution to fool the discriminator. 
	A huge difference is that, in our adversarial scenario, $\mathit{G}$ does not have a model distribution to refer to. The final data distribution only depends on the evolution of the two networks over time and it is driven by the constraints imposed in the loss functions that rule the learning process.
	\item We still adopt a training algorithm which alternates the training of $\mathit{G}$ and of $\mathit{C}$, but as we will show in Section~\ref{implementation_in_nn}, it is different from the one adopted for \emph{GANs}.
	\end{itemize}


\section{Our setting}\label{sec:oursetting}

\begin{table}[t]
\begin{center}
\begin{tabular}{|c|l|}
\hline
\textbf{Symbol}&\textbf{Description}\\
\hline
 $\mathit{C}$ &  \pbox{15cm}{Classifier network (attacker).} \\
\hline
$\mathit{G}$ &  \pbox{15cm}{Generator network.} \\
\hline
$X, \calx$ &  \pbox{15cm}{\vspace{0.5mm} Sensitive information. (Random var. and domain.)\vspace{0.5mm}} \\
\hline
$W, \calw$ & \pbox{15cm}{\vspace{0.5mm}Useful information with respect to\\  the intended notion of utility.\vspace{0.5mm}} \\
\hline
$Z, \calz$ &  \pbox{15cm}{\vspace{0.5mm}Obfuscated information accessible \\to the service provider   and to the  attacker.\vspace{0.5mm}}\\
\hline
$Y, \caly$ &  \pbox{15cm}{Information inferred by the attacker. }\\
\hline
$P_{\cdot ,\cdot}$ & \pbox{15cm}{Joint probability of two random variables.}\\
\hline
$P_{\cdot|\cdot}$ & \pbox{15cm}{\vspace{0.5mm}Conditional probability.\vspace{0.5mm}}\\
\hline
$P_{Z|W}$ & \pbox{15cm}{\vspace{0.5mm}Obfuscation mechanism.\vspace{0.5mm}}\\
\hline
$B(\cdot\mid \cdot)$ & \pbox{15cm}{\vspace{0.5mm}Bayes error.\vspace{0.5mm}}\\
\hline
$\loss[Z\mid W]$ & \pbox{15cm}{\vspace{0.5mm}Utility loss induced by the obfuscation mechanism.\vspace{0.5mm}}\\
\hline
$L$ & \pbox{15cm}{\vspace{0.5mm}Threshold on the utility loss.\vspace{0.5mm}}\\
\hline
$H(\cdot)$& \pbox{15cm}{Entropy of a random variable.}\\
\hline
$H(\cdot|\cdot)$& \pbox{15cm}{\vspace{0.5mm}Conditional entropy.\vspace{0.5mm}}\\
\hline
$I(\cdot;\cdot)$& \pbox{15cm}{Mutual information between two random variables.}\\
\hline
\end{tabular}
\caption{Table of symbols}
\label{tab_synthetic_laplace_results}
\end{center}
\end{table}

We formulate the privacy-utility optimization problem using a framework similar to that of \cite{Basciftci:16:ITAW}. 
We consider four random variables,  $X,Y,Z,W$, ranging over the sets ${\calx}, {\caly}, {\calz}$ and ${\calw}$ respectively, with the following meaning:
\begin{itemize}
\item $X$: the sensitive information that the users wishes to conceal,
\item $W$: the useful information with respect to some service provider and  the intended notion of utility,
\item $Z$: the information made visible to the service provider,  which may be intercepted by some  attacker, and 
\item $Y$: the information inferred by the attacker.
\end{itemize}
We assume a fixed joint distribution (\emph{data model}) $P_{X,W}$ over the users' data  $\calx\times\calw$. 
We present our framework assuming that the variables are discrete,  but all results and definitions can be transferred to the continuous case, 
by replacing the distributions with probability density functions, and the summations with integrals. 
For the initial definitions and results of this section ${\calx}$ and ${\caly}$ may be different sets.  Starting from Section~\ref{implementation_in_nn} we will  assume that ${\calx}={\caly}$.

An obfuscation mechanism can be represented as  a conditional probability distribution $P_{Z|W}$, 
where $P_{Z|W}(z|w)$ indicates the  probability that the mechanism transform the data point  $w$ into the noisy data point $z$.
We assume that $Z$ are the only attributes visible to the attacker and to the service provider. 
The goal of the defender $G$ is to  optimize the data release mechanism $P_{Z|W}$ so to achieve  a desired level of utility while minimizing the leakage of the sensitive attributes $X$. 
The goal of the attacker $C$ is to retrieve $X$ from $Z$ as precisely as possible. In doing so, it produces a classification $P_{Y|Z}$ (\emph{prediction}). 

Note that the four random variables form a Markov chain:
\begin{equation}\label{eq:markov}
X \leftrightarrow W \leftrightarrow Z \leftrightarrow Y.
\end{equation}
Their joint distribution is completely determined by the data model, the obfuscation mechanism and the classification:
\[
 \mbox{\small 
$P_{X, W, Z, Y}(x,w,z,y)\;=\; P_{X,W}(x,w)  P_{Z|W}(z\mid w) P_{Y|Z} (y\mid z)$}. 
\]
\noindent From $P_{X, W, Z, Y}$ we can derive the marginals, the conditional probabilities of any  two variables, etc. 
For instance:
\begin{eqnarray}
P_{X}(x)&=& \sum_w P_{X,W}(x,w).\\
P_{Z}(z)&=& \sum_{xw} P_{X,W}(x,w)   P_{Z|W}(z\mid w).\\
P_{Z|X}(z|x)&=& \frac{\sum_w P_{X,W}(x,w)   P_{Z|W}(z\mid w)}{P_{X}(x)}.\\
P_{X|Z}(x|z)&=& \frac{P_{Z|X}(z|x)   P_{X}(x)}{P_{Z}(z)}.
\end{eqnarray}
The latter distribution, $P_{X|Z}$, is the \emph{posterior distribution} of $X$ given $Z$, and plays an important role in the following sections.

\subsection{Quantifying utility}
Concerning the utility, we consider a loss function  $\ell : W \times Z \rightarrow [0, \infty)$, where $\ell(w,z)$ represents the utility loss caused by reporting $z$ when the true value is $w$. 
\begin{definition}[Utility loss]\label{def:UtilityLoss}
The utility loss from the original data $W$ to the noisy data $Z$, given the loss function  $\ell$, is defined as the expectation of 
$\ell$:
\begin{equation} \label{def:UtilityLoss}
{\loss}[Z\mid W, \ell]  \;=\; \expect[\ell \mid W,Z]  \;=\;\sum_{w z} P_{W,Z}(w,z) \ell(w,z).
\end{equation}
\end{definition}

We will omit $\ell$ when it is clear from the context. 
Note that, given a data model $P_{X,W}$, the utility loss can be expressed in terms of the mechanism $P_{Z|W}$:
\begin{equation}\label{eqn:concreteutilityloss}
{\loss}[Z\mid W]  \;=\; \sum_{x w z} P_{X,W}(x,w)  P_{Z|W}(z|w) \ell(w,z).
\end{equation}
Our goal is to build a privacy-protection mechanism  that keeps the loss below a certain threshold $L$. We denote by 
$M_L$ the set of such mechanisms, namely:
\begin{equation}\label{eq:UtilityConstraint}
M_L   \;\stackrel{\rm def}{=}\;  \{ P_{Z|W} \mid {\loss}[Z\mid W]  \leq L\}.
\end{equation}
 
The following property is immediate: 
\begin{restatable}[Convexity of $M_L$]{proposition}{ConvexityM_L}
\label{prop:ConvexityM_L}
The set $M_L$ is convex and closed.
\end{restatable}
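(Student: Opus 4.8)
The plan is to exhibit $M_L$ as the intersection of two convex, closed sets: the set of \emph{all} valid mechanisms and a sublevel set of the affine utility functional. First I would fix the ambient space by identifying each mechanism $P_{Z|W}$ with the row-stochastic matrix whose entries are $P_{Z|W}(z\mid w)$, living in $\mathbb{R}^{|\calw|\times|\calz|}$. The set $\mathcal{M}$ of all such mechanisms is cut out by the constraints $P_{Z|W}(z\mid w)\ge 0$ and $\sum_z P_{Z|W}(z\mid w)=1$ for each $w$; these are finitely many non-strict linear inequalities and linear equalities, so $\mathcal{M}$ is a closed convex polytope (a product of probability simplices).

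Next I would observe that the utility loss is an affine --- in fact linear --- function of the mechanism. Rewriting \eqref{eqn:concreteutilityloss},
\begin{equation*}
{\loss}[Z\mid W] \;=\; \sum_{w z}\Big(\sum_{x}P_{X,W}(x,w)\,\ell(w,z)\Big)\,P_{Z|W}(z\mid w),
\end{equation*}
so ${\loss}[Z\mid W]$ is a linear combination of the coordinates $P_{Z|W}(z\mid w)$ with fixed coefficients $\sum_x P_{X,W}(x,w)\ell(w,z)=P_{W}(w)\ell(w,z)$, which do not depend on the mechanism. Hence the map $P_{Z|W}\mapsto{\loss}[Z\mid W]$ is continuous.

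For convexity I would take any two mechanisms $P^0,P^1\in M_L$ and any $\lambda\in[0,1]$, and set $P^{\lambda}=\lambda P^1+(1-\lambda)P^0$. Since $\mathcal{M}$ is convex, $P^{\lambda}$ is again a valid mechanism; and by linearity of the loss, ${\loss}[P^{\lambda}]=\lambda\,{\loss}[P^1]+(1-\lambda)\,{\loss}[P^0]\le \lambda L+(1-\lambda)L=L$, so $P^{\lambda}\in M_L$. For closedness I would note that $M_L=\mathcal{M}\cap\{P_{Z|W}:{\loss}[Z\mid W]\le L\}$ is the intersection of the closed set $\mathcal{M}$ with the preimage of the closed half-line $(-\infty,L]$ under the continuous map ${\loss}[\,\cdot\,]$, hence closed.

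There is essentially no hard step here; the statement is a direct consequence of the loss being affine in the mechanism. The only point requiring a little care is the bookkeeping that the constraint region $\mathcal{M}$ of genuine conditional distributions is itself convex and closed, so that a convex combination of mechanisms is still a mechanism --- without this the sublevel-set argument would not land inside $M_L$. In the continuous case one replaces the finite-dimensional polytope by the convex set of Markov kernels and the sum by an integral; convexity is unchanged and closedness holds in the appropriate (e.g.\ weak) topology, but the discrete case needs nothing beyond elementary linear algebra.
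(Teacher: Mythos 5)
Your proof is correct. The paper offers no proof at all for this proposition---it is introduced with ``The following property is immediate''---and your argument (identifying mechanisms with points of the product of probability simplices, a closed convex polytope, and observing that $P_{Z|W}\mapsto\loss[Z\mid W]$ is linear with fixed coefficients $P_W(w)\ell(w,z)$, so that $M_L$ is the intersection of that polytope with the closed sublevel set of a continuous linear functional) is precisely the standard argument the authors are implicitly invoking, including the point worth making explicit that a convex combination of mechanisms is again a mechanism.
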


\subsection{\changed{Quantifying privacy as mutual information}}\label{sec:mi}

\changed{We recall the basic information-theoretic definitions that will be used in the paper:\\
Entropy of $X$:
\begin{equation}\label{eq:H} H(X) = - \sum_{x} P_{X}(x) \log  P_{X}(x).\end{equation}
Residual Entropy of $X$ given $Y$:
\begin{equation}\label{eq:RH}  H(X|Y) = - \sum_{xy} P_{X,Y}(x,y) \log  P_{X|Y}(x|y).\end{equation}
Mutual Information between $X$ and $Y$:
\begin{equation}\label{eq:MI}  I(X;Y) = H(X) - H(X|Y).\end{equation}
Cross entropy between the posterior and the prediction:
\begin{equation}\label{eq:CE} 
\small \mathit{CE}(X,Y) = - \sum_z P_{Z}(z) \sum_{x} P_{X\mid Z}(x|z) \log  P_{Y\mid Z}(y|z).
\end{equation}
}

\changed{We recall that the more correlated $X$ and $Y$ are, the larger is $I(X;Y)$, and viceversa. The minimum $I(X;Y)=0$ is when $X$ and $Y$ 
are independent; the maximum is when the value of $X$ determines uniquely the value of $Y$ and viceversa. 
In contrast, $\CE(X,Y)$, that represents the precision loss in the classification prediction, is not related to the correlation between $X$ and $Y$, but rather to the similarity between 
$P_{X\mid Z}$ and $P_{Y\mid Z}$: the more similar they are, the smaller is  $\CE(X,Y)$. In particular,  the minimum $\CE(X,Y)$ is   when  $P_{X\mid Z}=P_{Y\mid Z}$. 
}

\changed{The privacy leakage of a mechanism $P_{Z\mid W}$
with respect to an attacker  $C$, characterized by  the prediction  
 $P_{Y\mid Z}$,  will be 
quantified by the mutual information $I(X;Y)$. This notion of privacy will be used as objective function,  
rather than the more typical cross entropy  $\CE(X,Y)$. 
As explained in the introduction, this choice makes the training of $G$ more stable because,  
in order to reduce   $I(X;Y)$, $G$ cannot simply  swap  around the 
 labels of the classification learned by $C$, it must reduce the correlation between $X$ and $Z$ (via suitable modifications of $P_{Z\mid W}$),  and in doing so it limits the amount of information that \emph{any} adversary can infer about $X$ from $Z$.
We will come back on this point in more detail in \autoref{sec:mi-vs-ce}.}


%

 \subsection{Formulation of the game}\label{sec:game}

The game that $\mathit{G}$ and $\mathit{C}$   play corresponds to the following minimax formulation:
\begin{equation}
	\label{eq:minimax}
	\min_\mathit{G} \;\max_\mathit{C} \; I(X;Y)
\end{equation}
where the minimization by $\mathit{G}$ is on the mechanisms $P_{Z\mid W}$ ranging over $M_L$, while the maximization by $\mathit{C}$ is on the classifications  $P_{Y\mid Z}$. 

Note that $P_{Z|W}$ can be seen as a stochastic matrix and therefore as an element of a vector space. An important property for our purposes is that the mutual information  is convex  with respect to $P_{Z|W}$:

\begin{restatable}[Convexity of $I$]{proposition}{ConvexityI}
\label{prop:ConvexityI}
Given  $P_{X,W}$ and $P_{Y|Z}$, let $f(P_{Z|W}) = I(X;Y)$. Then $f$  is convex.
\end{restatable}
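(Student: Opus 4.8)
The plan is to exhibit $I(X;Y)$ as the composition of a jointly convex function with an affine map in $P_{Z|W}$, so that convexity follows from the standard fact that convexity is preserved under affine precomposition. The natural vehicle is the identity $I(X;Y) = D\!\left(P_{X,Y}\,\middle\|\,P_X\otimes P_Y\right)$, where $D(\cdot\,\|\,\cdot)$ denotes relative entropy (Kullback--Leibler divergence) and $P_X\otimes P_Y$ is the product distribution $(x,y)\mapsto P_X(x)P_Y(y)$. Relative entropy is jointly convex in its pair of arguments (a consequence of the log-sum inequality), so it suffices to check that the map sending $P_{Z|W}$ to the pair $\bigl(P_{X,Y},\,P_X\otimes P_Y\bigr)$ is affine.

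First I would verify that $P_{X,Y}$ is linear in $P_{Z|W}$. By the Markov factorization \eqref{eq:markov}, marginalizing the joint distribution gives $P_{X,Y}(x,y) = \sum_{w,z} P_{X,W}(x,w)\,P_{Z|W}(z\mid w)\,P_{Y|Z}(y\mid z)$; since $P_{X,W}$ and $P_{Y|Z}$ are held fixed, every term is linear in the single factor $P_{Z|W}(z\mid w)$, so the whole expression is linear. The same computation shows that $P_Y(y) = \sum_{x,w,z} P_{X,W}(x,w)\,P_{Z|W}(z\mid w)\,P_{Y|Z}(y\mid z)$ is linear in $P_{Z|W}$ as well.

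The crucial observation --- and the one place where the argument could go wrong --- concerns the second argument $P_X\otimes P_Y$. A product of two affine functions is not affine in general, so one must note that $P_X(x)=\sum_w P_{X,W}(x,w)$ does \emph{not} depend on $P_{Z|W}$ at all: it is determined solely by the fixed data model. Hence $P_X\otimes P_Y$ is a constant vector ($P_X$) times a linear function ($P_Y$), which is itself linear in $P_{Z|W}$. With both coordinates of the map affine, precomposing the jointly convex $D(\cdot\,\|\,\cdot)$ with this affine map yields a convex function of $P_{Z|W}$, establishing the claim. Equivalently, one could work with the induced channel $P_{Y|X}$, which is likewise affine in $P_{Z|W}$ precisely because $P_X$ is fixed, and then invoke the standard result that $I(X;Y)$ is convex in the channel $P_{Y|X}$ for a fixed input marginal. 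The main obstacle is therefore not any delicate estimate but this bookkeeping point: the whole argument hinges on $P_X$ being independent of the mechanism, which is exactly what the fixed data model $P_{X,W}$ guarantees.
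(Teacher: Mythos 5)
Your proof is correct, and it takes a mildly different route than the paper's, so a comparison is worthwhile. The paper factors $f$ as $g\circ h$, where $h$ maps $P_{Z|W}$ linearly to the induced channel $P_{Y|X}(y|x)=\nicefrac{\sum_{z,w}P_{X,W}(x,w)\,P_{Z|W}(z|w)\,P_{Y|Z}(y|z)}{P_X(x)}$ and $g(P_{Y|X})=I(X;Y)$ is convex in the channel for fixed input marginal, a fact it cites from \cite{Cover:06:BOOK}; your primary argument instead writes $I(X;Y)=D\bigl(P_{X,Y}\,\big\|\,P_X\otimes P_Y\bigr)$ and composes the jointly convex relative entropy with the affine map $P_{Z|W}\mapsto\bigl(P_{X,Y},\,P_X\otimes P_Y\bigr)$ --- you even sketch the paper's channel-based route yourself as the ``equivalent'' alternative at the end. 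The two arguments are close cousins, since joint convexity of $D$ is precisely the engine behind the textbook theorem the paper invokes; what your version buys is self-containedness (everything reduces to the log-sum inequality) and a small technical cleanliness: you never form the conditional $P_{Y|X}$, so you avoid the division by $P_X(x)$ appearing in the paper's expression for the channel, which strictly speaking requires $P_X(x)>0$ or a restriction to the support of $X$. Both proofs hinge on the same bookkeeping point you rightly flag as the crux --- $P_X$ is determined by the fixed data model $P_{X,W}$ and does not vary with the mechanism --- which is exactly what makes the relevant maps affine in $P_{Z|W}$ rather than merely bilinear, and your verification that $P_{X,Y}$ and $P_Y$ are linear in $P_{Z|W}$ via the Markov factorization \eqref{eq:markov} matches the linearity step in the paper's proof.
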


Proposition~\ref{prop:ConvexityM_L} and \ref{prop:ConvexityI}  show that this problem is well defined: for any choice of $C$, 
$ I(X;Y)$ has a global minimum in $M_L$, and no strictly-local minima.

\subsubsection*{On the use of the the classifier}
We note that, in principle, one could avoid using the GAN paradigm, and try to achieve the optimal mechanism by  solving, instead, the following minimization problem: 

\begin{equation}\label{eq:max}
\min_{G}\;   I(X;Z)
\end{equation}
where $\min_{G}\; I(X;Z)$ is meant, as before,  as a minimization over the mechanisms $P_{Z\mid W}$ ranging over $M_L$.
This approach would have the advantage that it is independent from the attacker, so one would need to reason only about $G$ (and there would be no need for a GAN). 

The main difference between  $I(X;Y)$ and  $I(X;Z)$ is that the latter  represents the information about $X$ available to any adversary, not only those that are trying to retrieve  $X$ by building a classifier. This fact reflects in the following relation between the two formulations: 

\begin{restatable}{proposition}{equilibriumI}
\label{prop:equilibriumI}
\[
\min_G \;\max_C \; I(X;Y)\;\leq \; \min_{G}\;   I(X;Z)
\]
\end{restatable}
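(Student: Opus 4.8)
The plan is to derive the inequality from a single application of the data processing inequality (DPI), with the remaining work being careful bookkeeping of the $\min$ and $\max$ operators. First I would fix an arbitrary admissible mechanism $G$, i.e.\ a conditional distribution $P_{Z\mid W}\in M_L$, and an arbitrary classifier $C$, i.e.\ a conditional distribution $P_{Y\mid Z}$. Together with the fixed data model $P_{X,W}$, these determine the full joint distribution, and in particular they realize the Markov chain~\eqref{eq:markov}. The key structural observation is that, since $Y$ is produced from $Z$ through $P_{Y\mid Z}$ alone (it does not depend on $X$ or $W$ once $Z$ is fixed), the triple satisfies $X \leftrightarrow Z \leftrightarrow Y$; that is, $Y$ is conditionally independent of $X$ given $Z$. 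This follows directly from the factorization of $P_{X,W,Z,Y}$, where $P_{Y\mid Z}$ appears as the last factor.

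Given this Markov chain, the data processing inequality yields
\[
I(X;Y)\;\le\;I(X;Z),
\]
for every choice of $P_{Y\mid Z}$. The right-hand side $I(X;Z)$ depends only on the mechanism $P_{Z\mid W}$ (through the data model), not on the classifier, so it is constant as $C$ varies. Hence I can take the maximum over all classifiers on the left without affecting the right:
\[
\max_C\, I(X;Y)\;\le\;I(X;Z).
\]

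Finally, this inequality holds for each fixed mechanism $G$; that is, it is a pointwise inequality between the two functions $G\mapsto \max_C I(X;Y)$ and $G\mapsto I(X;Z)$ on the domain $M_L$. Minimizing both sides over $G\in M_L$ preserves the inequality by monotonicity of minimization under pointwise domination, giving
\[
\min_G\,\max_C\, I(X;Y)\;\le\;\min_G\, I(X;Z),
\]
which is exactly the claim. The only points requiring care are the justification of the Markov property $X\leftrightarrow Z\leftrightarrow Y$ and the order of the $\min$/$\max$ operators. I do not anticipate a genuine obstacle, since the DPI does all the real work: it captures precisely the intuition stated in the text, namely that $I(X;Z)$ bounds the information available to \emph{any} adversary, whereas $I(X;Y)$ is the information extracted by the specific classifier $C$, which can only lose information by post-processing $Z$ into $Y$.
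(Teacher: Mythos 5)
Your proposal is correct and follows essentially the same route as the paper's own proof: both derive $I(X;Y)\le I(X;Z)$ from the data processing inequality applied to the Markov chain $X\leftrightarrow Z\leftrightarrow Y$ (extracted from~\eqref{eq:markov}), then pass to $\max_C$ on the left (the right side being independent of $C$) and finally take $\min_G$ on both sides. Your writeup is merely more explicit about justifying the Markov property from the factorization of $P_{X,W,Z,Y}$ and about the pointwise-domination step, which the paper leaves implicit.
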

Note that, since $\min_{G}\;   I(X;Z)$ is an upper bound of our target, it imposes a limit on $\max_C \; I(X;Y)$. 

On the other hand, there are some advantages in considering $\min_G \;\max_C \; I(X;Y)$ instead than $\min_{G}\;   I(X;Z)$: first of all, $Z$ may have a much larger and more complicated domain than $Y$, so performing the gradient descent on $I(X;Z)$  could be infeasible. 
Second, if we are interested in considering only  classification-based attacks, then $\min_G \;\max_C \; I(X;Y)$ should give a better result than $\min_{G}\;   I(X;Z)$. 
In this paper we focus on the former, and leave the exploration of an approach based on $\min_{G}\;   I(X;Z)$ as future work. 

\subsection{Measuring privacy as Bayes error} 
As explained in the introduction, we intend to evaluate the resulting mechanism also in terms of Bayes error. Here we give the relevant definitions and properties. 

\begin{definition}[Bayes error]\label{def:BayesError}
The  Bayes error of $X$ given $Y$ is: 
\[
B(X\mid Y)\;=\;\sum_y P_Y(y) (1-\max_x P_{X|Y} (x\mid y)).
\]
\end{definition}
Namely, the Bayes error is the expected probability of ``guessing the wrong id''  of an adversary that, when he sees that $C$ produces the id $y$, 
it guesses the id $x$ that has the highest posterior probability given $y$.

The definition of $B(X\mid Z) $ is analogous. Given a mechanism $P_{Z|W}$, we   regard $B(X\mid Y)$ as a measure of the privacy of $P_{Z|W}$ w.r.t. one-try~\cite{Smith:09:FOSSACS}  classification-based attacks, whereas $B(X\mid Z)$ is w.r.t.  any one-try attack.
The following proposition shows the relation between the two notions. 

\begin{restatable}{proposition}{LowerBound}
\label{prop:LowerBound}
$
B(X\mid Z)   \;\leq \; B(X\mid Y) 
$
\end{restatable}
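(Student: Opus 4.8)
The plan is to reduce the Bayes error to its complement---the probability of a correct maximum-a-posteriori guess---and then show that guessing $X$ from $Z$ is at least as successful as guessing $X$ from $Y$. This is exactly the data-processing principle applied to the sub-chain $X \leftrightarrow Z \leftrightarrow Y$ of the Markov chain (\ref{eq:markov}). Concretely, since $P_Y(y)\max_x P_{X|Y}(x\mid y) = \max_x P_{X,Y}(x,y)$, Definition~\ref{def:BayesError} rewrites as $B(X\mid Y) = 1 - \sum_y \max_x P_{X,Y}(x,y)$, and identically $B(X\mid Z) = 1 - \sum_z \max_x P_{X,Z}(x,z)$. Hence the claim $B(X\mid Z)\le B(X\mid Y)$ is equivalent to the inequality between correct-guess probabilities
\[
\sum_y \max_x P_{X,Y}(x,y) \;\le\; \sum_z \max_x P_{X,Z}(x,z).
\]

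First I would use the Markov chain to factor the joint distribution: because $Y$ depends on the earlier variables only through $Z$, marginalizing $P_{X,W,Z,Y}$ over $W$ gives $P_{X,Z,Y}(x,z,y) = P_{X,Z}(x,z)\,P_{Y|Z}(y\mid z)$, so that $P_{X,Y}(x,y) = \sum_z P_{X,Z}(x,z)\,P_{Y|Z}(y\mid z)$. Then, for each fixed $y$, I would bound the maximum of this sum by the sum of the maxima, extracting the $x$-independent nonnegative factor $P_{Y|Z}(y\mid z)$ from the inner maximization:
\[
\max_x P_{X,Y}(x,y)
 = \max_x \sum_z P_{X,Z}(x,z)\,P_{Y|Z}(y\mid z)
 \;\le\; \sum_z P_{Y|Z}(y\mid z)\,\max_x P_{X,Z}(x,z).
\]
Finally I would sum over $y$, interchange the summations, and use that $P_{Y|Z}(\cdot\mid z)$ is a probability distribution, i.e. $\sum_y P_{Y|Z}(y\mid z) = 1$:
\[
\sum_y \max_x P_{X,Y}(x,y)
 \;\le\; \sum_z \Big(\sum_y P_{Y|Z}(y\mid z)\Big)\max_x P_{X,Z}(x,z)
 \;=\; \sum_z \max_x P_{X,Z}(x,z),
\]
which is the required inequality and yields $B(X\mid Z)\le B(X\mid Y)$.

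The only genuinely delicate point is the ``max of a sum versus sum of maxima'' step and how it meshes with the Markov factorization: the inequality goes through precisely because the mixing weights $P_{Y|Z}(y\mid z)$ are nonnegative and free of $x$ (so they pull out of the maximum), and because they sum to one over $y$ (so the final double sum collapses). Intuitively this is just the statement that post-processing $Z$ through the classifier channel cannot sharpen the posterior, so the best one-try guess from $Y$ can never beat the best guess from $Z$; equivalently, the randomized estimator ``draw $Y$ from $P_{Y|Z}(\cdot\mid Z)$, then apply the MAP rule for $Y$'' is one admissible way of estimating $X$ from $Z$ and hence cannot outperform the MAP estimator from $Z$. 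Everything above is a finite manipulation in the discrete setting; I would only revisit the measurability and boundary conventions if one wanted the continuous-domain version, replacing sums by integrals.
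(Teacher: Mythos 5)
Your proof is correct: the rewrite $B(X\mid Y)=1-\sum_y\max_x P_{X,Y}(x,y)$, the Markov factorization $P_{X,Y}(x,y)=\sum_z P_{X,Z}(x,z)\,P_{Y\mid Z}(y\mid z)$, the max--sum exchange, and the final collapse via $\sum_y P_{Y\mid Z}(y\mid z)=1$ all check out in the discrete setting. It is, however, a mirror image of the paper's proof rather than the same computation. The paper stays entirely in conditional form and routes the argument through the \emph{reverse} channel: it expands $P_Z(z)=\sum_y P_Y(y)\,P_{Z\mid Y}(z\mid y)$, applies the same exchange of $\max_x$ with an average, and then uses the Markov property as the identity $\sum_z P_{Z\mid Y}(z\mid y)\,P_{X\mid Z}(x\mid z)=P_{X\mid Y}(x\mid y)$ --- without flagging that this equality is precisely where conditional independence of $X$ and $Y$ given $Z$ enters. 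You instead decompose the joint $P_{X,Y}$ through the \emph{forward} classifier channel $P_{Y\mid Z}$, which buys two small advantages: the only channel you invoke is a primitive of the model (the paper's $P_{Z\mid Y}$ is a derived object, undefined at any $y$ with $P_Y(y)=0$, whereas joint probabilities are always defined), and the step where the Markov chain is used is stated explicitly. What the paper's version buys is directness: it is a single chain of (in)equalities starting verbatim from Definition~\ref{def:BayesError}, with no preliminary rewriting into correct-guess (``vulnerability'') form. The key inequality --- the max of a nonnegative mixture is at most the mixture of the maxes --- is identical in both, so the two proofs are the same data-processing argument run in opposite directions, and your closing remark about the randomized estimator ``sample $Y$ then apply the MAP rule for $Y$'' is a valid operational reading of it.
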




\section{Implementation in Neural Networks}\label{implementation_in_nn}

In this section we describe the implementation of our adversarial game
between $\mathit{G}$ and $\mathit{C}$ in terms of alternate training of neural networks.
The scheme of our game is illustrated in Fig. \ref{fig:adversarial_scheme}, where: 
\begin{centering}
\begin{figure}[t]
\centering
\begin{tikzpicture}[scale=0.6]
\draw [black] (1,0.5) rectangle (2,1.5);
\node at (1.5,1) {$\mathit{G}$};
\draw [black] (9,0.5) rectangle (10,1.5);
\node at (9.5,1) {$\mathit{C}$};
\draw [black] [->] (-2, 1) -- (1,1);
\draw [black] [->] (2, 1) -- (9,1);
\draw [black] [-] (9.5, 2.5) -- (1.5,2.5);
\draw [black] [-] (9.5, 1.5) -- (9.5,2.5);
\draw [black] [->] (1.5, 2.5) -- (1.5,1.5);
\node [above right, black] at (4.5,2.5) {${c({z},y)}$};
\node [above right, black] at (-2,1) {${(({w}, {s}),x)}$};
\node [above right, black] at (2.5,1) {${(g(({w}, {s})),x)=({z},x)}$};
\draw [black] [->] (1.5, 0.5) -- (1.5,-0.5);
\node at (3.3,0) {{\it final} $g({w}, {s})$};
\node at (3.3,-1) {};
\end{tikzpicture}
\caption{Scheme of the adversarial nets for our setting.}
\label{fig:adversarial_scheme}
\end{figure}
\end{centering}
\begin{itemize}
\item $x,y,z$ and $w$ are instances of the random variables $X$,$Y$, $Z$ and $W$ respectively, whose meaning is described in previous section. We assume that the domains of $X$ and $Y$ coincide. 
\item $ {s}$ (seed)  is a randomly-generated number in $[0,1)$. 
\item $\mathit{g}$ is the function learnt by $\mathit{G}$, and it represents an obfuscation mechanism $P_{Z\mid W}$. The input $s$ provides the randomness needed to generate random noise. It is necessary because a neural network in itself is deterministic. 
\item $\mathit{c}$ is the classification learnt by    $\mathit{C}$,  corresponding to $P_{Y\mid Z}$.
\end{itemize}

	\makeatletter
	\newcommand{\removelatexerror}{\let\@latex@error\@gobble}
	\makeatother
	\newcommand{\myalgorithm}{%
	\begingroup
	\removelatexerror
	\begin{algorithm}[t]
	\begin{tcolorbox}[width=0.45\textwidth]
	{\small
	\KwData{$\mathit{train\_data}$ \tcp{\footnotesize Training data}}
	\KwModels{$\mathit{G}_i$ generator evolution at the $i$--th step; \newline
                           $\mathit{C}_i$ classifier evolution at the $i$--th step. \newline }
   	 $\mathit{train(n, d)}$ trains the  network $n$ on the data $d$. 
	
	$\mathit{G_i(data)}$ outputs a noisy version of $\mathit{data}$.\newline
 	
	$C_0$ = base classifier model 
 
	$G_0$ = base generator model 
 
 	i = 0 
 
 	\While{True}{
	
	i += 1

	\tcp{Train class. from scratch}

  	$C_i$ = $train(C_{0}, G_{i-1}(\mathit{train\_data}))$\\

 	$A = G_{i-1}$ and $C_i$ in cascade
  
  	$A$ = $train(A, \mathit{train\_data})$\\
  
  	$G_i$ = generator layer in $A$ 
 	}
	}
	 \end{tcolorbox}
	
	\caption{\small  Adversarial algorithm with classifier reset.}\label{alg:game}
	\end{algorithm}
	\endgroup}
	\myalgorithm
	The evolution of the adversarial network  is described in Algorithm~\ref{alg:game}.
	 $\mathit{C}$ and $\mathit{G}$ are trained at two different moments within the same adversarial training iteration. In particular $\mathit{C_i}$ is obtained by training the network 
	 $\mathit{C}$ against the noise generated by $\mathit{G_{i-1}}$ and $\mathit{G_i}$ is obtained by fighting against $\mathit{C_i}$. 
	 
	Note that in our method each $\mathit{C_i}$ is trained   on the output of  
	$\mathit{G_{i-1}}$. 
	This is a main difference with respect to the GANs paradigm, where the discriminator is  trained both \changed{on the output of the generator and on samples from 
	the target distribution generated by an external source}. 	 
	Another particularity of our method is that at the end of the $i$-th iteration, while $\mathit{G_i}$ is retained for the next iteration, $\mathit{C_i}$ is discarded and the classifier
	for iteration $i+1$ is reinitialized to the base one $\mathit{C}_0$. 
	\changed{The reason is that restarting  from  $\mathit{C}_0$ is more efficient than starting 
	 from the last trained classifier $\mathit{C_i}$. This is  because $\mathit{G_i}$ may have changed at step $i$ the noise mechanism $P_{Z\mid W}$ and therefore the 
	association between $X$ and $Z$ expressed by $P_{X\mid Z}$. The  predictions $P_{Y\mid Z}(x\mid z)$ that $\mathit{C_i}$ had produced during its training 
	(trying to match the  $P_{X\mid Z}(x\mid z)$ previously produced by $\mathit{G_{i-1}}$  as closely as possible), not only is not optimal  anymore: for some  $z$'s 
	it may have become completely wrong, and starting from a 
	wrong prediction is a drawback that slows down the learning of the new prediction. There may be several  $z$'s for which the old prediction is a good approximation of the 
	new one to be learned, but according to our experiments the net effect is negative:  the training of the new classifier is usually faster if we restart from scratch. 
	It is worth noting that this is only a matter of efficiency though: eventually, even if we started from $\mathit{C_i}$, the new classifier would ``unlearn'' the old, wrong 
	predictions and learn the correct new ones.  }	

At the end of each training iteration we evaluate the quality of the produced noise by checking the performance of the $\mathit{C}$ network. In particular we make sure that the noise produced by the $\mathit{G}$ network affects the training, validation and test data in a similar way. In fact, in case the performances were good on the training data but not on the the other data, this would be a result of overfitting rather than of a quality indicator of the injected noise.

We describe now in more detail some key implementation choices of our proposal.

\subsection{Mutual information vs cross entropy}\label{sec:mi-vs-ce}

Based on the formulation of our game \eqref{eq:minimax}, the alternate training of
both $G$ and $C$ is performed using the \emph{mutual information}
$I(X;Y)$ as the loss function.
\changed{The goal of $\mathit{G}$ is to minimize $I(X;Y)$ by refining the mechanism
 $P_{Z\mid W}$, while $\mathit{C}$ aims at maximizing it by refining the 
 classifier $P_{Y\mid Z}$.} 

\changed{
We remark that the use of mutual information as loss function is not standard.
A more typical  function  for training a classifier is the cross entropy $\CE(X,Y)$, 
which is more efficient to implement. 
$\CE(X,Y)$ is minimized when $P_{X\mid Z}$ and   $P_{Y\mid Z}$ coincide. 
Such outcome would correspond to the 
perfect classifier, that predicts the exact probability $P_{X\mid Z}(x|z)$ that a given sample 
$z$ belongs to the class $x$.
One could then think of reformulating the game in terms of the cross entropy $\CE(X,Y)$, 
where  $\mathit{C}$ would be the minimizer (trying to infer probabilistic information about 
the secret $x$ from a given observation $z$) and $\mathit{G}$ the maximizer 
(trying to prevent  the adversary $\mathit{C}$ from achieving this knowledge). 
However, as already 
observed in  Example~\ref{exa:AliceandBob} in the introduction, training $G$  
via $\CE(X,Y)$ does not allow to reach an equilibrium, because it  takes  into account
only one adversarial strategy (i.e., one particular classification). 
Indeed, a maximum   $\CE(X,Y)$ can be achieved with a $P_{Z\mid W}$ that   simply 
causes a swapping of the associations between the labels $x$'s and the corresponding noisy locations $z$'s. 
This would change $P_{X\mid Z}$ and therefore 
fool the present classifier (because the prediction $P_{Y\mid Z}$ would not be equal anymore to $P_{X\mid Z}$), but at the next round, when $C$
will be trained on the new data, it will learn the new classification $P_{X\mid Z}$ and 
obtain, again, the maximum information about $x$ that can be inferred from $z$. 
The possibility of ending up in such cyclic behavior is experimentally proved in Section~\ref{sc:ce}.
Note that this problem does not happen with mutual information, because swapping the labels does not affect  $I(X;Y)$ at all.}

\changed{Since $G$ can only change the mechanism  $P_{Z\mid W}$, 
the only way for $G$ to reduce the mutual information $I(X;Y)$ is to reduce $I(X;Z)$
by reducing the correlation between $W$ and $Z$ ($X$ is correlated to $Z$ only via $W$) . This limits the 
information about $X$ that can be inferred from  $Z$, for any possible adversary, i.e., for any possible prediction 
$P_{Y\mid Z}$, hence also for the optimal one. Still, if $Z$ is very large  $I(X;Z)$ cannot be reduced directly in an efficient way, 
and this is the reason why $G$ needs the feedback of the optimal prediction $P_{Y\mid Z}$: in contrast to $I(X;Z)$, 
minimizing $I(X;Y)$ can be done effectively in neural networks via the gradient descent 
when $\calx$ (the domain of $X$ and $Y$) is ``reasonably  small''.} 

The above discussion about $I(X;Y)$ vs $\CE(X,Y)$ holds for the generator $G$, but what about the adversary  $C$? 
Namely, for a given $P_{Z\mid W}$, is it still necessary to train $C$ on $I(X;Y)$, or could we equivalently train it on $\CE(X,Y)$? 
The following result
answers this question positively.
\begin{restatable}{proposition}{propMICE}\label{prop:propMICE}
\[
		\argmin_\mathit{G}  \max_\mathit{C}\; I(X;Y)
		=
		\argmin_\mathit{G}  I(X,Y')
		~,
\]
	\changed{with $Y^\prime$   defined by 
	$P_{Y^\prime\mid Z} = {\displaystyle \argmin_\mathit{C}} \; \CE(X,Y^\prime) = P_{X\mid Z}$}.
\end{restatable}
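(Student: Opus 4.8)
The plan is to deduce the equality of the two minimizers by comparing, for each fixed mechanism $P_{Z\mid W}\in M_L$, the two objective functions of $\mathit{G}$. Writing $V_1(P_{Z\mid W})=\max_\mathit{C} I(X;Y)$ and $V_2(P_{Z\mid W})=I(X;Y')$, the cleanest route would be to show $V_1=V_2$ pointwise on $M_L$, since equal functions have equal argmins; I expect that identity to fail, so the fallback is to show only that $V_1$ and $V_2$ attain their minima over $M_L$ at the \emph{same} mechanism. Before addressing that, I would settle the two classifier characterizations on which the statement rests.

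For the right-hand side I would establish the defining property of $Y'$, namely $\argmin_\mathit{C}\CE(X,Y')=P_{X\mid Z}$. Since $\CE(X,Y)=\sum_z P_Z(z)\bigl(-\sum_x P_{X\mid Z}(x\mid z)\log P_{Y\mid Z}(x\mid z)\bigr)$ decouples over $z$, I would minimize each inner term separately subject to $\sum_x P_{Y\mid Z}(x\mid z)=1$; by Gibbs' inequality this term equals $H(P_{X\mid Z}(\cdot\mid z))+D_{\mathrm{KL}}\!\left(P_{X\mid Z}(\cdot\mid z)\,\|\,P_{Y\mid Z}(\cdot\mid z)\right)$, minimized exactly at $P_{Y\mid Z}(\cdot\mid z)=P_{X\mid Z}(\cdot\mid z)$, which fixes $Y'$ as the posterior predictor. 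For the left-hand side I would record that, by an argument analogous to Proposition~\ref{prop:ConvexityI} applied to the channel $P_{Y\mid Z}$ (the effective channel $P_{Y\mid X}=\sum_z P_{Z\mid X}\,P_{Y\mid Z}$ is affine in $P_{Y\mid Z}$, and $I(X;Y)$ is convex in $P_{Y\mid X}$ for the fixed input $P_X$), the map $P_{Y\mid Z}\mapsto I(X;Y)$ is convex; hence the inner maximum $V_1$ is attained at an extreme point of the classifier polytope, i.e. at a \emph{deterministic} classifier.

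The crux, and the step I expect to be the main obstacle, is reconciling these two different best responses at the level of the outer minimization. Pointwise the identity $V_1=V_2$ can fail: the posterior predictor is an interior point of the classifier polytope, whereas $V_1$ is realized at a vertex, so generically $V_2(P_{Z\mid W})<V_1(P_{Z\mid W})$. The way I would force the two argmins to agree is to route both objectives through $I(X;Z)$, the only quantity $\mathit{G}$ can actually influence (recall that $X$ is correlated with $Z$ only through $W$). By the data-processing inequality along the Markov chain $X\leftrightarrow W\leftrightarrow Z\leftrightarrow Y$ we have $I(X;Y)\le I(X;Z)$ for every classifier, so both $V_1$ and $V_2$ are controlled by the correlation that $P_{Z\mid W}$ creates between $X$ and $Z$; together with Proposition~\ref{prop:equilibriumI} this bounds the relevant values from above by $\min_\mathit{G} I(X;Z)$. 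I would then invoke Propositions~\ref{prop:ConvexityM_L} and~\ref{prop:ConvexityI}: since $M_L$ is convex and closed and $P_{Z\mid W}\mapsto I(X;Z)$ is convex, the mechanism minimizing the $X$--$Z$ correlation over $M_L$ is singled out by first-order stationarity conditions that make no reference to the classifier. The delicate point is to prove that minimizing $V_2$ selects \emph{precisely} this stationary mechanism, and likewise for $V_1$; I expect this to require showing that each of $V_1$ and $V_2$ is strictly increasing in $I(X;Z)$ along the feasible family, so that their sublevel sets, and hence their minimizers over $M_L$, coincide. Establishing this monotonicity rigorously — rather than merely the inequality $V_1\ge V_2$ — is where I anticipate the real difficulty.
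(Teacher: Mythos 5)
Your opening step is sound and matches the paper's: the Gibbs'-inequality argument pinning down $\argmin_\mathit{C}\CE(X,Y')$ as the posterior predictor $P_{Y'\mid Z}=P_{X\mid Z}$ is exactly the paper's first move (the paper simply cites the standard fact $\argmin_Q\CE(P,Q)=P$). But after that you stop precisely where the content of the proposition lies: you defer, as an anticipated ``real difficulty,'' the step that would make the two argmins coincide, so the proposal is a plan with a hole at its crux rather than a proof. Worse, the specific route you sketch cannot be carried out. Neither $V_1=\max_\mathit{C} I(X;Y)$ nor $V_2=I(X;Y')$ is a function of $I(X;Z)$, so ``strictly increasing in $I(X;Z)$ along the feasible family'' is not a well-posed property, and even the weaker order-equivalence it would imply is false for $V_2$ taken as literal mutual information with the best-response classifier. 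Concretely, let $X$ be uniform on two classes and compare an erasure-type mechanism (reveal $X$ with probability $1-p$, otherwise reveal nothing) with a symmetric-noise mechanism with crossover $q$. The posterior sampler $Y'$ yields $I(X;Y')=1-h(\nicefrac{p}{2})$ in the first case and $I(X;Y')=1-h\bigl(2q(1-q)\bigr)$ in the second, where $h$ is the binary entropy; taking $p=0.5$ and $q$ with $h(q)=0.55$ gives $I(X;Z_1)\approx 0.50 > 0.44 \approx I(X;Z_2)$ while $I(X;Y_1')\approx 0.19 < 0.23\approx I(X;Y_2')$. So no monotone reparametrization of these objectives through $I(X;Z)$ exists, and the data-processing inequality together with Propositions~\ref{prop:ConvexityM_L}, \ref{prop:ConvexityI} and \ref{prop:equilibriumI} — which is all you invoke — supplies only one-sided bounds, never the coincidence of minimizers.

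The paper closes the step you left open in a different and more direct way: it compares two arbitrary mechanisms $P_{Z_1\mid W}$ and $P_{Z_2\mid W}$, substitutes the best responses $P_{Y_1\mid Z_1}=P_{X\mid Z_1}$ and $P_{Y_2\mid Z_2}=P_{X\mid Z_2}$ into its expression for the attacker's residual uncertainty, and observes that the expression then collapses algebraically to $-H(X\mid Z_i)$, so that the comparison of any two mechanisms under the classifier-dependent objective is \emph{identical} to their comparison under $I(X;Z)$; both outer minimizations therefore reduce to $\argmin_\mathit{G} I(X;Z)$, which yields the claimed equality of argmins. Note that this identification is an equivalence of orderings obtained by explicit substitution at the best response — it exploits the fact that, once $P_{Y\mid Z}=P_{X\mid Z}$, the uncertainty evaluated conditionally through $Z$ satisfies $H(Y'\mid Z)=H(X\mid Z)$ — and not a monotonicity statement about arbitrary classifier values as a function of $I(X;Z)$, which, as the example above shows, is exactly the kind of statement that fails. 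Your diagnosis that $V_1$ is attained at a vertex of the classifier polytope while $Y'$ is interior (so $V_2\le V_1$ with strict inequality in general) is correct and explains why a pointwise identity $V_1=V_2$ is hopeless, but the repair is the substitution-and-collapse computation above, not a detour through sublevel sets of $I(X;Z)$.
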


\changed{Given the above result, and since minimizing $\CE(X,Y)$ is more efficient than maximizing  $I(X;Y)$, in our implementation we have used 
$\CE(X,Y)$ for the training of $C$. Of course, we cannot do the same  for $G$:  as discussed above, the generator needs to be trained  by using $I(X;Y)$.}

\changed{A consequence of \autoref{prop:propMICE} is that the adversary represented by $C$ at the  point of equilibrium 
is at least as strong as the Bayesian adversary, namely the adversary that minimizes the expected probability of error in the $1$-try attack 
(which consists in guessing a single secret $x$ given a single observable $z$~\cite{Smith:09:FOSSACS}.)
Indeed,  from $P_{Y\mid Z}$ one can derive the following decision function (deterministic classifier) $f^*:\calz\rightarrow \calx$, which assigns to any $z$ the class $y$ with highest predicted probability:
\begin{equation}\label{eqn:bestprediction}
f^*(z) \;=\; \argmax_y P_{Y\mid Z}(y|z)
\end{equation}
To state formally the property of the optimality of $f^*$ w.r.t. $1$-try attacks, let us recall the definition of the expected error $R(f)$ for a generic decision function $f:\calz\rightarrow \calx$:
\begin{equation}\label{eqn:error}
R(f) \;=\; \sum_{xz} P_{X,Z}(x,z) \overline{\mathbb{1}}_f(x,z)
\end{equation}
where
\begin{equation}\label{eqn:one}
\overline{\mathbb{1}}_f(x,z) \;=\; \begin{cases}
  1 \quad \text{if } f(z) \neq x\\    
  0 \quad \text{otherwise}  
\end{cases}
\end{equation}
We can now state the following result, that relates the error of the attacker $f^*$ (induced by the $C$ at the equilibrium point) and the minimum Bayes error of any adversary for the $G$ at the equilibrium point (cfr. \autoref{def:BayesError} and \autoref{prop:LowerBound}):
\begin{restatable}{proposition}{propCEB}\label{prop:propCEB}
If 
$P_{Y\mid Z} = \argmin_\mathit{C} \; \CE(X,Y)$, and $f^*$ is defined as in \eqref{eqn:bestprediction}, then:
\[
R(f^*) = B(X,Z)
\]
\end{restatable}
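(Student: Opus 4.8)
The plan is to reduce everything to the fact, already recorded in \autoref{prop:propMICE} and justified by Gibbs' inequality, that cross-entropy minimization recovers the exact posterior. For each fixed $z$ the inner sum in \eqref{eq:CE}, namely $-\sum_x P_{X\mid Z}(x\mid z)\log P_{Y\mid Z}(x\mid z)$, is the cross-entropy between the fixed distribution $P_{X\mid Z}(\cdot\mid z)$ and the free distribution $P_{Y\mid Z}(\cdot\mid z)$, and is therefore uniquely minimized at equality. Hence the $C$ achieving $\argmin_\mathit{C}\CE(X,Y)$ satisfies $P_{Y\mid Z}(\cdot\mid z)=P_{X\mid Z}(\cdot\mid z)$ for every $z$. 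Since $\calx=\caly$, substituting this into the definition \eqref{eqn:bestprediction} of $f^*$ immediately turns it into the maximum-a-posteriori rule $f^*(z)=\argmax_x P_{X\mid Z}(x\mid z)$.

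With $f^*$ identified as the MAP rule, I would compute $R(f^*)$ directly from \eqref{eqn:error}. Writing $P_{X,Z}(x,z)=P_{X\mid Z}(x\mid z)\,P_Z(z)$ and summing over $z$ first, the contribution at each $z$ is $P_Z(z)\sum_{x\neq f^*(z)}P_{X\mid Z}(x\mid z)$, because by \eqref{eqn:one} the indicator $\overline{\mathbb{1}}_{f^*}(x,z)$ equals $1$ exactly when $x\neq f^*(z)$. The inner sum is $1-P_{X\mid Z}(f^*(z)\mid z)$, and since $f^*(z)$ is by construction a maximizer of the posterior, $P_{X\mid Z}(f^*(z)\mid z)=\max_x P_{X\mid Z}(x\mid z)$. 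Therefore $R(f^*)=\sum_z P_Z(z)\bigl(1-\max_x P_{X\mid Z}(x\mid z)\bigr)$, which is precisely $B(X\mid Z)$ by \autoref{def:BayesError}.

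There is essentially no hard step here: once the posterior identification is in place, the argument is a one-line rewriting. The only point deserving care is the treatment of ties in the $\argmax$ defining $f^*$: when several classes attain the maximal posterior, any tie-breaking choice yields the same value $\max_x P_{X\mid Z}(x\mid z)$, so $R(f^*)$, and hence the claimed equality, is independent of the convention adopted. The genuine content of the proposition is thus the upstream fact that cross-entropy minimization returns the exact posterior $P_{X\mid Z}$; everything downstream is bookkeeping. This also makes transparent the remark following the statement, namely that the classifier at equilibrium is at least as strong as the Bayesian adversary, since $f^*$ realizes exactly the one-try error $B(X\mid Z)$.
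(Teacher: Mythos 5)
Your proof is correct and takes essentially the same route as the paper's: both first identify the cross-entropy-minimizing prediction with the exact posterior, $P_{Y\mid Z}=P_{X\mid Z}$ (the paper invokes \autoref{prop:propMICE} for this step, which is the Gibbs'-inequality fact you cite), and then evaluate $R(f^*)$ by factoring $P_{X,Z}(x,z)=P_Z(z)\,P_{X\mid Z}(x\mid z)$ so that the inner sum at each $z$ collapses to $1-\max_x P_{X\mid Z}(x\mid z)$, matching \autoref{def:BayesError}. Your explicit remark that tie-breaking in the $\argmax$ leaves $R(f^*)$ unchanged is a minor point the paper leaves implicit, but it is not a substantive difference in approach.
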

}

\subsection{Implementing Mutual Information}

In order to describe the implementation of the mutual information loss
function, we will consider the training on a specific batch of data. 
\changed{This technique}  is based on the idea that the whole training
set of cardinality $N$ can be split into subsets of cardinality $N^\prime$
with $N^\prime \leq N$. This is useful to to fit data in the memory and,
since during each epoch the network is trained on all the batches, this
corresponds to using all the training data (provided that the data
distribution in each batch is a high fidelity representation of the training
set distribution, otherwise the learning could be unstable).

\changed{To obtain the mutual information between $X$ and $Y$ we estimate the distributions 
$P_X$, $P_Y$ and $P_{X,Y}$. Then we can compute $I(X;Y)$ using \eqref{eq:MI}, or equivalently as the formula:
\begin{equation}
\small
\sum_{x}P_X(x)\log  P_X(x) -\sum_{x,y}P_{X,Y}(x,y)\log \frac{P_{X,Y}(x,y)}{P_Y(y)} .
\end{equation}
}

\changed{Let us consider a batch consisting of $N^\prime$ samples of type $(z,x)$ in the context of  the
classification problem, and let   $| \calx |$ represents the cardinality of $\calx$, 
i.e., the total number of classes.
In the following we denote by  $\mathit{T}$ and $\mathit{Q}$, respectively, the target and the prediction matrices  for the batch. 
Namely, $\mathit{T}$ and $\mathit{Q}$ are $N^\prime\times \changed{ | \calx |}$ matrices, whose rows correspond to  samples and whose columns to classes, 
defined as follows. 
$T$  represents the class one-hot encoding:  the element in row $i$ and column $x$, $T(i, x)$, is  $1$ if $x$  is the target class for the sample $i$, and  $0$ otherwise. 
$\mathit{Q}$, on the other hand, reports the probability distribution over the
classes computed by the classifier: $Q(i, x)$ is the predicted probability   that   sample $i$ be in class $x$.}

\changed{The estimation of $P_X(x)$ for the given batch can be obtained by computing the frequency of $x$ among the samples, namely:
\begin{equation}
P_X(x) = \frac{1}{ N^\prime}\sum_{i=1}^{N^\prime}T(i, x).
\end{equation} 
Similarly,  $P_Y(y)$ is estimated as the expected prediction of $y$:  
\begin{equation}
P_Y(y) = \frac{1}{ N^\prime}\sum_{i=1}^{N^\prime}Q(i, y).
\end{equation}
}

\changed{The joint distribution $P_{X,Y}$  can be estimated by considering the correlation of $X$ and $Y$ through the samples. Indeed, 
the probability that sample $i$ has target  class $x$ and  predicted class $y$ can be computed as the product   $T(i,x) \, Q(i,y)$, and by summing up the 
contributions of all samples (where each sample contributes for 
$\nicefrac{1}{N^\prime}$) we obtain $P_{X, Y}(x,y)$. 
}

\changed{
More precisely, 
for a  sample $i \in \{1, ..., N^\prime\}$ let us define the 
$ | \calx | \times  | \calx |$ matrix $\mathit{J_i}$ as 
$J_i (x,y) =T(i,x) \, Q(i,y)$.
Then we can estimate $P_{X,Y}(x,y)$ as:
\begin{equation}
P_{X,Y}(x,y) =
\frac{1}{ N^\prime} \sum_{i=1}^{N^\prime}  J_i (x,y).
\end{equation}
}

\changed{
The estimation of the mutual information   relies on the estimation of the probabilities, which is based on the computation of the frequencies. 
Hence, in order to obtain a good estimation, the batches should be large enough to  represent well  the true distributions. 
Furthermore, if the batch size is too small, the gradient descent is   unstable since the representation of the   distribution changes from one batch to the other. 
In the ML literature there are standard validation techniques (such as the \emph{cross validation}) that provide guidelines  
to achieve a  ``good enough'' estimation of the probabilities. }

\subsection{Base models}

The base model $C_0$ is simply the ``blank'' classifier that has not learnt anything yet \mods{(i.e. the weights are initialized according to the Glorot initialization, \changed{which is a  standard initialization technique~\cite{Glorot:10:PMLR}})}. As for $G_0$, we have found out experimentally that it is convenient to start with a noise function pretty much spread out. 
This is because in this way the generator has more data points with non-null probability to consider, and can figure out faster which way to go to minimize the mutual information. 

\subsection{Utility}
\label{utility_implementation}
The utility constraint is incorporated in the loss function of $G$ in the following way: 
	\begin{equation}
	\mathit{Loss}_{\mathit{G}} = \alpha \times \mathit{Loss}_\mathit{utility} + \beta \times I(X;Y),
	\end{equation}
where $\alpha$ and $\beta$ are parameters that allow us to tune the trade-off between utility and privacy. 
The purpose of $\mathit{Loss}_\mathit{utility}$  is to ensure that the constraint on utility is respected, i.e., that  the obfuscation mechanism that $G$ is trying to produce 
stays within the domain $M_L$. 
We recall that  $M_L$ represents the constraint  $\loss[Z\mid W]  \leq L$ (cfr.~\eqref{eq:UtilityConstraint}).    
Since we need to compute the gradient on the loss,  we need a derivable function for $\mathit{Loss}_\mathit{utility}$. 
We propose to implement it using $\mathrm{softplus}$, which is a function of two arguments in $\mathbb{R}$ defined as:
$
	\mathrm{softplus}(a,b) = \ln(1+e^{(a-b)}).
$
	This function is non negative, monotonically increasing, and its value is close to $0$ for $a<b$, while it  grows very quickly for $a>b$. 
Hence, we define
	\begin{equation}
	\mathit{Loss}_\mathit{utility} (P_{Z|W})= \mathrm{softplus}(\loss[Z\mid W],L).
	\end{equation}
With this definition, $\mathit{Loss}_\mathit{utility}$ does not interfere with $I(X;Y)$ when the constraint $\loss[Z\mid W]  \leq L$ is respected, and 
it forces $G$ to stay within the constraint because its growth when the constraints is not respected is very steep.

\subsection{On the convergence of our method}
\Kt{Clean this up, the same things are said elsewhere}
In principle, at a each iteration $i$, our method relies on the ability of the network $\mathit{G}_i$ to improve the obfuscation mechanism starting from the one produced by $\mathit{G}_{i-1}$, and given only the original locations and the model $\mathit{C}_{i}$, which are used  to determine the direction of the gradient for $\mathit{Loss}_{\mathit{G}}$. The classifier 
$\mathit{C}_{i}$ is a particular adversary modeled by its weights and its biases. However, thanks to the fact that  the main component of 
$\mathit{Loss}_{\mathit{G}}$ is $I(X;Y)$ and not the the cross entropy, $\mathit{G}_i$ takes into account all the attacks that would be possible from $\mathit{C}_{i}$'s information. 
We have experimentally verified that indeed, using the mutual information rather than the cross entropy, determines a substantial improvement on the convergence process, and the resulting mechanisms provide a better privacy (for the same utility level). Again, the reason is that the the cross entropy would be subject to the ``swapping effect'' illustrated by Example~\ref{exa:AliceandBob} in the introduction. 

Another improvement on the convergence is due the fact that, as explained before,  we reset the classifier to the initial weight setting ($\mathit{C}_{0}$) at each iteration, instead than letting 
$\mathit{C}_{i}$ evolve from $\mathit{C}_{i-1}$. 

The  function that  $G$ has to minimize, $\mathit{Loss}_{\mathit{G}}$,
is convex wrt   $P_{Z\mid W}$. This means that there are only global minima, although there can be many of them, all equivalent. Hence for sufficiently small updates the noise distribution modeled by $P_{Z\mid W}$ converges to one of these optima, provided  that the involved network has enough capacity to compute the gradient descent involved in the training algorithm. In practice, however, the network $\mathit{G}$ represents a limited family of noise distributions, and instead of optimizing the noise distribution itself we optimize the weights of this network, which introduces multiple critical points in the parameter space.

\subsubsection*{Number of epochs and  batch size}

The convergence of the game can be quite sensitive to the number of epochs and  batch size. 
We just give two hints here, 
referring to literature~\cite{Keskar:18:CoRR} for a general discussion about the impact they have on learning. 

First, choosing a  batch   too small for training $\mathit{G}$ might result in \mods{ too strict a constraint on the utility}. In fact, since the utility loss   is  an expectation,
 a  larger number of samples makes it more likely that some points are pushed further than the threshold, taking advantage of the fact that their loss may be compensated by other data points for which the loss is small.  

Second, training $\mathit{C}$ for too few epochs might result into a too weak adversary. On the other hand if it is trained for a long time we should make sure that the classification performances do not  drop over the validation and test set because that might indicate an overfitting problem. 
\section{Cross Entropy vs Mutual Information: demonstration on synthetic data}
\label{synthetic_experiments}

In this section we perform experiments on a   synthetic dataset to
obtain an intuition about the behaviour of our method. The dataset is
constructed with the explicit purpose of being simple, to facilitate the
interpretation of the results. The main outcome of these experiments is
confirming the fact that, as discussed in Sec~\ref{sec:mi-vs-ce},
training the generator $G$ wrt \emph{cross entropy is not sound}. Even in our simple
synthetic case, training $G$ with $\CE(X,Y)$ as the loss function fails to
converge: $G$ is just ``moving points around'', temporarily
fooling the \emph{current} classifier, but failing to really hide the
correlation between the secrets and the reported locations.

On the other hand, training $G$ with mutual information behaves as expected: the resulting network generates
noise that mixes all classes together, making the classification problem hard
for \emph{any} adversary, not only for the \emph{current} one.
Note that cross entropy is still used, but only for $C$ (cfr. Sec~\ref{sec:mi-vs-ce}).

\subsubsection*{The dataset}
\label{synthetic_dataset}
We consider a simple location privacy problem; $4$ users $\calx = \caly =
\{blue, red, green, yellow\}$ want to disclose their location while
protecting their identities.
Both the real locations $\calw$ as well as the reported locations $\calz$
are taken to be
all locations in a squared region of  $6.5\times 6.5$ sq km centered in 5, Boulevard de S\'ebastopol, Paris. Each location entry is defined by a pair of coordinates normalized in $[-1, 1]$.

The synthetic dataset consists of $600$ real locations for each of the $4$ users (classes),
for a total of $2400$ entries. 
The locations of each user are  placed around one of the vertices
of a square of $300\times 300$ sq meters centered in 
$5$, Boulevard de S\'ebastopol, Paris. (Each user corresponds to a different vertex.)
They are randomly generated  so to form a  cloud of $600$ entries around each vertex  
and in such a way that no locations falls further than about $45$m from the corresponding vertex.  
These sets are represented in
Fig.~\ref{fig:synth_ce} ((a) and (b), left): it is evident from the figure that the four
classes are easily distinguishable; without noise a linear classifier could predict the
class of each location with no error at all.



Of the total $2400$ entries of the dataset we use
$1920$ for training and validation ($480$ for each user) and 
$480$ for testing ($120$ for each user).


\subsubsection*{Network architecture}
A relatively simple architecture is used for both $\mathit{G}$ and $\mathit{C}$ networks.
They consist of three fully connected hidden layers of
neuron with ReLU function. In particular   $\mathit{C}$   has 60, 100
and 51 hidden neurons respectively in the first, second and third hidden
layers.
The
$\mathit{G}$ network has 100 neurons in each hidden layer; such an
architecture has proved to be enough to learn how to reproduce the Laplace
noise distribution ($\epsilon=\ln(2)/100$) with a negligible loss.

\subsubsection*{Bayes error estimation}
As explained in Section~\ref{sec:oursetting}, we use the
Bayes error $B(X\mid Z)$ to evaluate the level of protection offered by a mechanism. 
To this purpose, we discretize $\calz$ into a grid over the 
$6.5\times 6.5$ sq km region, thus determining a partition of the region into a number of disjoint \emph{cells}. 
We will create different grid settings to see how the partition affects  the Bayes error.
In particular, we will consider the cases where the side of a cell is 
$25$m, $50$m, $100$m and $500$m  long, which corresponds to  
$260\times 260 = 67600$, $130\times 130=16900$, $65\times 65= 4225$ and $13\times 13 = 169$ cells, respectively.

We   run   experiments with different numbers of obfuscated locations (hits).
Specifically, for each grid  we  consider  
 $10$, $100$, $200$ and $500$ obfuscated hits for each original one.

Each  hit falls in exactly one  cell. Hence, we  can  estimate the probability that a hit is in cell $i$ as:
\begin{equation}
P(cell_{i}) \; = \; \frac{\text{number of hits in }cell_{i}}{\text{total number of hits}},
\end{equation}
and the probability that a hit  in cell  ${i}$  belong to class $j$:
\begin{equation}
P(Class_{j}|cell_i) \; = \; \frac{\text{number of hits of }class_ j \text{ in }cell_{i}}{\text{number of hits in } cell_{i}},
\end{equation}
We can now estimate of the Bayes error as follows:
\begin{equation}
B( X\mid Z) = 1 - \sum_{i=0}^{k-1} \max_{j}\, P(Class_{j}|cell_i)P(cell_{i})
\end{equation}
where $k$ is the total number of cells. 

Note that   these computations are influenced by the chosen grid. In particular we have two extreme cases:
\begin{itemize}
\item when the grid consists of only one cell the Bayes error is $1-\nicefrac{1}{k} = \nicefrac{k-1}{k}$ for any obfuscation mechanism $P_{Z\mid W}$. 
\item when the number of cells is large enough  so that each cell  contains at most one hit, then the Bayes error is $0$ for any obfuscation mechanism. 
\end{itemize}
In general, we expects   
a finer granularity to give higher discrimination power and to decrease  the Bayes error,
especially with methods that scatter the obfuscated locations far away. 
 
We estimate the Bayes error on the testing data in order to evaluate how well the obfuscation mechanisms protect new data samples never seen during the training phase. Moreover we evaluate the Bayes error on the same data we used for training and we compare the results with those obtained for the testing data. We notice that, in general, the difference between the two results is not large, meaning that the deployed mechanisms efficiently protect the new samples as well.

\subsubsection*{The planar Laplace mechanism}
We compare our method against the  
planar Laplace  mechanism\cite{Andres:13:CCS}, whose
 probability density to report $z$, when the true
location is $w$, is:
\begin{equation}
   \label{eqn:Laplace}\call^\epsilon_{w}(z)=\frac{\epsilon^2}{2\pi}\, e^{-\epsilon\,d(w,z)}\, ,
\end{equation}
where $d(w,z)$ is the Euclidean distance between  $w$ and $z$.

In order to compare the the Laplace mechanism with ours, we  need to tune the privacy parameter $\epsilon$ so that 
the expected distortion of $\call^\epsilon$ is  the same as the upper bound on the utility loss applied in our method, i.e. $L$. 
To this purpose, we recall that the expected distortion $\loss[Z\mid W]$ of  the  planar Laplace depends only on $\epsilon$ (not on the prior $P_W$), and it is given by:  
\begin{equation}\label{eqn:distortionLaplace}
   \loss[Z\mid W]
   \;=\; \frac{2}{\epsilon}.
\end{equation}

\subsection{Experiment 1: relaxed  utility constraint}\label{sec:sr}

As a first experiment, we choose for the upper bound $L$ on the expected
distortion a value high enough so that in principle we can achieve the
highest possible privacy, which is obtained when the observed obfuscated location
gives no information about the true location, which
means that $I(X;Y) = 0$. In this case,  the attacker can only do
random guessing. Since we have $4$ users, the Bayes error is $B(X\mid Y) =
1- \nicefrac{1}{4} = 0.75$.

For the distortion, we take $\ell(w,z)$ to be the geographical distance between $w$ and $z$.
One way to achieve the maximum privacy is to map all locations into the middle point. To compute a sufficient $L$, note that the vertices of the original locations   form a square of  side $300$m, hence each vertex is at a distance $300\times \nicefrac{\sqrt{2}}{2} \approx 212$m from the center. Taking into account that the locations can be as much as $45$m away from the corresponding vertex, we conclude that any value of $L$ larger than $ 212 + 45 = 247$m should  be enough to allow us to obtain the maximum privacy. We set the upper bound on the distortion a little higher: 
\begin{equation}\label{eqn:upperbound}
L\;= \;270\mbox{m},
\end{equation}
but we will see from the experiments that a much smaller value of $L$ would have been sufficient. 

We now need to tune the planar Laplace so that the expected distortion is at least $L$. We decide to set:
\begin{equation}
\epsilon= \frac{\ln 2}{100}
\end{equation}
which, using Equation~\eqref{eqn:distortionLaplace}, gives us a value 
\begin{equation}\label{eqn:distortionLsr}
\loss[Z\mid W] \approx 288\mbox{m} > L.
\end{equation}

We have   used this instance of the planar Laplace also as a starting point of our method:  we have defined $G_0$ as $\call^\epsilon$ with $\epsilon= \nicefrac{\ln 2}{100}$.
For the next steps,  $\mathit{G}_i$  and  $\mathit{C}_i$ are constructed as explained in 
 Algorithm~\ref{alg:game}.
In particular, we train the generator with a batch size of $128$ samples for $100$ epochs during each iteration. The learning rate is set to $ 0.0001$. For this particular experiment we set the weight for the utility loss to $1 $ and the weight for the mutual information to $2$.
The classifier   is trained with a batch size of $512$ samples and $3000$ epochs for each iteration. The learning rate for the classifier is set to $0.001$. 

\begin{figure*}[t]
   \begin{subfigure}[b]{0.5\textwidth}
      \includegraphics[trim={0 120pt 0pt 130pt},clip,width=\textwidth]{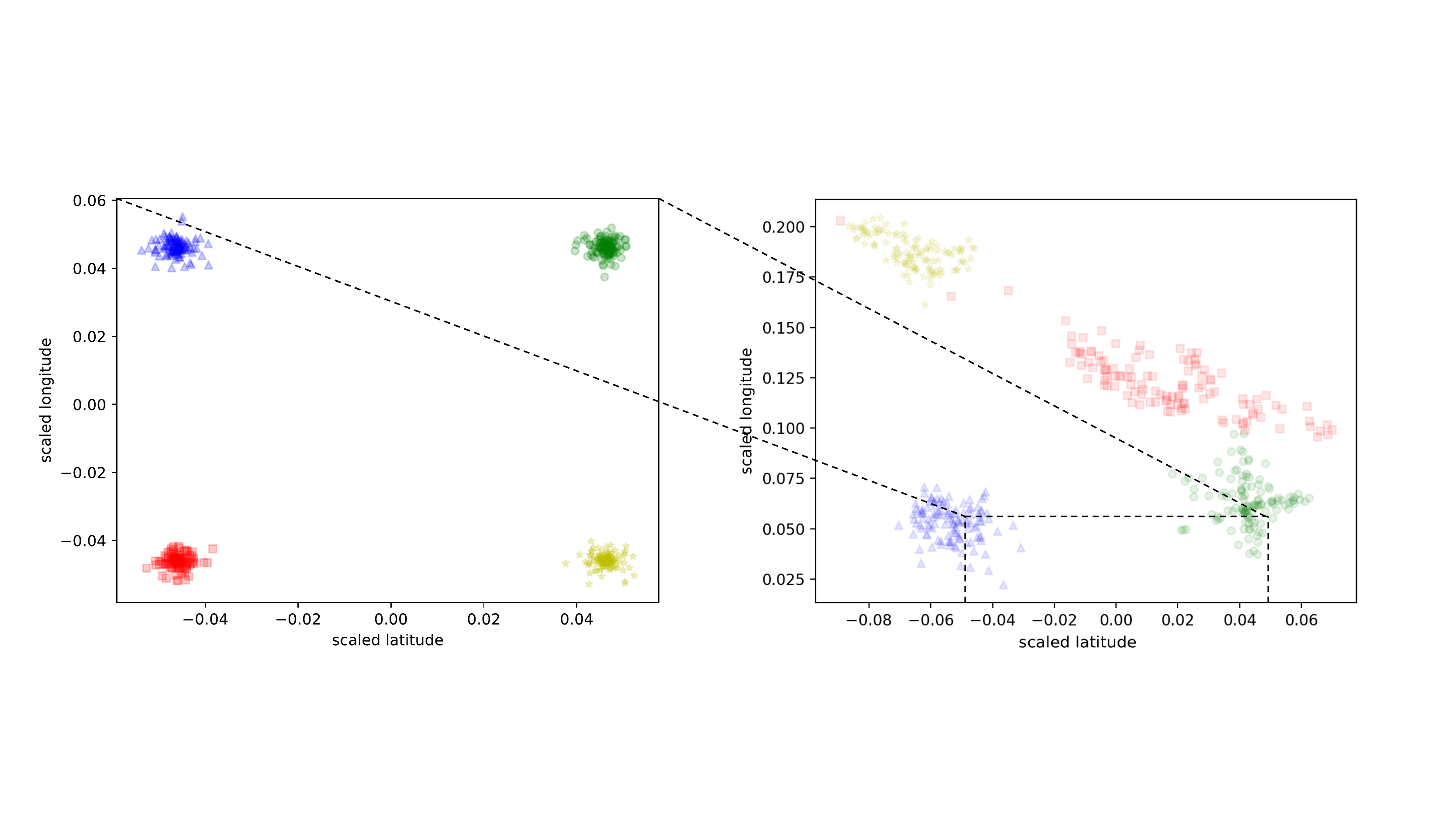}
      \caption{\footnotesize Iteration 30}
   \end{subfigure}
   \begin{subfigure}[b]{0.5\textwidth}
      \includegraphics[trim={0 120pt 0 130pt},clip,width=\textwidth]{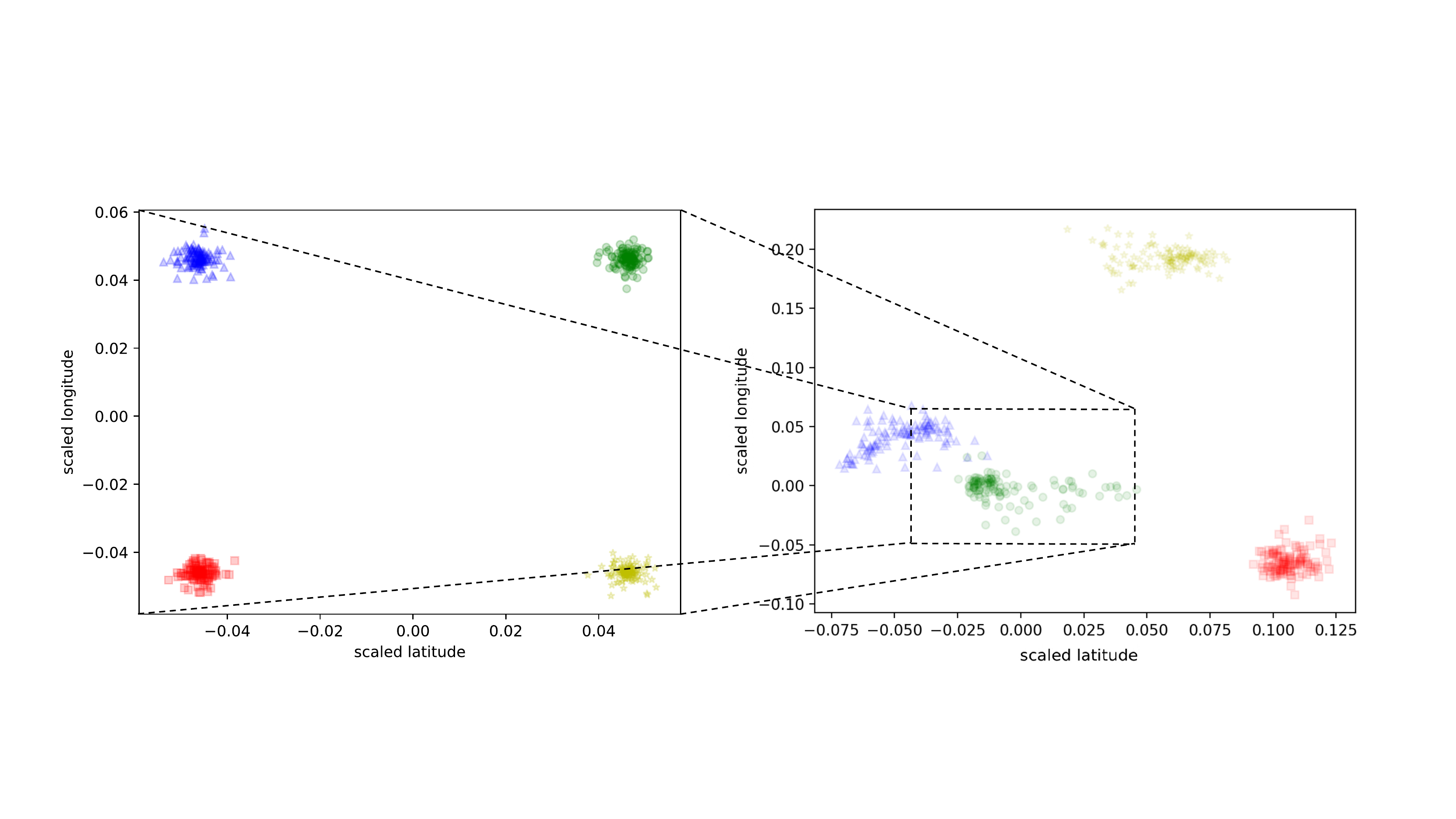}
      \caption{\footnotesize Iteration 40}
   \end{subfigure}
   \caption{\footnotesize Using cross entropy for producing the noise does not make the system converge. The left sides of  Figures (a) and (b)  show the original synthetic  data without noise. The right sides show  the noisy data at different iterations. $L= 270$m.}
   \label{fig:synth_ce}
\end{figure*}

\begin{figure*}[t]
   \centering
   \begin{tikzpicture}[every node/.style={node distance=6cm, inner sep=0pt, outer sep=0pt}]
   \node (img1)[label={[font=\small, black, rotate=0]below:(a)}] {\includegraphics[width=0.33\textwidth]{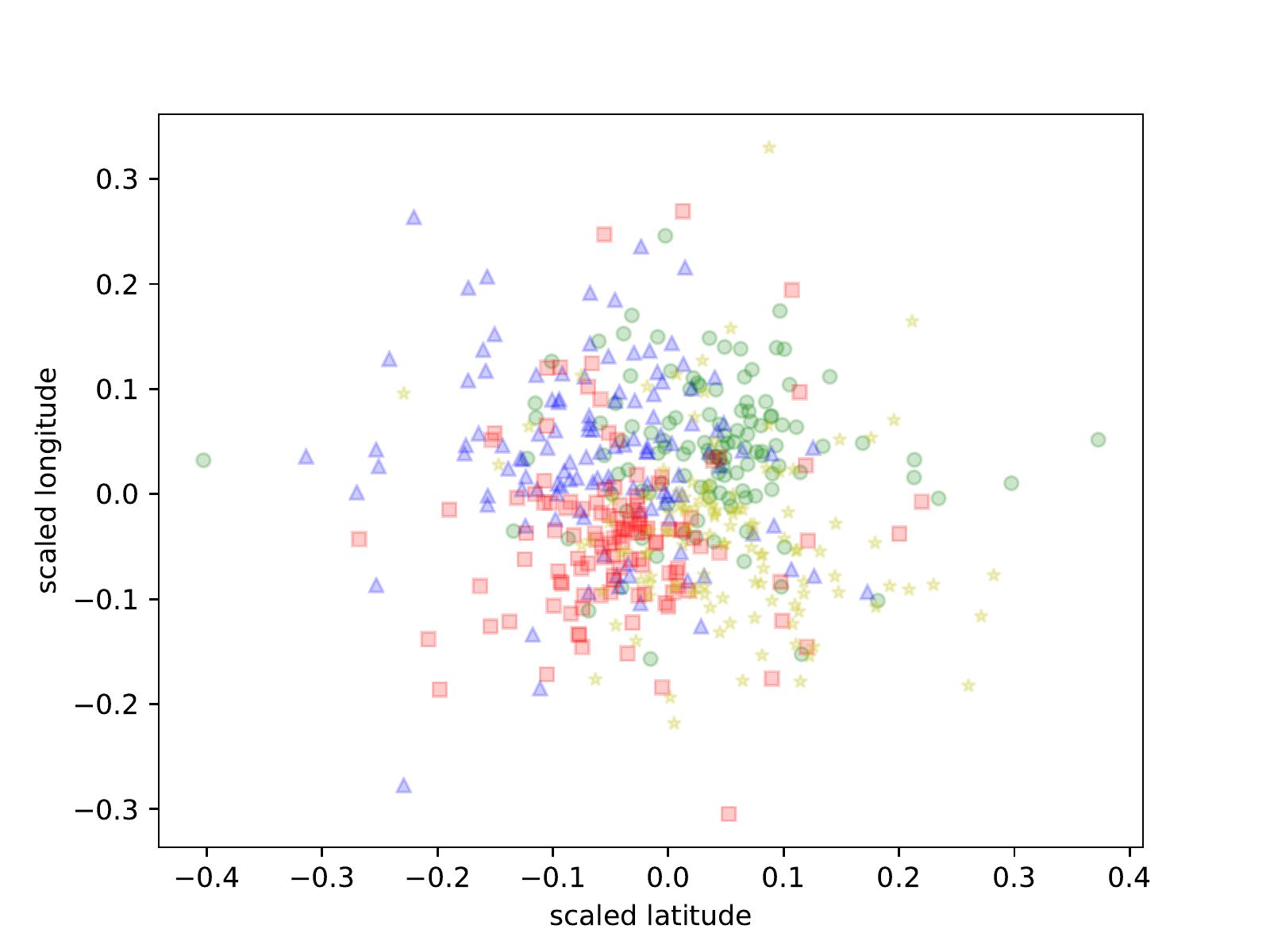}};
   \node[right of=img1,label={[font=\small, black, rotate=0]below:(b)}] (img2) {\includegraphics[width=0.33\textwidth]{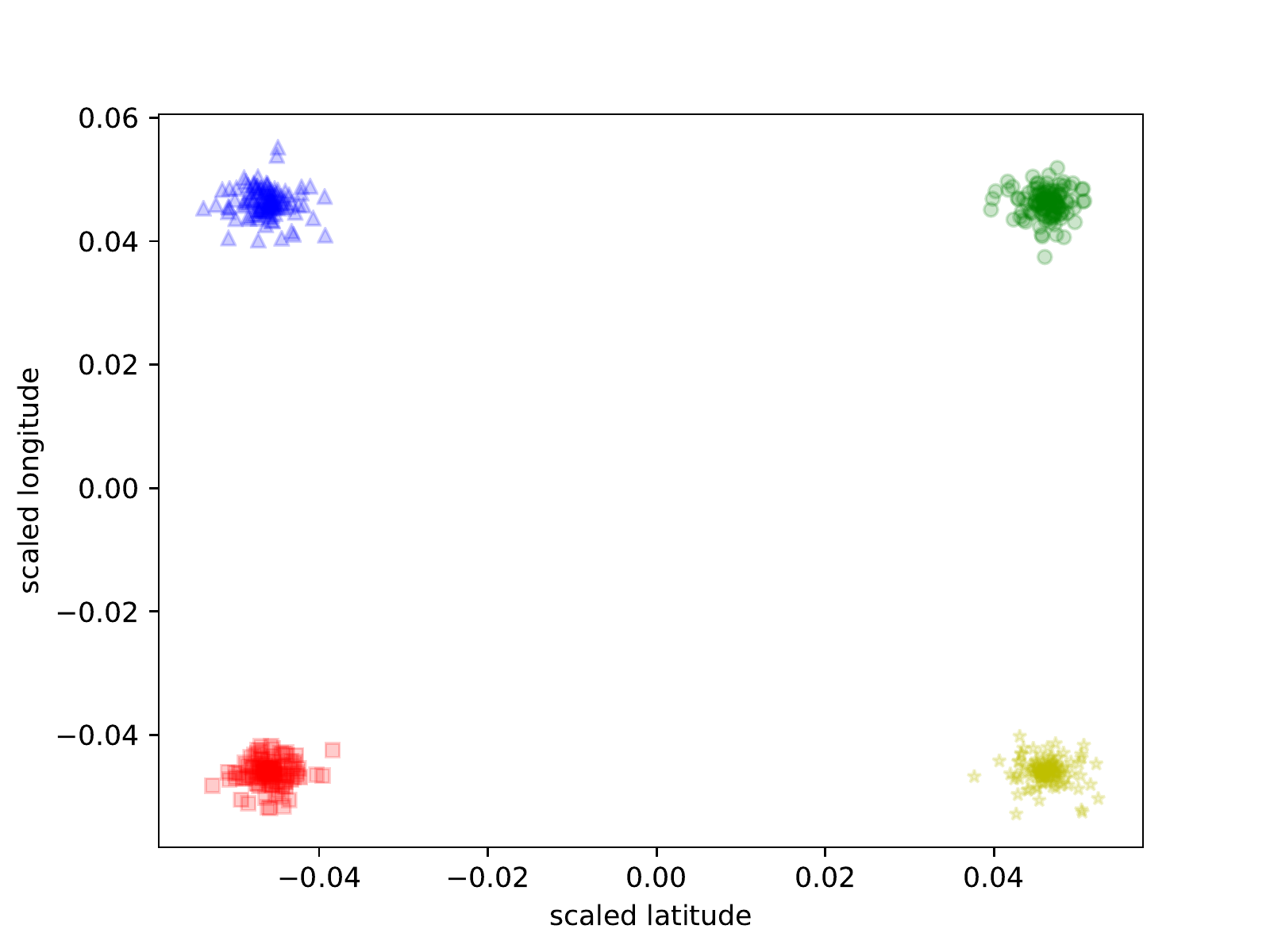}};
   \node[right of=img2,label={[font=\small, black, rotate=0]below:(c)}] (img3) {\includegraphics[width=0.33\textwidth]{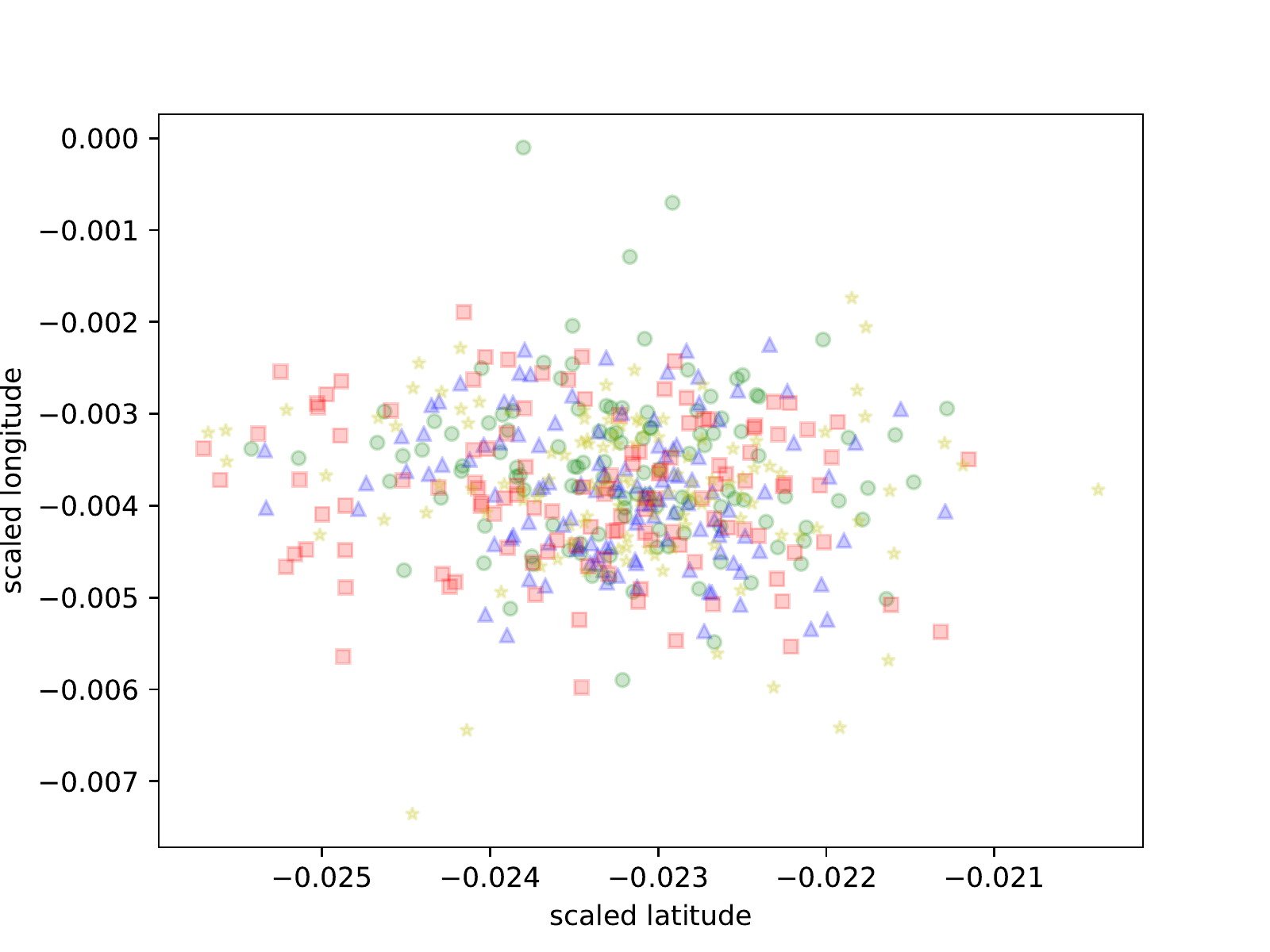}};
   \draw[dashed] (-0.15,0.19) -- (3.7,1.7);
   \draw[dashed] (-0.15,-0.35) -- (3.7,-1.74);
   \draw [dashed] (0.44,0.19) -- (8.4,1.7);
   \draw [dashed] (0.44,-0.35) -- (8.4,-1.75);

   \draw[dashed] (-0.15,0.19) -- (-0.15,-0.35);
   \draw[dashed] (-0.15,-0.35) -- (0.44,-0.35);
   \draw [dashed]  (0.44,0.19) -- (0.44,-0.35);
   \draw [dashed]  (0.44,0.19) -- (-0.15,0.19);

   \draw [dashed](5.07,-0.12) -- (9.8,1.7);
   \draw [dashed](5.07,-0.2756) -- (9.8,-1.75);
   \draw [dashed](5.3,-0.12) -- (14.4,1.68);
   \draw [dashed](5.3,-0.275) -- (14.41,-1.74);

   \draw [dashed](5.07,-0.12) -- (5.07,-0.275);
   \draw [dashed](5.07,-0.275) -- (5.3,-0.275);
   \draw [dashed](5.3,-0.12) -- (5.3,-0.275);
   \draw [dashed](5.3,-0.12) -- (5.07,-0.12);
   \end{tikzpicture}
   \caption{\footnotesize  Synthetic testing  data. From left to right: Laplace noise, no noise, our noise produced using mutual information. $L= 270$m.}
   \label{fig:synth_ts_set}
\end{figure*}

\begin{figure*}[t]%
{\footnotesize
\begin{subfigure}{1\columnwidth}\centering
\begin{tabular}{|c|c|c|c|}
\hline
\multicolumn{4}{|c|}{\fix{Number of cells}} \\ 
\hline
$13\times13$ & $65\times65$ & $130\times130$ & $260\times260$\\
\hline
$0.75$ & $0.00$ & $0.00$ & $0.00$\\
\hline
\end{tabular}
\caption{Training data.}
\label{tab:bayes_tr_real_a}
\end{subfigure}\hfill%
\begin{subfigure}{1\columnwidth}\centering
\begin{tabular}{|c|c|c|c|}
\hline
\multicolumn{4}{|c|}{\fix{Number of cells}} \\ 
\hline
$13\times13$ & $65\times65$ & $130\times130$ & $260\times260$\\
\hline
$0.75$ & $0.00$ & $0.00$ & $0.00$\\
\hline
\end{tabular}
\caption{Testing data.}
\label{tab:bayes_tr_real_b}
\end{subfigure}}		\caption{\footnotesize  Estimation of $B(X\mid Z)$ on the original version of the synthetic data.}
\label{real_bayes_synt}
\end{figure*}

 \begin{figure*}[t]%
{\footnotesize
\begin{subfigure}{1\columnwidth}\centering
\begin{tabular}{|c|c|c|c|c|c|c|c|c|}
\cline{2-9}
\multicolumn{1}{c|}{} & \multicolumn{8}{c|}{\fix{Number of cells}} \\ 
\cline{2-9}
\multicolumn{1}{c|}{} & \multicolumn{2}{c|}{$13\times13$} & \multicolumn{2}{c|}{$65\times65$} & \multicolumn{2}{c|}{$130\times130$}& \multicolumn{2}{c|}{$260\times260$}\\
\hline
\fix{Obf} &  \cellcolor{lightgray}Lap & Our &  \cellcolor{lightgray}Lap & Our&  \cellcolor{lightgray}Lap & Our&  \cellcolor{lightgray}Lap & Our\\
\hline
$10$ & $\cellcolor{lightgray}0.60$ & $0.75$ &  $\cellcolor{lightgray}0.40$ & $0.75 $ & $\cellcolor{lightgray}0.38$ & $0.75$ &$\cellcolor{lightgray}0.35$ &$ 0.73$  \\  
\hhline{|-|-|-|-|-|-|-|-|-} 
$100$ &  $\cellcolor{lightgray}0.60$ & $0.75$ &$ \cellcolor{lightgray}0.41$ & $0.75 $& $\cellcolor{lightgray}0.40$ & $0.75$ & $ \cellcolor{lightgray}0.39 $& $0.74$ \\  
\hhline{|-|-|-|-|-|-|-|-|-} 
$200$ & $\cellcolor{lightgray}0.60$& $0.75$ & $\cellcolor{lightgray}0.41 $& $0.75 $&  $\cellcolor{lightgray}0.40$ & $0.75$ &  $\cellcolor{lightgray}0.39 $&$ 0.74$ \\  
\hhline{|-|-|-|-|-|-|-|-|-} 
$500$ & $ \cellcolor{lightgray}0.60$ &$ 0.75 $& $\cellcolor{lightgray}0.41 $& $0.75 $& $ \cellcolor{lightgray}0.40$ &$ 0.75 $&$\cellcolor{lightgray}0.40 $&$ 0.74$ \\  
\hline
\end{tabular}
\caption{Training data.}
\label{tab:bayes_tr_syn}
\end{subfigure}\hfill%
\begin{subfigure}{1\columnwidth}\centering
\begin{tabular}{|c|c|c|c|c|c|c|c|c|}
\cline{2-9}
\multicolumn{1}{c|}{} & \multicolumn{8}{c|}{\fix{Number of cells}} \\ 
\cline{2-9}
\multicolumn{1}{c|}{} & \multicolumn{2}{c|}{$13\times13$} & \multicolumn{2}{c|}{$65\times65$} & \multicolumn{2}{c|}{$130\times130$}& \multicolumn{2}{c|}{$260\times260$}\\
\hline
\fix{Obf} &  \cellcolor{lightgray}Lap & Our &  \cellcolor{lightgray}Lap & Our&  \cellcolor{lightgray}Lap & Our&  \cellcolor{lightgray}Lap & Our\\
\hline
$10$ &  $\cellcolor{lightgray}0.59$ & $0.75$ &  $\cellcolor{lightgray}0.38$ & $0.75$ & $\cellcolor{lightgray}0.36$ & $0.75$ & $\cellcolor{lightgray}0.26$ & $0.73$  \\  
\hhline{|-|-|-|-|-|-|-|-|-} 
$100$ &  $\cellcolor{lightgray}0.60$ & $0.75$ &  $\cellcolor{lightgray}0.40$ & $0.75$& $\cellcolor{lightgray}0.39$ & $0.75$ &  $\cellcolor{lightgray}0.37$ & $0.74$   \\  
\hhline{|-|-|-|-|-|-|-|-|-} 
$200$ & $ \cellcolor{lightgray}0.60$ &$0.75$ & $\cellcolor{lightgray}0.41$ & $0.75$ & $\cellcolor{lightgray}0.39$ & $0.75$ & $\cellcolor{lightgray}0.38$ & $0.74$    \\  
\hhline{|-|-|-|-|-|-|-|-|-} 
$500$  & $\cellcolor{lightgray}0.60$ & $0.75$ & $\cellcolor{lightgray}0.41$ & $0.75$& $\cellcolor{lightgray}0.40$ & $0.75$ & $\cellcolor{lightgray}0.39$ & $0.74$  \\  
\hline
\end{tabular}\label{tab:bayes_ts_syn}
\caption{Testing data.}
\end{subfigure}%
}		
\caption{\footnotesize Estimation of $B(X\mid Z)$ on   synthetic data for the Laplace and   our mechanisms, with $L=270$m.
The empirical utility loss for training and testing data is $\approx 282.07\mbox{m}-298.40$m respectively for the Laplace and $\approx 219.70\mbox{m}-219.26$m for ours.
The optimal mechanism gives $ {B(X\mid Z)}= 1-\nicefrac{1}{4}=0.75$.} 
\label{tab:bayes_synt}
\end{figure*}


\subsubsection{Training $G$ wrt cross entropy}\label{sc:ce}

As discussed in Sec~\ref{sec:mi-vs-ce}, training $G$ wrt $\CE(X,Y)$ is not sound.
This is confirmed in the experiments by the fact that $G$ is failing to
converge. Fig.~\ref{fig:synth_ce} shows the distribution generated by $G$ in
two different iterations of the game. We observe that, trying to fool the
classifier $C$, the generator on the right-hand side has simply moved
locations around, so that each class has been placed in a different
area. This clearly confuses a classifier trained on the distribution
of the left-hand side, however the correlation between labels and location
is still evident. A classifier trained on the new $G$ can infer the labels
as accurately as  before.

As a consequence,  after each iteration, the accuracy of the newly trained
$C_i$ is always $1$, while the Bayes error $B(X|Z)$ is $0$. The generator fails to converge
to a distribution that effectively protects the users' privacy.
We can hence conclude that the use of cross entropy is unsound for training $G$.

\subsubsection{Training $G$ wrt mutual information}

Using now $I(X;Y)$ for training $G$ (while still using the more efficient cross entropy
for $C$, as explained in Sec~\ref{sec:mi-vs-ce}), we observe a totally different behaviour.
After each iteration the accuracy of the classifier drops, showing that the generator
produces meaningful noise.
Around iteration $i=149$ the accuracy of  $\mathit{C}_i$ becomes $\approx 0.25$ both over the training and the validation set. This means that $\mathit{C}_i$ just randomly predicts one of the four classes. We conclude that the noise injection is maximally effective, since $0.75$ is the maximum possible Bayes error.
Hence we know that we can stop. 

The result of our method, i.e., the  final generator $G_i$, to the  testing set is reported in Fig.~\ref{fig:synth_ts_set}(c).
The empirical   distortion is $\approx 219.26$m.
This is way below the limit of $270$m set in~\eqref{eqn:upperbound}, and it is due to the fact that to achieve the 
optimum privacy we probably do not need more than $\approx 220$m. In fact, the distance of the vertices from the 
center is $\approx 212$m, and even though some locations are further away (up to $ 45$m more), there are also locations that are closer, and that compensate the  utility loss (which is a linear average measure).

For comparison, the result of the application of the planar Laplace to the
testing set is illustrated in Fig.~\ref{fig:synth_ts_set}(a). 
The empirical distortion (i.e., the
distortion computed on the sampled obfuscated locations) is 
$\approx 298.40$m,
which is in line with the theoretical distortion formulated
in~\eqref{eqn:distortionLsr}.

From Fig.~\ref{fig:synth_ts_set} 
we can see that, while the Laplace tends to ``spread out'' the obfuscated locations, our method tends to concentrate them into a single point \changed{(mode collapse), i.e., the mechanism is almost deterministic. This  is due to the fact that the utility constraint is sufficiently loose   to allow the noisy locations to be displaced enough so to overlap all in  the same point. When the utility constraint is  stricter, the mechanism is forced to be probabilistic (and the mode collapse does not happen anymore). For example, consider two individuals, $A$ and $B$, in locations $a$ and $b$ respectively, at distance $100$m, assume that  $L=40$m. Assume also, for simplicity, that there are no other locations available. Then the optimal solution maps $a$  into $b$ with probability $\nicefrac{2}{5}$,  and into itself with probability $\nicefrac{3}{5}$ and vice versa for $b$)}.  Nevertheless, we can expect that our mechanism will tend to overlap   the 
obfuscated locations of different classes, as much as allowed by the utility constraint. With the Laplace, on the contrary, the areas of the various classes remain pretty separated. 
This is reflected by the Bayes error estimation  reported in Fig.~\ref{tab:bayes_synt}. 

We note that the Bayes error of the planar Laplace tend to decrease as the grid becomes finer. We believe that this is due to the fact that, with a coarse grid, there is  an effect of confusion simply due to the large size of each cell.  
We remark that the behavior of our noise, on the contrary, is quite stable. 
Note that, when the grid is very coarse ($13\times 13$ cells)
the Bayes error is $0.75$ already on the original data (cfr. Fig.~\ref{real_bayes_synt}), which must be due to the fact that all the vertices are in the same cell. 
While the Bayes error remains $0.75$ also with our obfuscation mechanism, with  Laplace it decreases to $0.60$. 
The reason is that the noise scatters the locations in different cells, and they become, therefore distinguishable.   

\begin{figure*}[t]
   \centering
   \begin{tikzpicture}[every node/.style={node distance=6cm, inner sep=0pt, outer sep=0pt}]
   \node (img1)[label={[font=\small, black, rotate=0]below:(a)}] {\includegraphics[width=0.33\textwidth]{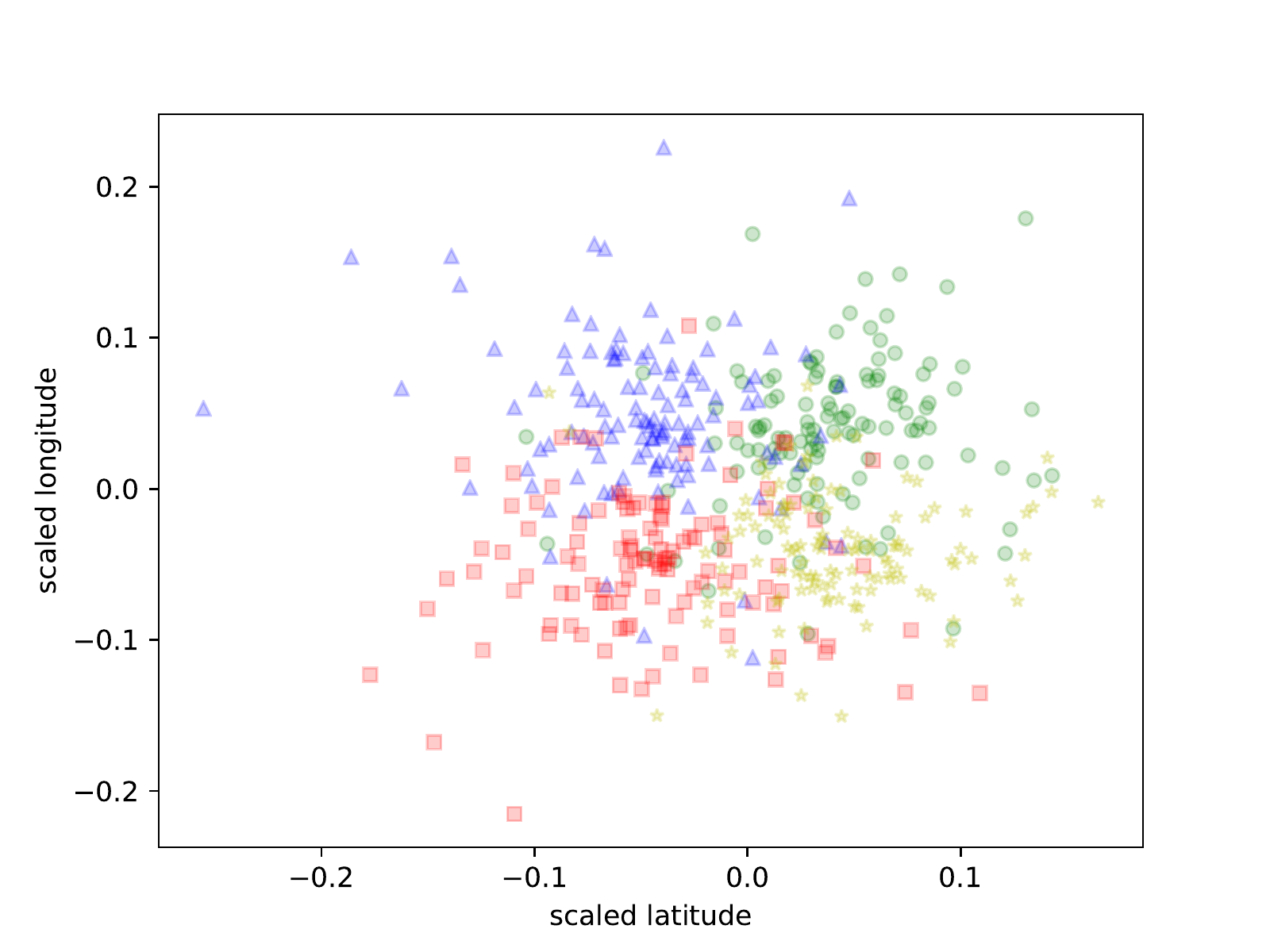}};
   \node[right of=img1,label={[font=\small, black, rotate=0]below:(b)}] (img2) {\includegraphics[width=0.33\textwidth]{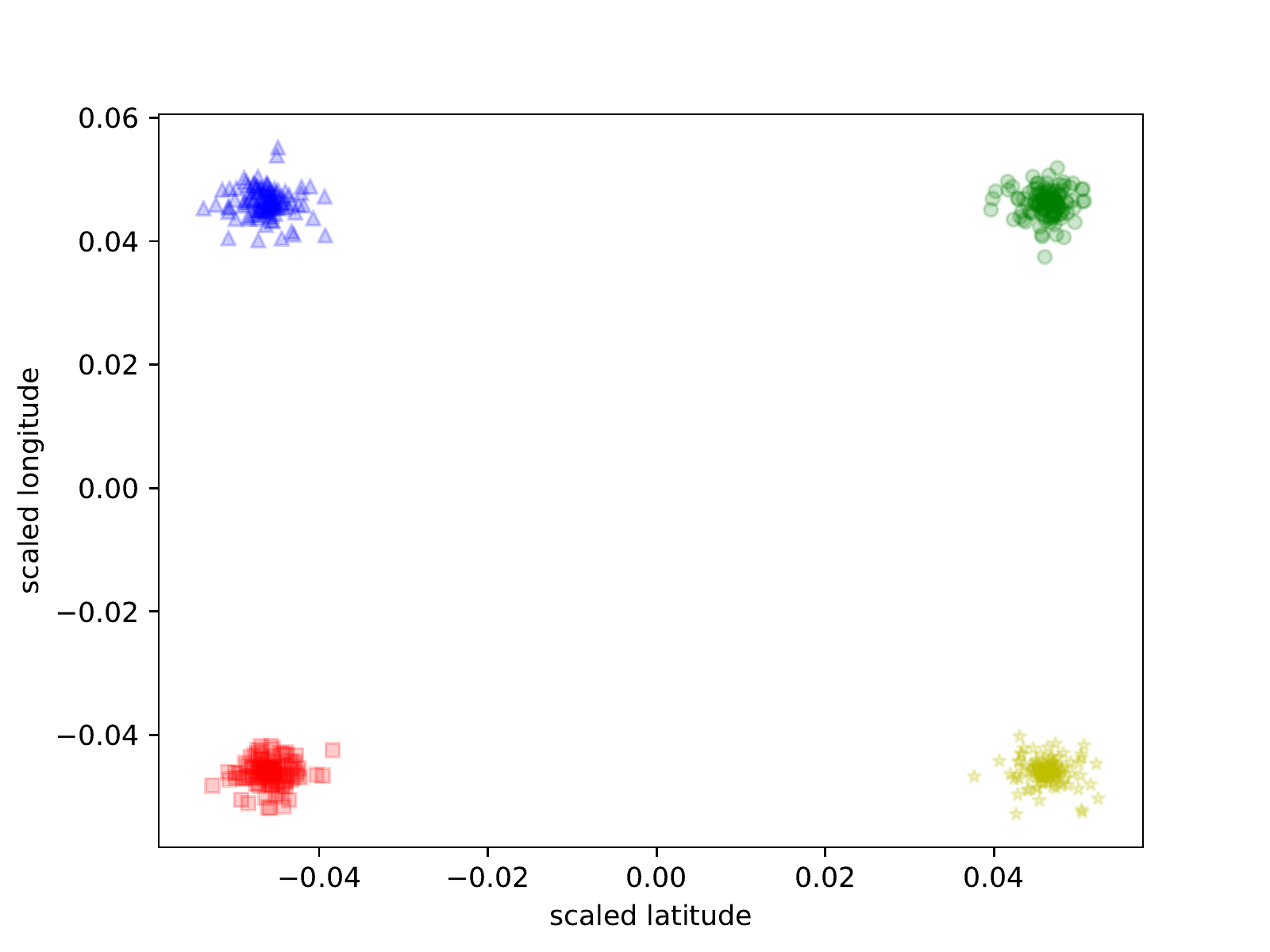}};
   \node[right of=img2,label={[font=\small, black, rotate=0]below:(c)}] (img3) {\includegraphics[width=0.33\textwidth]{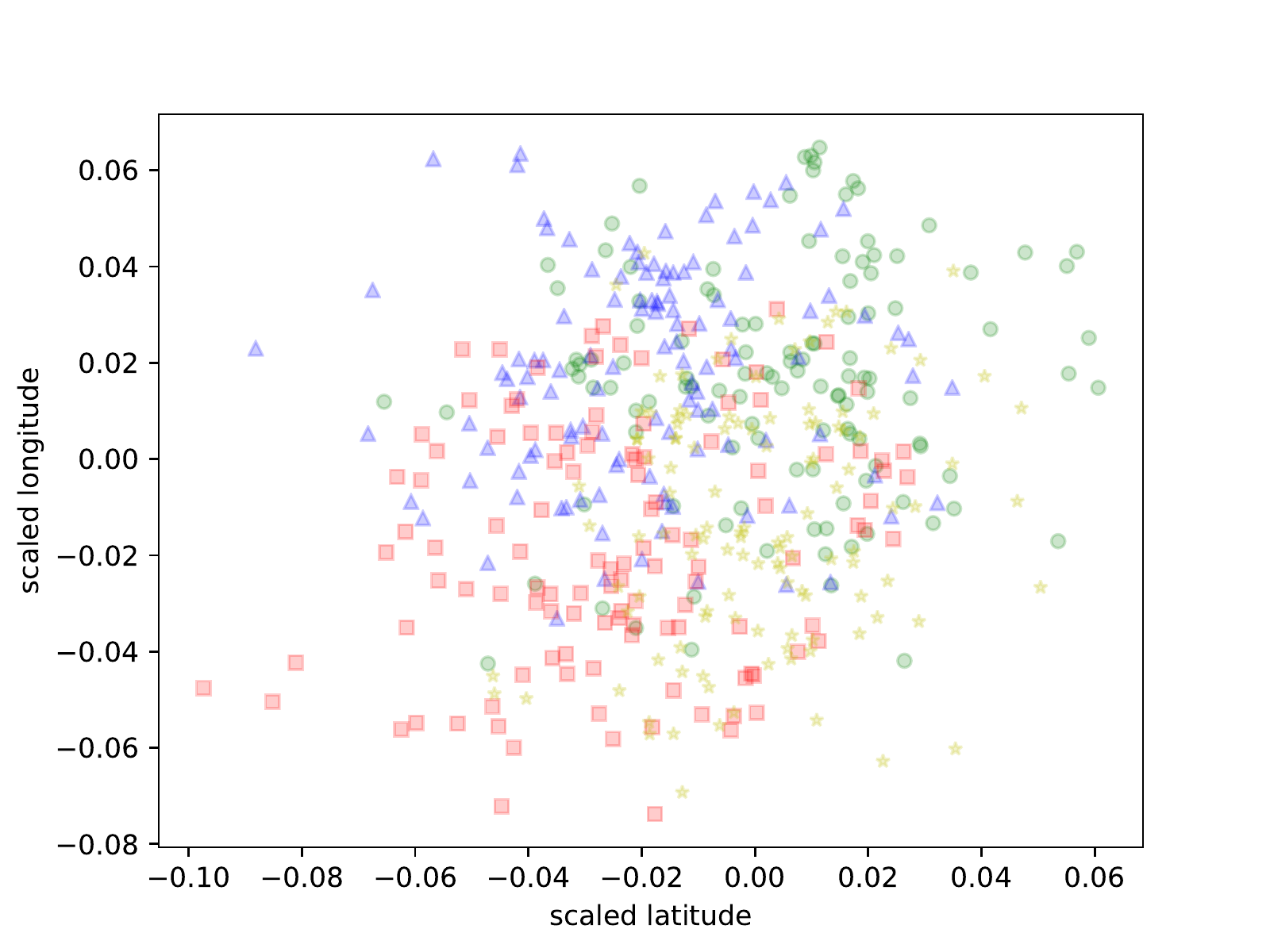}};

   \draw [dashed] (-0.02, 0.33) -- (3.8,1.72);
      \draw [dashed] (-0.02,-0.42) -- (3.8,-1.75);  
      \draw [dashed] (1.0, 0.33) -- (8.4,1.68);
      \draw [dashed] (1.0,-0.42) -- (8.4,-1.75);

   \draw [dashed] (-0.02, 0.33) -- (-0.02,-0.42);
   \draw [dashed] (-0.02,-0.42) -- (1.0,-0.42);
   \draw [dashed] (1.0,-0.42) -- (1.0, 0.33);
   \draw [dashed] (1.0, 0.33) -- (-0.02, 0.33);


      \draw [dashed] (3.8,1.72)  -- (11,1.4);
      \draw [dashed] (3.8,-1.75) -- (11,-1.35);
      \draw [dashed] (8.4,1.68)  -- (14.05,1.4);
      \draw [dashed] (8.4,-1.75) -- (14.05,-1.35);

      \draw [dashed] (11,1.4)  -- (11,-1.35);
      \draw [dashed] (11,-1.35) -- (14.05,-1.35);
      \draw [dashed] (14.05,-1.35)  -- (14.05,1.4);
      \draw [dashed] (14.05,1.4) -- (11,1.4);

   %

   \end{tikzpicture}
   \caption{\footnotesize  Synthetic testing  data. From left to right: Laplace noise, no noise, our noise produced using mutual information. $L=173$m.}
   \label{fig:ts_syn_new_exp}
\end{figure*}

\begin{figure*}[t]%
   {\footnotesize
   \begin{subfigure}{1\columnwidth}\centering
      \begin{tabular}{|c|c|c|c|c|c|c|c|c|}
      \cline{2-9}
      \multicolumn{1}{c|}{} & \multicolumn{8}{c|}{\fix{Number of cells}} \\ 
      \cline{2-9}
      \multicolumn{1}{c|}{} & \multicolumn{2}{c|}{$13\times13$} & \multicolumn{2}{c|}{$65\times65$} & \multicolumn{2}{c|}{$130\times130$}& \multicolumn{2}{c|}{$260\times260$}\\
      \hline
      \fix{Obf} &  \cellcolor{lightgray}Lap & Our &  \cellcolor{lightgray}Lap & Our&  \cellcolor{lightgray}Lap & Our&  \cellcolor{lightgray}Lap & Our\\
      \hline
      $10$ & $\cellcolor{lightgray}0.64$ & $\cellcolor{white}0.74$ &  $\cellcolor{lightgray}0.26$ & $0.45 $ & $\cellcolor{lightgray}0.23$ & $0.43$ &$\cellcolor{lightgray}0.22$ &$ 0.41$  \\  
      \hhline{|-|-|-|-|-|-|-|-|-} 
      $100$ &  $\cellcolor{lightgray}0.64$ & $\cellcolor{white}0.74$ &$ \cellcolor{lightgray}0.26$ & $0.45 $& $\cellcolor{lightgray}0.24$ & $0.43$ & $ \cellcolor{lightgray}0.23 $& $0.42$ \\  
      \hhline{|-|-|-|-|-|-|-|-|-} 
      $200$ & $\cellcolor{lightgray}0.64$& $\cellcolor{white}0.74$ & $\cellcolor{lightgray}0.26 $& $0.45 $&  $\cellcolor{lightgray}0.24$ & $0.43$ &  $\cellcolor{lightgray}0.24 $&$ 0.42$ \\  
      \hhline{|-|-|-|-|-|-|-|-|-} 
      $500$ & $ \cellcolor{lightgray}0.64$ &$\cellcolor{white} 0.74 $& $\cellcolor{lightgray}0.26 $& $0.45 $& $ \cellcolor{lightgray}0.24$ &$ 0.43 $&$\cellcolor{lightgray}0.24$&$ 0.42$ \\  
      \hline
      \end{tabular}
      \caption{Training data.}
      \label{tab:bayes_tr_syn_new}
   \end{subfigure}\hfill%
   \begin{subfigure}{1\columnwidth}\centering
      \begin{tabular}{|c|c|c|c|c|c|c|c|c|}
      \cline{2-9}
      \multicolumn{1}{c|}{} & \multicolumn{8}{c|}{\fix{Number of cells}} \\ 
      \cline{2-9}
      \multicolumn{1}{c|}{} & \multicolumn{2}{c|}{$13\times13$} & \multicolumn{2}{c|}{$65\times65$} & \multicolumn{2}{c|}{$130\times130$}& \multicolumn{2}{c|}{$260\times260$}\\
      \hline
      \fix{Obf} &  \cellcolor{lightgray}Lap & Our &  \cellcolor{lightgray}Lap & Our&  \cellcolor{lightgray}Lap & Our&  \cellcolor{lightgray}Lap & Our\\
      \hline
      $10$ &  $\cellcolor{lightgray}0.63$ & $\cellcolor{white}0.74$ &  $\cellcolor{lightgray}0.25$ & $0.44$ & $\cellcolor{lightgray}0.23$ & $0.42$ & $\cellcolor{lightgray}0.19$ & $0.39$  \\  
      \hhline{|-|-|-|-|-|-|-|-|-} 
      $100$ &  $\cellcolor{lightgray}0.64$ & $\cellcolor{white}0.74$ &  $\cellcolor{lightgray}0.26$ & $0.45$& $\cellcolor{lightgray}0.24$ & $0.43$ &  $\cellcolor{lightgray}0.23$ & $0.42$   \\  
      \hhline{|-|-|-|-|-|-|-|-|-} 
      $200$ & $ \cellcolor{lightgray}0.64$ &$\cellcolor{white}0.74$ & $\cellcolor{lightgray}0.26$ & $0.45$ & $\cellcolor{lightgray}0.23$ & $0.43$ & $\cellcolor{lightgray}0.23$ & $0.42$    \\  
      \hhline{|-|-|-|-|-|-|-|-|-} 
      $500$  & $\cellcolor{lightgray}0.64$ & $\cellcolor{white}0.74$ & $\cellcolor{lightgray}0.26$ & $0.45$& $\cellcolor{lightgray}0.23$ & $0.43$ & $\cellcolor{lightgray}0.23$ & $0.42$  \\  
      \hline
      \end{tabular}		
      \caption{Testing data}\label{Subfig:Synt_testingdata_b}
   \end{subfigure}%
   }
   \caption{\footnotesize Estimation of $B(X\mid Z)$ on the synthetic data for the Laplace and for our mechanisms, with $L=173$m.
   The empirical utility loss for training and testing data is $\approx 170.53$m  -- $172.35$m respectively for the Laplace and $\approx 166.78$m -- $171.50$m for ours.  The optimal mechanism gives $B(X\mid Z)=0.50$, 
   since the utility bound is large enough to let mixing the red and blue points, as well as the green and the yellow, but does not allow more confusion than that. }
   \label{tab:bayes_synt_new}
\end{figure*}

\subsection{Experiment 2: stricter  utility constraint}
\label{synthetic_laplace_injection_loss_constr}

We are now interested in investigating how our method  behaves when a stricter constraint on the utility loss is imposed. 
In order to do so, we run an experiment similar to the one 
in~Section~\ref{sec:sr}. We repeat the same steps but now we set $L$ and the privacy parameter  (and consequently the distortion rate) of the planar Laplace as follows:
\begin{equation}
L = 173\mbox{m} \qquad
\epsilon = \frac{\ln 2}{60}\qquad
\loss[Z\mid W] \approx 173.12\mbox{m} 
\end{equation}

Similarly to the previous section, training $G$ wrt cross entropy fails to converge, producing
generators that achieve no privacy protection.
As a consequence, we only show the results of training $G$ wrt mutual information.

The result of the application of the Laplace mechanism is illustrated in Fig.~\ref{fig:ts_syn_new_exp}(a).
The empirical distortion is 
$\approx 172.35$m.

Following the same pattern as in Section~\ref{sec:sr}, we train $\mathit{G}$ and $\mathit{C}$.
The training of $\mathit{G}$ is performed for 30 epochs during each iteration with a batch size of 512 samples and a learning rate of 0.0001. The classifier
 $\mathit{C}$ is trained for 3000 epochs with a batch size of 512 samples and 0.001 as the value for the learning rate during each iteration. 
We are particularly interested in the 24th iteration where  $\mathit{C}$'s performance is degraded by the obfuscation performed by $\mathit{G}$ trained during the previous iteration.
Training $\mathit{C}$ with 32 samples batch size and learning rate set to 0.001 for 100 epochs with the obfuscated data gives the results reported in Table~\ref{tab_synthetic_laplace_results_new_exp_our_noise}. 
\begin{table}[h]
   \begin{center}
   \begin{tabular}{|c|c|c|}
   \hline
   \textbf{Data}&\textbf{Accuracy}&\textbf{F1\_score} \\
   \hline
   Training data & $\approx 0.55$ & $\approx 0.54$ \\
   \hline
   Validation data & $\approx 0.53$ & $\approx 0.53$ \\
   \hline
   Test data & $\approx 0.52$ & $\approx 0.51$ \\
   \hline
   \end{tabular}
   \caption{\small  Summary of the experiment with our noise.}
   \label{tab_synthetic_laplace_results_new_exp_our_noise}
   \end{center}
\end{table}
In this case, increasing the number of epochs does not 
improve the classification precision and makes $\mathit{C}$ more prone to overfitting.

The obfuscation provided by  $\mathit{G}$ at the 24th iteration produces the distributions on the testing 
illustrated in Fig.~\ref{fig:ts_syn_new_exp}(c).
The empirical distortion is 
$\approx 171.50$m.
The  estimated   Bayes error for the two mechanisms is reported  in Fig.~\ref{tab:bayes_synt_new}.


\section{Experiments on the Gowalla dataset}	
\label{Govalla_experiments}

In the previous section we saw that our method
behaves as expected in a simple synthetic dataset, producing an obfuscation mechanism
that is close to the optimal one (when $G$ is trained wrt mutual information). We now study the behaviour of our method to
real location data from the Gowalla dataset. Since cross entropy was shown to be unsound,
we only present results using mutual information for training $G$.

\subsubsection*{The dataset}
The dataset consists of data extracted from the \emph{Gowalla} dataset~\cite{Gowalla}, a collection of check-ins made available by the Gowalla location-based social network. 
Among all the provided features, only the users' identifiers (classes), the latitude and longitude of the check-in locations are considered. 
The data are selected as follows:
\begin{enumerate}
\item we consider a squared region centered in 5, Boulevard de S\'ebastopol, Paris, France with 4500m long side;
\item we select the 6 users who checked in the region most frequently, we retain their locations and  discard the rest;
\item we filter the obtained locations to reduce the overlapping of the data belonging to different classes by randomly selecting  
 for each class 82 location samples for training and validation purpose, and 20 samples for the test. 
\end{enumerate}
We obtain $492$ pairs $(\mathit{locations}, \mathit{id})$   to train and validate the model, and $120$ to test it. For each of these, the generator creates 10 pairs with noisy locations using different seeds. 
As usual,  $G_0$ does it using the Laplace function, the other $G_i$'s use the mechanism learnt at the previous step $i-1$.
Thus in total we obtain    4920 pairs for training and 1200 for testing. 
Fig.\ref{fig:ts_real_data}
shows the result of the  mechanism applied to the testing data,
where each color corresponds to   a different user.

\begin{figure*}[t]
   \centering
   \begin{tikzpicture}[every node/.style={node distance=6cm, inner sep=0pt, outer sep=0pt}]
   \node (img1)[label={[font=\small, black, rotate=0]below:(a)}] {\includegraphics[width=0.33\textwidth]{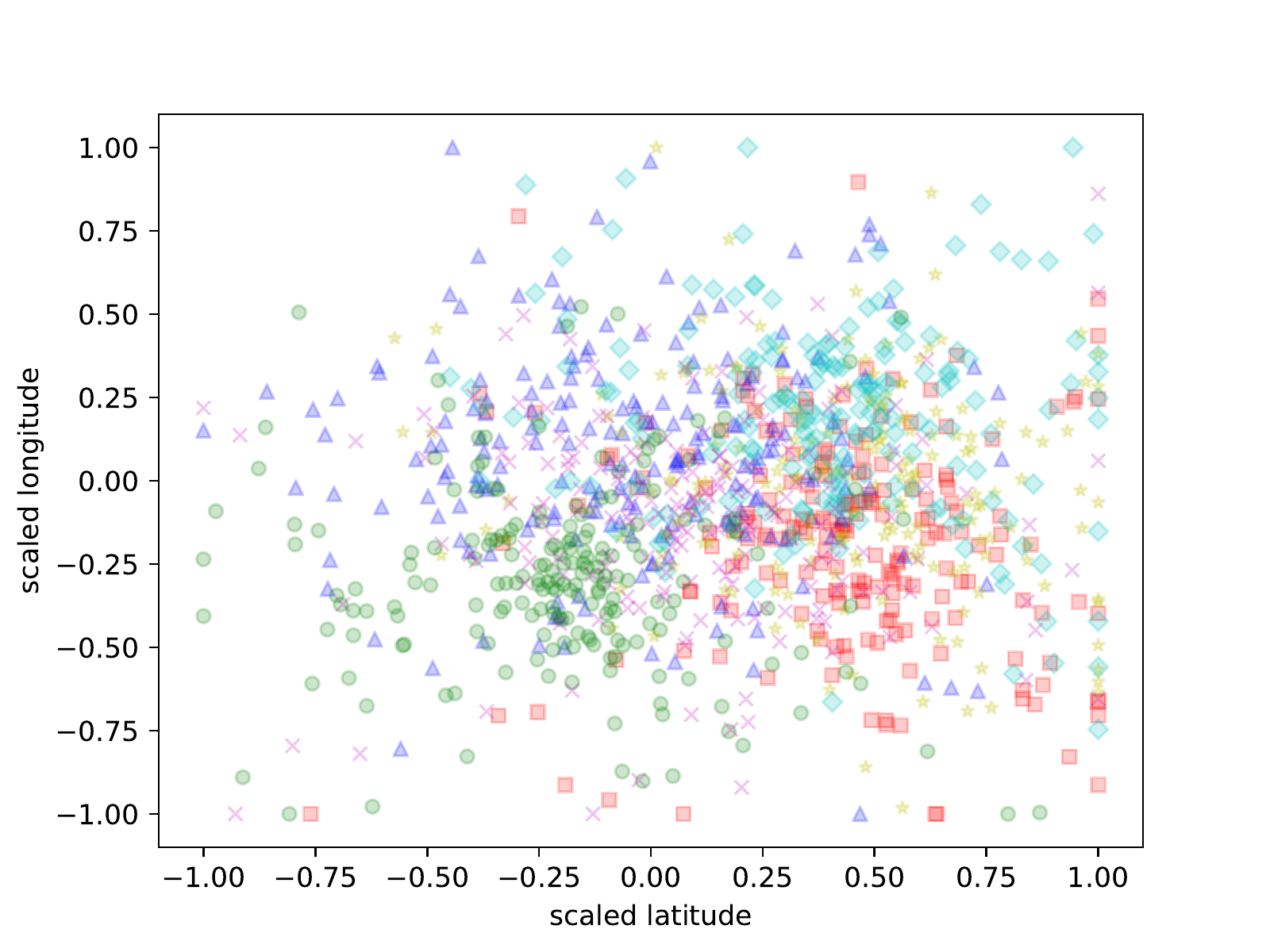}};
   \node[right of=img1, label={[font=\small, black, rotate=0]below:(b)}] (img2) {\includegraphics[width=0.33\textwidth]{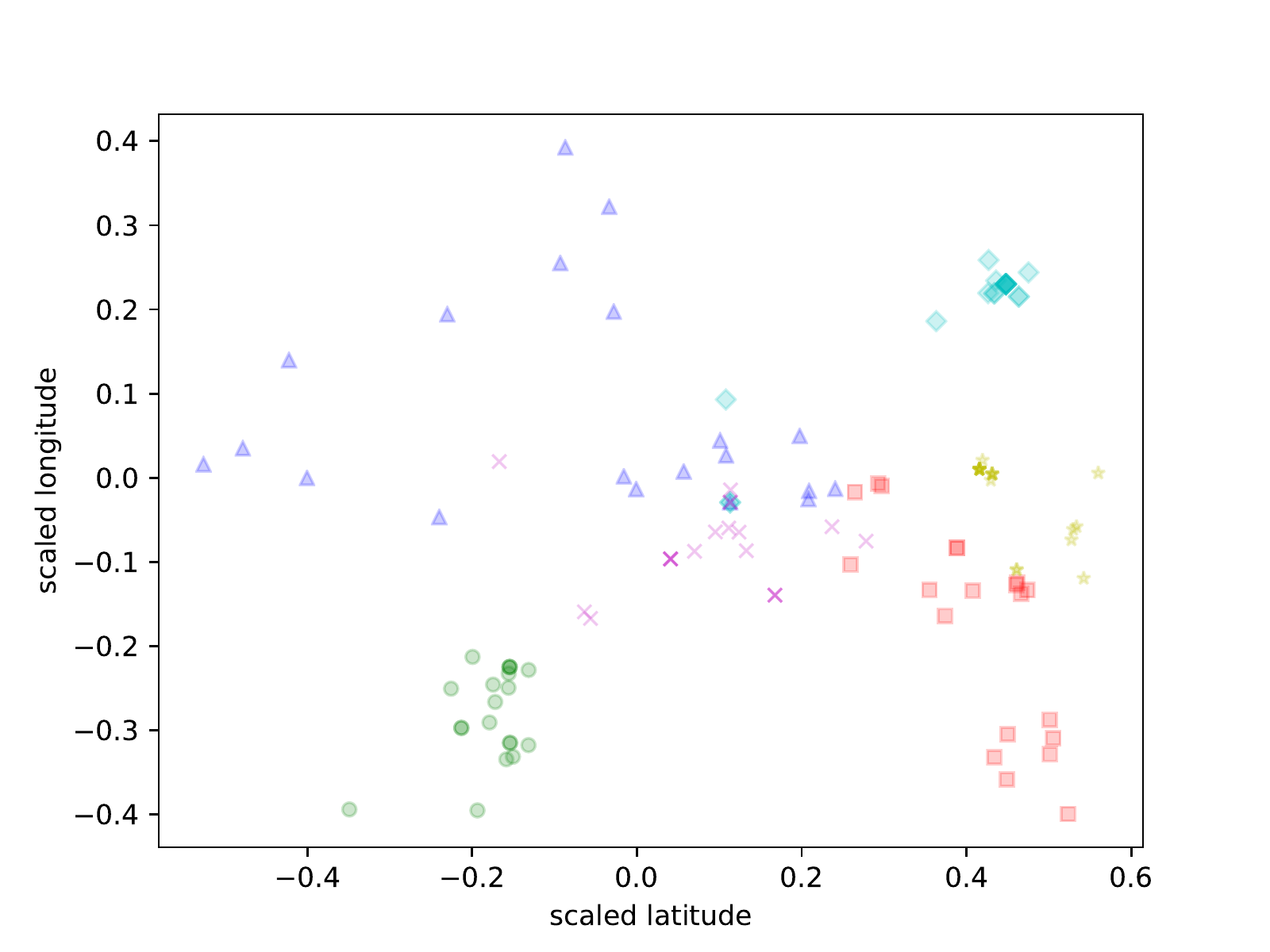}};
   \node[right of=img2,label={[font=\small, black, rotate=0]below:(c)}] (img3) {\includegraphics[width=0.33\textwidth]{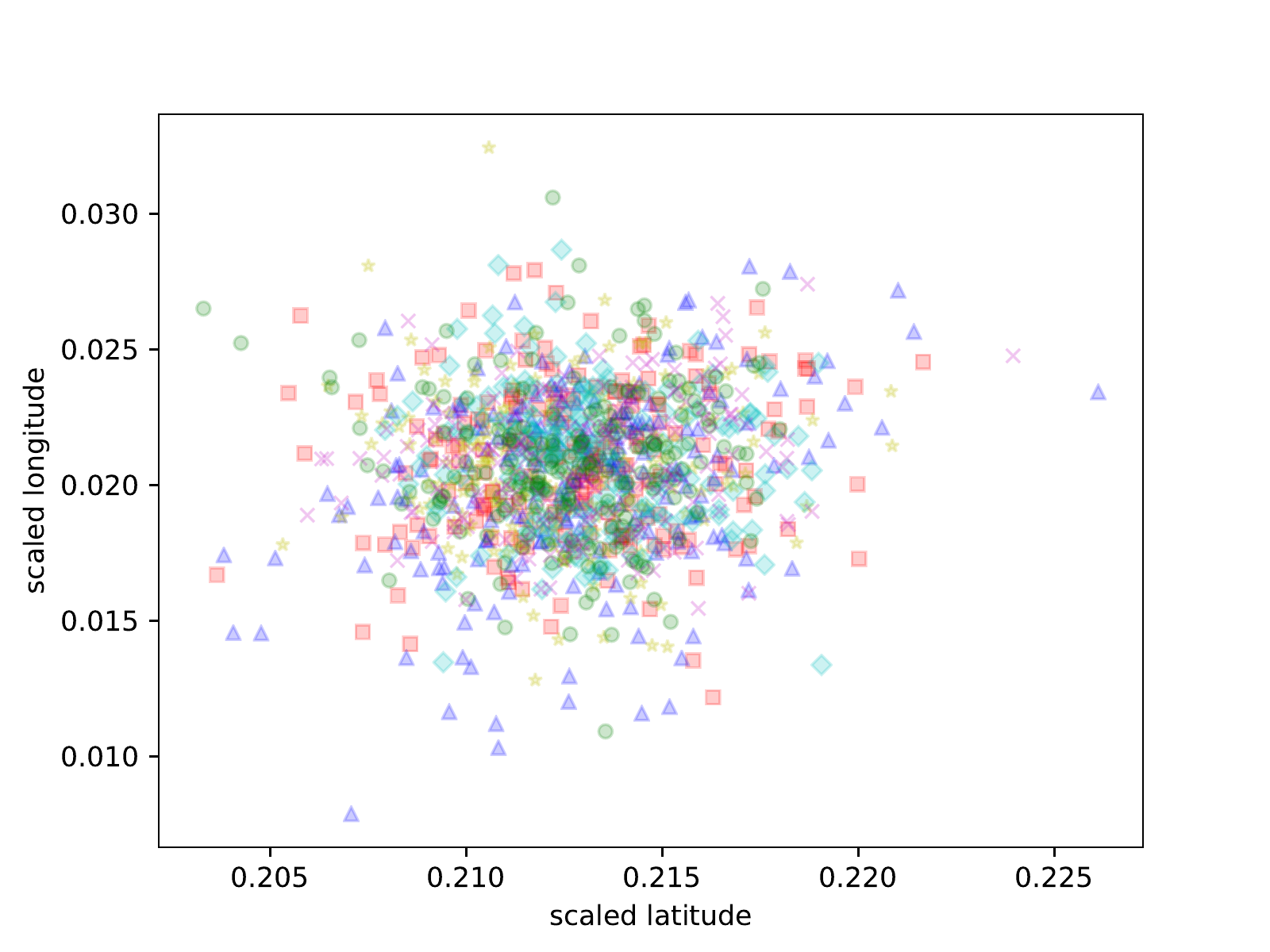}};
   \draw [dashed] (-1.05, 0.59) -- (3.8,1.7);
   \draw [dashed] (-1.05,-0.64) -- (3.85,-1.76);
   \draw [dashed] (1.22, 0.59) -- (8.4,1.7);
   \draw [dashed] (1.22,-0.64) -- (8.4,-1.74);

   \draw [dashed] (-1.05, 0.59) -- (-1.05,-0.64);
   \draw [dashed] (-1.05,-0.64) -- (1.22,-0.64);
   \draw [dashed] (1.22,-0.64) -- (1.22, 0.59);
   \draw [dashed] (1.22, 0.59) -- (-1.05, 0.59);

   \draw [dashed](6.79,0.12) -- (9.8,1.7);
   \draw [dashed](6.79,0.02) -- (9.8,-1.75);
   \draw [dashed](6.87,0.12) -- (14.4,1.7);
   \draw[dashed] (6.87,0.02) -- (14.4,-1.75);

   \draw [dashed](6.79,0.12) -- (6.79,0.02);
   \draw [dashed](6.79,0.02) -- (6.87,0.02);
   \draw [dashed](6.87,0.02) -- (6.87,0.12);
   \draw[dashed] (6.87,0.12) -- (6.79,0.12);
   \end{tikzpicture}
   \caption{\footnotesize Gowalla  testing  data.  From left to right: Laplace noise, no noise, our noise produced using mutual information. $L=1150$m.}
   \label{fig:ts_real_data}
\end{figure*}

\begin{figure*}[t]%
   {\footnotesize
   \begin{subfigure}{1\columnwidth}\centering
   \begin{tabular}{|c|c|c|c|}
   \hline
   \multicolumn{4}{|c|}{\fix{Number of cells}} \\ 
   \hline
   $13\times13$ & $65\times65$ & $130\times130$ & $260\times260$\\
   \hline
   $0.12$ & $0.06$ & $0.04$ & $0.03$\\
   \hline
   \end{tabular}
   \caption{Training data.}
   \label{tab:bayes_tr_real}
   \end{subfigure}\hfill%
   \begin{subfigure}{1\columnwidth}\centering
   \begin{tabular}{|c|c|c|c|}
   \hline
   \multicolumn{4}{|c|}{\fix{Number of cells}} \\ 
   \hline
   $13\times13$ & $65\times65$ & $130\times130$ & $260\times260$\\
   \hline
   $0.11$ & $0.04$ & $0.03$ & $0.03$\\
   \hline
   \end{tabular}
   \caption{Testing data.}
   \label{tab:bayes_tr_real}
   \end{subfigure}}		\caption{\footnotesize  Estimation of $B(X\mid Z)$  on the original version of the data from Gowalla.}
   \label{real_bayes_Gowalla}
\end{figure*}

\begin{figure*}[t]%
   {\footnotesize
   \begin{subfigure}{1\columnwidth}\centering
   \begin{tabular}{|c|c|c|c|c|c|c|c|c|}
   \cline{2-9}
   \multicolumn{1}{c|}{} & \multicolumn{8}{c|}{\fix{Number of cells}} \\ 
   \cline{2-9}
   \multicolumn{1}{c|}{} & \multicolumn{2}{c|}{$13\times13$} & \multicolumn{2}{c|}{$65\times65$} & \multicolumn{2}{c|}{$130\times130$}& \multicolumn{2}{c|}{$260\times260$}\\
   \hline
   \fix{Obf} &  \cellcolor{lightgray}Lap & Our &  \cellcolor{lightgray}Lap & Our&  \cellcolor{lightgray}Lap & Our&  \cellcolor{lightgray}Lap & Our\\
   \hline
   $10$ &  $\cellcolor{lightgray}0.56$ & $0.83$ &  $\cellcolor{lightgray}0.37$ & $0.83$ & $\cellcolor{lightgray}0.19$ & $0.82$ & $\cellcolor{lightgray}0.06$ & $0.80$  \\  
   \hhline{|-|-|-|-|-|-|-|-|-} 
   $100$ & $\cellcolor{lightgray}0.57$ & $0.83$ & $\cellcolor{lightgray}0.53$ & $0.83$& $\cellcolor{lightgray}0.46$ & $0.82$& $\cellcolor{lightgray}0.31$ & $0.81$   \\  
   \hhline{|-|-|-|-|-|-|-|-|-} 
   $200$ & $\cellcolor{lightgray}0.57$ & $0.83$  & $\cellcolor{lightgray}0.55$& $0.83$& $\cellcolor{lightgray}0.50$& $0.82$&$\cellcolor{lightgray}0.40$ & $0.81$   \\  
   \hhline{|-|-|-|-|-|-|-|-|-} 
   $500$ &  $\cellcolor{lightgray}0.57$ & $0.83$  &  $\cellcolor{lightgray}0.56$ & $0.83$ & $\cellcolor{lightgray}0.54$ & $0.82$ & $\cellcolor{lightgray}0.48$ & $0.81$   \\  
   \hline
   \end{tabular}
   \caption{Training data.}
   \label{tab:bayes_tr_real}
   \end{subfigure}\hfill%
   \begin{subfigure}{1\columnwidth}\centering
   \begin{tabular}{|c|c|c|c|c|c|c|c|c|}
   \cline{2-9}
   \multicolumn{1}{c|}{} & \multicolumn{8}{c|}{\fix{Number of cells}} \\ 
   \cline{2-9}
   \multicolumn{1}{c|}{} & \multicolumn{2}{c|}{$13\times13$} & \multicolumn{2}{c|}{$65\times65$} & \multicolumn{2}{c|}{$130\times130$}& \multicolumn{2}{c|}{$260\times260$}\\
   \hline
   \fix{Obf} &  \cellcolor{lightgray}Lap & Our &  \cellcolor{lightgray}Lap & Our&  \cellcolor{lightgray}Lap & Our&  \cellcolor{lightgray}Lap & Our\\
   \hline
   $10$ &  $\cellcolor{lightgray}0.51$ & $0.83$  &  $\cellcolor{lightgray}0.18$ & $0.82$ & $\cellcolor{lightgray}0.07$ & $0.80$ &$\cellcolor{lightgray}0.01$ &$0.79$   \\  
   \hhline{|-|-|-|-|-|-|-|-|-} 
   $100$ & $\cellcolor{lightgray}0.56$ & $0.83$ & $\cellcolor{lightgray}0.45$ & $0.82$& $\cellcolor{lightgray}0.30$ & $0.81$ &$\cellcolor{lightgray}0.13$& $0.80$   \\  
   \hhline{|-|-|-|-|-|-|-|-|-}  
   $200$ &  $\cellcolor{lightgray}0.56$& $0.83$ &  $\cellcolor{lightgray}0.49$ & $0.82$  & $\cellcolor{lightgray}0.38$ & $0.81$&$\cellcolor{lightgray}0.21$ & $0.80$  \\  
   \hhline{|-|-|-|-|-|-|-|-|-}  
   $500$ &  $\cellcolor{lightgray}0.56$ & $0.83$& $\cellcolor{lightgray}0.53$ & $0.82$ & $\cellcolor{lightgray}0.46$ & $0.81$& $\cellcolor{lightgray}0.33$ & $0.80$   \\  
   \hline
   \end{tabular}		\label{tab:bayes_ts_real}
   \caption{Testing data.}
   \end{subfigure}%
   }		\caption{\footnotesize Estimation of $B(X\mid Z)$ on the Gowalla data for the Laplace and for our mechanisms, with $L=1150$m.
   The utility loss  for training and testing data  is 
 $\approx 1127.83\mbox{m}-1132.63$m respectively for the Laplace and 
$\approx  961.38\mbox{m}-979.40$m for ours.  The optimal mechanism gives $B(X\mid Z)=1-\nicefrac{1}{6}=0.83$.}
   \label{real_bayes}
\end{figure*}



\begin{figure*}[t]
   \centering
   \begin{tikzpicture}[every node/.style={node distance=6cm, inner sep=0pt, outer sep=0pt}]
   \node (img1)[label={[font=\small, black, rotate=0]below:(a)}] {\includegraphics[width=0.33\textwidth]{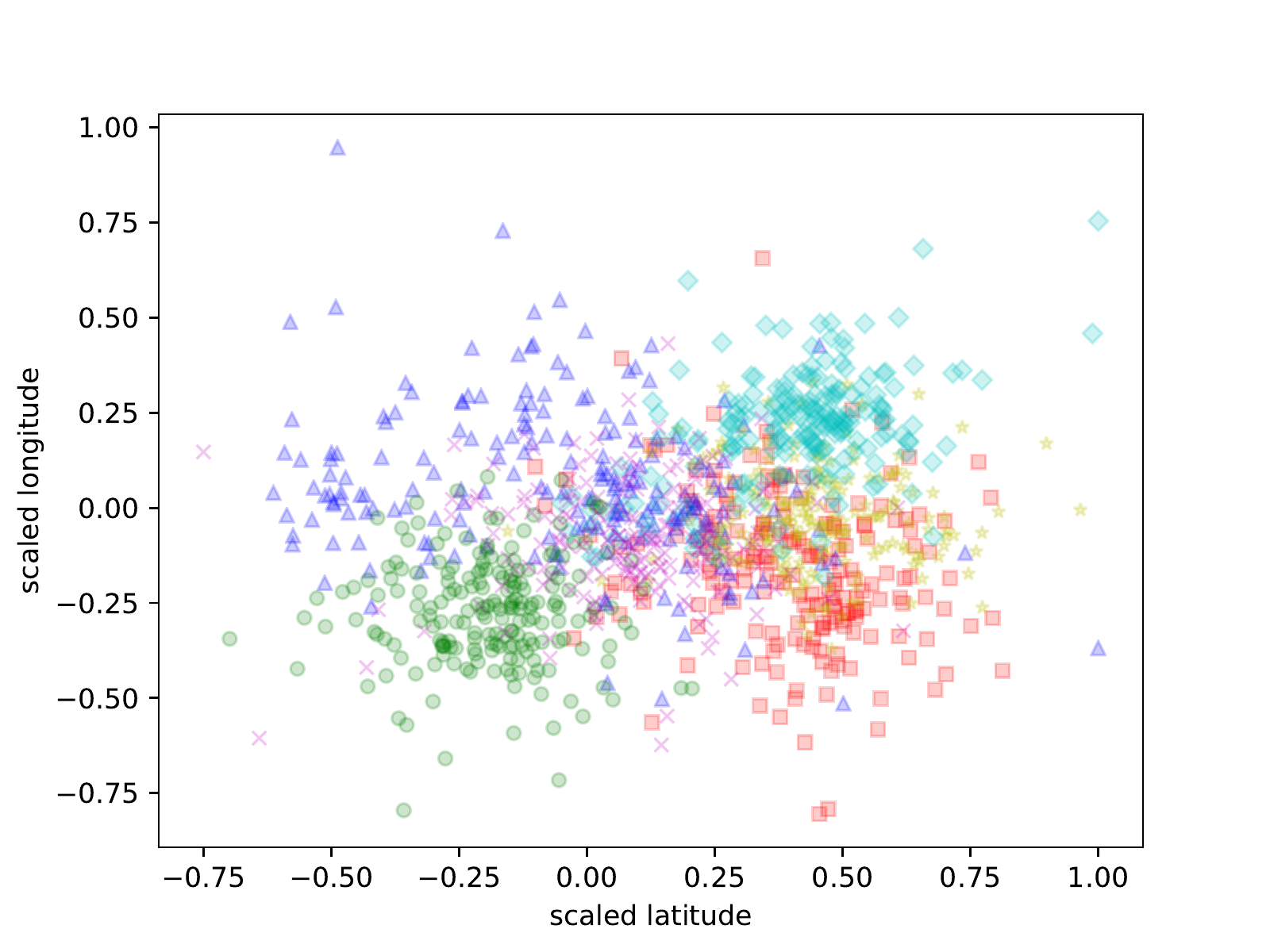}};
   \node[right of=img1,label={[font=\small, black, rotate=0]below:(b)}] (img2) {\includegraphics[width=0.33\textwidth]{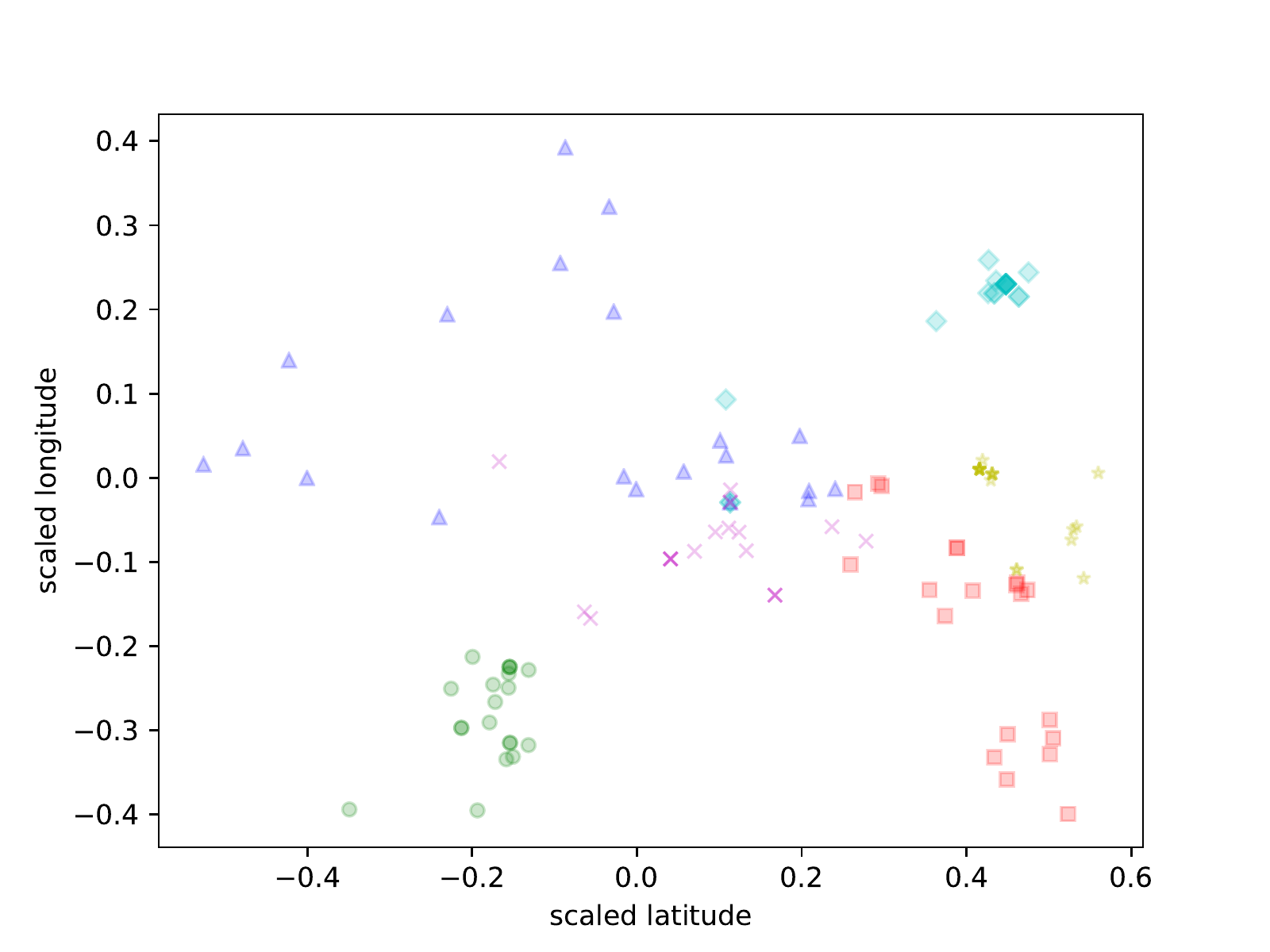}};
   \node[right of=img2,label={[font=\small, black, rotate=0]below:(c)}] (img3) {\includegraphics[width=0.33\textwidth]{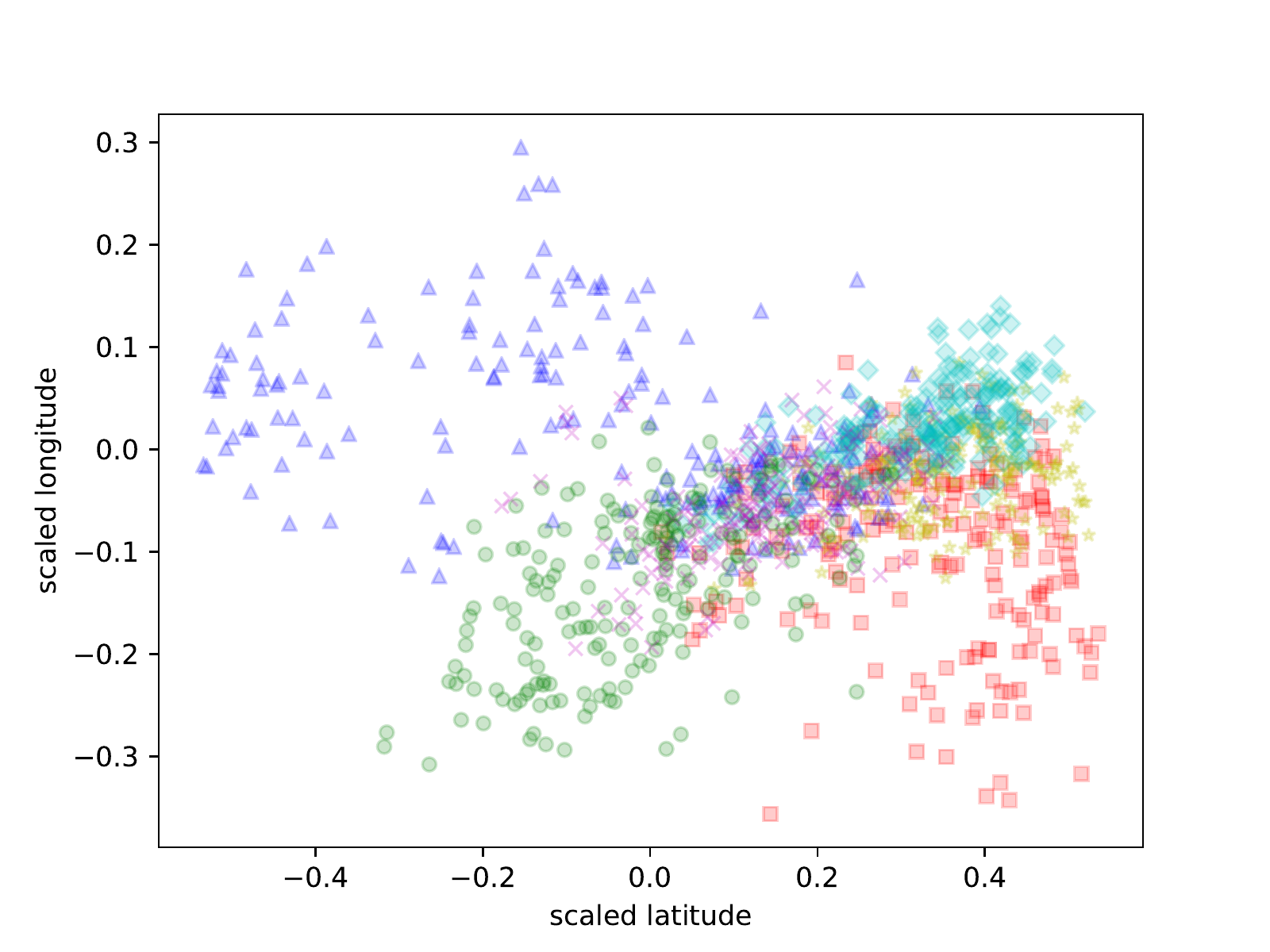}};

      \draw [dashed] (-1.5, 0.56)  -- (3.8,1.7);
      \draw [dashed] (-1.5,-0.87)  -- (3.85,-1.75);
      \draw [dashed] (1.1, 0.56) -- (8.4,1.7);
      \draw [dashed] (1.1,-0.87) -- (8.4,-1.73);

      \draw [dashed] (-1.5, 0.56)  -- (-1.5,-0.87);
      \draw [dashed] (-1.5,-0.87)  -- (1.1,-0.87);
      \draw [dashed] (1.1,-0.87) -- (1.1, 0.56);
      \draw [dashed] (1.1, 0.56) -- (-1.5, 0.56);

   \draw [dashed](3.86,1.27) -- (9.8,1.7);
   \draw [dashed](3.86,-1.42) -- (9.8,-1.75);
   \draw [dashed](8.25,1.27) -- (14.4,1.68);
   \draw[dashed] (8.25,-1.42) -- (14.4,-1.73);

   \draw [dashed](3.86,1.27) -- (3.86,-1.42);
   \draw [dashed](3.86,-1.42) -- (8.25,-1.42) ;
   \draw[dashed] (8.25,-1.42) -- (8.25,1.27);
   \draw [dashed](8.25,1.27) -- (3.86,1.27);

   \end{tikzpicture}
   \caption{\footnotesize  Gowalla testing  data.  From left to right: Laplace noise, no noise, our noise produced using mutual information. $L=518$m.}
   \label{fig:ts_real_new_exp}
\end{figure*}

\subsection{Experiment 3: relaxed  utility constraint}\label{sec:gr}
\label{real_laplace_noise}
In this experiment we study the case of a large upper bound on the utility loss, 
which would potentially allow to achieve the maximum utility. 
We set $L$ and the privacy parameter  (and consequently $\loss[Z\mid W] $) of the planar Laplace as follows:
\begin{equation}
L = 1150\mbox{m} \qquad
\epsilon = \frac{\ln 2}{400}\qquad
\loss[Z\mid W] \approx 1154.15\mbox{m} 
\end{equation}
The results for the Laplace and our method  are illustrated in Fig.\ref{fig:ts_real_data}. 
As we can see, the utility constraint is relaxed enough to allow our method to achieve the maximum privacy. As reported in Fig.\ref{real_bayes}, indeed, the Bayes error is close to that of random guess, namely $1-\nicefrac{1}{6}\approx 0.83$. Fig.~\ref{real_bayes_Gowalla} shows the part of the Bayes error due to the discretization of the domain $\calz$. 
The planar Laplace, on the other hand, confirms the relatively limited level of privacy as observed in the synthetic data.

\subsection{Experiment 4: stricter  utility constraint}\label{sec:gr}
\label{real_laplace_injection_loss_constr}

We consider now a much tighter utility constraint, and we set the parameters of the planar Laplace  as follows:  
\begin{equation}
L = 518\mbox{m} \qquad
\epsilon = \frac{\ln2}{180}\qquad
\loss[Z\mid W] \approx 519.37\mbox{m}  
\end{equation}

The results of the application of the Laplace and of our method to the testing data are shown in
Fig.~\ref{fig:ts_real_new_exp}, 
and the Bayes error is reported in
Fig.~\ref{tab:bayes_ts_real_new}.
\begin{figure*}[t]%
   {\footnotesize
   \begin{subfigure}{1\columnwidth}\centering
   \begin{tabular}{|c|c|c|c|c|c|c|c|c|}
   \cline{2-9}
   \multicolumn{1}{c|}{} & \multicolumn{8}{c|}{\fix{Number of cells}} \\ 
   \cline{2-9}
   \multicolumn{1}{c|}{} & \multicolumn{2}{c|}{$13\times13$} & \multicolumn{2}{c|}{$65\times65$} & \multicolumn{2}{c|}{$130\times130$}& \multicolumn{2}{c|}{$260\times260$}\\
   \hline
   \fix{Obf} &  \cellcolor{lightgray}Lap & Our &  \cellcolor{lightgray}Lap & Our&  \cellcolor{lightgray}Lap & Our&  \cellcolor{lightgray}Lap & Our\\
   \hline
   $10$ &  $\cellcolor{lightgray}0.38$ & $0.50$ &  $\cellcolor{lightgray}0.29$ & $0.42$ & $\cellcolor{lightgray}0.20$ & $0.37$ & $\cellcolor{lightgray}0.27$ & $0.08$  \\  
   \hhline{|-|-|-|-|-|-|-|-|-} 
   $100$ & $\cellcolor{lightgray}0.39$ & $0.51$ & $\cellcolor{lightgray}0.36$ & $0.44$& $\cellcolor{lightgray}0.34$ & $0.43$& $\cellcolor{lightgray}0.27$ & $0.40$   \\  
   \hhline{|-|-|-|-|-|-|-|-|-} 
   $200$ & $\cellcolor{lightgray}0.39$ & $0.51$  & $\cellcolor{lightgray}0.36$& $0.44$& $\cellcolor{lightgray}0.35$& $0.43$&$\cellcolor{lightgray}0.31$ & $0.41$   \\  
   \hhline{|-|-|-|-|-|-|-|-|-} 
   $500$ &  $\cellcolor{lightgray}0.38$ & $0.51$  &  $\cellcolor{lightgray}0.37$ & $0.44$ & $\cellcolor{lightgray}0.36$ & $0.43$ & $\cellcolor{lightgray}0.34$ & $0.42$   \\  
   \hline
   \end{tabular}
   \caption{Training data.}
   \label{tab:bayes_tr_real_new}
   \end{subfigure}\hfill%
   \begin{subfigure}{1\columnwidth}\centering
   \begin{tabular}{|c|c|c|c|c|c|c|c|c|}
   \cline{2-9}
   \multicolumn{1}{c|}{} & \multicolumn{8}{c|}{\fix{Number of cells}} \\ 
   \cline{2-9}
   \multicolumn{1}{c|}{} & \multicolumn{2}{c|}{$13\times13$} & \multicolumn{2}{c|}{$65\times65$} & \multicolumn{2}{c|}{$130\times130$}& \multicolumn{2}{c|}{$260\times260$}\\
   \hline
   \fix{Obf} &  \cellcolor{lightgray}Lap & Our &  \cellcolor{lightgray}Lap & Our&  \cellcolor{lightgray}Lap & Our&  \cellcolor{lightgray}Lap & Our\\
   \hline
   $10$ &  $\cellcolor{lightgray}0.34$ & $0.47$  &  $\cellcolor{lightgray}0.20$ & $0.36$ & $\cellcolor{lightgray}0.08$ & $0.35$ &$\cellcolor{lightgray}0.03$ &$0.12$   \\  
   \hhline{|-|-|-|-|-|-|-|-|-} 
   $100$ & $\cellcolor{lightgray}0.37$ & $0.49$ & $\cellcolor{lightgray}0.32$ & $0.41$& $\cellcolor{lightgray}0.25$ & $0.38$ &$\cellcolor{lightgray}0.15$& $0.32$   \\  
   \hhline{|-|-|-|-|-|-|-|-|-}  
   $200$ &  $\cellcolor{lightgray}0.37$& $0.48$ &  $\cellcolor{lightgray}0.33$ & $0.41$  & $\cellcolor{lightgray}0.30$ & $0.39$&$\cellcolor{lightgray}0.21$ & $0.35$  \\  
   \hhline{|-|-|-|-|-|-|-|-|-}  
   $500$ &  $\cellcolor{lightgray}0.37$ & $0.49$& $\cellcolor{lightgray}0.35$ & $0.42$ & $\cellcolor{lightgray}0.32$ & $0.40$& $\cellcolor{lightgray}0.28$ & $0.38$   \\  
   \hline
   \end{tabular}		
   \caption{Testing data.}
   \end{subfigure}%
   }		
   \caption{\footnotesize Estimation of $B(X\mid Z)$ on the Gowalla data for the Laplace and for our mechanisms, with $L=518$m.
   The utility loss 
  fort training and testing data 
is
$\approx 523.40\mbox{m}-535.21$m respectively for the Laplace and 
$\approx 487.34 \mbox{m}-502.89$m for ours.
   We could not compute the optimal mechanism due to the high complexity of the linear program.}
   \label{tab:bayes_ts_real_new}
\end{figure*}
\balance
As the grid becomes finer, both the planar
Laplace and our method become more sensitive to the number of samples, in the
sense that the (approximation of) the Bayes error grows considerably as the
number of samples increases. 
This is not surprising: when the cells are small they tend to have
a limited number of hits. Therefore the number of hits whose class is in minority 
(in a given cell), and hence not selected as the best guess, is limited. Note that these
minority hits are those that contribute to the Bayes error.	

\section{Conclusion and future work}				
We have proposed a method based on adversarial nets to generate obfuscation mechanisms with a good tradeoff between privacy and utility. 
The crucial feature of our approach is that the target function to minimize is the mutual information rather than the cross entropy. 
We have applied  our method to the case of location privacy, and experimented with a set of synthetic data and with data from Gowalla. 
We have compared the mechanism obtained via our approach with the planar Laplace, the typical mechanism used for geo-indistinguishability, obtaining favorable results. 

\changed{Although the experiments here were limited to the case of location privacy, 
our setting is very general and can in principle be applied to any 
kind of sensitive and public data with finite domain.  
The same holds for the notion of utility: in this paper we have considered the distortion, i.e. the expected distance between 
the original value $w$ and its corresponding noisy version $z$, but our framework can accommodate any notion  of loss
on which the gradient descent is applicable.
In the future we plan to explore the validity of our approach to other privacy scenarios and  
other loss utility functions. 
We  also plan to study the estimation of mutual information by means of other functions 
which are more suitable for neural networks training in order to reduce the computational burden.
}

\changed{		
We also plan to explore the possibility of using other notions of privacy. 
In particular, we are considering using directly the  Bayes error $B(X,Z)$ in the objective function. 
The main challenge is when the domain of $Z$ is too large, as it makes unfeasible to estimate accurately 
 $B(X,Z)$.   We are currently exploring an approach based on partitioning the domain of $Z$, so to reduce its cardinality. 
We are also interested in considering  the   notion $g$-vulnerability~\cite{Alvim:12:CSF} which generalizes (the converse of) the Bayes error  
to the case in which the adversary's attack is rewarded by a  generic gain functions $g$. 
Our main challenge, however, is to extend our framework to deal with worst-case notions of privacy, such as differential privacy and geo-indistinguishability. 
To this end, we  plan to start with a variant of differential privacy called R\'enyi differential privacy~\cite{Mironov:17:CSF}, which is formulated in terms of divergence, and 
explore the applicability of the learning-based method for estimating  $f$-divergences  proposed in \cite{Rubenstein:19:NIPS}.}

Moreover we plan to enhance the flexibility of the constraint on distortion
(in the loss function), by requiring it to be \emph{per user} rather than
global. More specifically, we aim at producing obfuscation mechanisms that
satisfy constraints stating that the expected displacement \emph{for each
user} is at most up to a certain threshold. The motivation is that different
users may have different requirements. Another potential application is to
encompass a notion of \emph{fairness}, that can be obtained by requiring that
the threshold is the same for everybody.

\subsection{Acknowledgement}{This research was supported by DATAIA Convergence Institute as part of the ``Programme d'Investissement d'Avenir'' (ANR-17-CONV-0003), operated by Inria and CentraleSup\'elec. It was also supported by the ANR project REPAS, and by the Inria/DRI project LOGIS. The work of Palamidessi was supported by the project HYPATIA, funded by  the European Research Council (ERC) under the European Union's Horizon 2020 research and innovation program, grant agreement n. 835294.}

\bibliographystyle{IEEEtran}
\bibliography{../short}

\begin{thebibliography}{10}
\providecommand{\url}[1]{#1}
\csname url@samestyle\endcsname
\providecommand{\newblock}{\relax}
\providecommand{\bibinfo}[2]{#2}
\providecommand{\BIBentrySTDinterwordspacing}{\spaceskip=0pt\relax}
\providecommand{\BIBentryALTinterwordstretchfactor}{4}
\providecommand{\BIBentryALTinterwordspacing}{\spaceskip=\fontdimen2\font plus
\BIBentryALTinterwordstretchfactor\fontdimen3\font minus
  \fontdimen4\font\relax}
\providecommand{\BIBforeignlanguage}[2]{{%
\expandafter\ifx\csname l@#1\endcsname\relax
\typeout{** WARNING: IEEEtran.bst: No hyphenation pattern has been}%
\typeout{** loaded for the language `#1'. Using the pattern for}%
\typeout{** the default language instead.}%
\else
\language=\csname l@#1\endcsname
\fi
#2}}
\providecommand{\BIBdecl}{\relax}
\BIBdecl

\bibitem{Fredrikson:15:CCS}
M.~Fredrikson, S.~Jha, and T.~Ristenpart, ``Model inversion attacks that
  exploit confidence information and basic countermeasures,'' in \emph{Proc. of
  CCS}, ser. CCS '15.\hskip 1em plus 0.5em minus 0.4em\relax ACM, 2015, pp.
  1322--1333.

\bibitem{Shokri:17:SP}
R.~Shokri, M.~Stronati, C.~Song, and V.~Shmatikov, ``Membership inference
  attacks against machine learning models,'' in \emph{Proc. of S{\&}P}.\hskip
  1em plus 0.5em minus 0.4em\relax {IEEE} Computer Society, 2017, pp. 3--18.

\bibitem{Pyrgelis:18:NDSS}
A.~Pyrgelis, C.~Troncoso, and E.~D. Cristofaro, ``Knock knock, who's there?
  membership inference on aggregate location data,'' in \emph{Proc. of
  NDSS}.\hskip 1em plus 0.5em minus 0.4em\relax The Internet Society, 2018.

\bibitem{Hayes:19:PETS}
J.~Hayes, L.~Melis, G.~Danezis, and E.~D. Cristofaro, ``{LOGAN:} membership
  inference attacks against generative models,'' \emph{PoPETs}, vol. 2019,
  no.~1, pp. 133--152, 2019.

\bibitem{Melis:19:SP}
L.~Melis, C.~Song, E.~D. Cristofaro, and V.~Shmatikov, ``Exploiting unintended
  feature leakage in collaborative learning,'' in \emph{Proc. of S{\&}P}.\hskip
  1em plus 0.5em minus 0.4em\relax IEEE, 2019, pp. 497--512.

\bibitem{Shokri:17:TPS}
R.~Shokri, G.~Theodorakopoulos, and C.~Troncoso, ``Privacy games along location
  traces: {A} game-theoretic framework for optimizing location privacy,''
  \emph{{ACM} Trans. on Privacy and Security}, vol.~19, no.~4, pp. 11:1--11:31,
  2017.

\bibitem{Bordenabe:14:CCS}
N.~E. Bordenabe, K.~Chatzikokolakis, and C.~Palamidessi, ``Optimal
  geo-indistinguishable mechanisms for location privacy,'' in \emph{Proc. of
  CCS}, 2014.

\bibitem{Cover:06:BOOK}
T.~M. Cover and J.~A. Thomas, \emph{Elements of Information Theory},
  2nd~ed.\hskip 1em plus 0.5em minus 0.4em\relax J. Wiley \& Sons, Inc., 2006.

\bibitem{Andres:13:CCS}
M.~E. Andr{\'e}s, N.~E. Bordenabe, K.~Chatzikokolakis, and C.~Palamidessi,
  ``Geo-indistinguishability: differential privacy for location-based
  systems,'' in \emph{Proc. of CCS}.\hskip 1em plus 0.5em minus 0.4em\relax
  ACM, 2013, pp. 901--914.

\bibitem{Chatzikokolakis:17:POPETS}
K.~Chatzikokolakis, E.~ElSalamouny, and C.~Palamidessi, ``Efficient utility
  improvement for location privacy,'' \emph{Proceedings on Privacy Enhancing
  Technologies (PoPETs)}, vol. 2017, no.~4, pp. 308--328, 2017.

\bibitem{Shokri:15:PETS}
R.~Shokri, ``Privacy games: Optimal user-centric data obfuscation,''
  \emph{Proceedings on Privacy Enhancing Technologies}, vol. 2015, no.~2, pp.
  299--315, 2015.

\bibitem{Oya:17:CCS}
S.~Oya, C.~Troncoso, and F.~P{\'e}rez-Gonz\'{a}lez, ``Back to the drawing
  board: Revisiting the design of optimal location privacy-preserving
  mechanisms,'' in \emph{Proc. of CCS}.\hskip 1em plus 0.5em minus 0.4em\relax
  ACM, 2017, pp. 1959--1972.

\bibitem{Goodfellow:14:NIPS}
I.~Goodfellow, J.~Pouget-Abadie, M.~Mirza, B.~Xu, D.~Warde-Farley, S.~Ozair,
  A.~Courville, and Y.~Bengio, ``Generative adversarial nets,'' in
  \emph{Advances in Neural Information Processing Systems 27}.\hskip 1em plus
  0.5em minus 0.4em\relax Curran Associates, Inc., 2014, pp. 2672--2680.

\bibitem{Abadi:16:CoRR}
M.~Abadi and D.~G. Andersen, ``Learning to protect communications with
  adversarial neural cryptography,'' \emph{CoRR}, vol. abs/1610.06918, 2016.

\bibitem{Santhi:06:ACCCC}
N.~Santhi and A.~Vardy, ``On an improvement over {R}{\'e}nyi's equivocation
  bound,'' 2006, presented at the 44-th Annual Allerton Conf. on Communication,
  Control, and Computing, September 2006. Available at
  http://arxiv.org/abs/cs/0608087.

\bibitem{Zhu:05:ICDCS}
Y.~Zhu and R.~Bettati, ``Anonymity vs. information leakage in anonymity
  systems,'' in \emph{Proc. of ICDCS}.\hskip 1em plus 0.5em minus 0.4em\relax
  IEEE, 2005, pp. 514--524.

\bibitem{Chatzikokolakis:08:JCS}
K.~Chatzikokolakis, C.~Palamidessi, and P.~Panangaden, ``On the {Bayes} risk in
  information-hiding protocols,'' \emph{J. of Comp. Security}, vol.~16, no.~5,
  pp. 531--571, 2008.

\bibitem{McIver:10:ICALP}
A.~McIver, L.~Meinicke, and C.~Morgan, ``{Compositional Closure for Bayes Risk
  in Probabilistic Noninterference},'' in \emph{Proc. of ICALP}, ser. LNCS,
  vol. 6199.\hskip 1em plus 0.5em minus 0.4em\relax Springer, 2010, pp.
  223--235.

\bibitem{Cherubin:17:POPETS1}
G.~Cherubin, ``Bayes, not na{\"\i}ve: Security bounds on website fingerprinting
  defenses,'' \emph{PoPETs}, vol. 2017, no.~4, pp. 215--231, 2017.

\bibitem{Alvim:16:CSF}
M.~S. Alvim, K.~Chatzikokolakis, A.~McIver, C.~Morgan, C.~Palamidessi, and
  G.~Smith, ``Axioms for information leakage,'' in \emph{Proc. of CSF}, 2016,
  pp. 77--92.

\bibitem{Smith:09:FOSSACS}
G.~Smith, ``On the foundations of quantitative information flow,'' in
  \emph{Proc. of FOSSACS}, ser. LNCS, vol. 5504.\hskip 1em plus 0.5em minus
  0.4em\relax Springer, 2009, pp. 288--302.

\bibitem{Dwork:06:TCC}
C.~Dwork, F.~Mcsherry, K.~Nissim, and A.~Smith, ``Calibrating noise to
  sensitivity in private data analysis,'' in \emph{Proc. of TCC}, ser. LNCS,
  vol. 3876.\hskip 1em plus 0.5em minus 0.4em\relax Springer, 2006, pp.
  265--284.

\bibitem{Duchi:13:FOCS}
J.~C. Duchi, M.~I. Jordan, and M.~J. Wainwright, ``Local privacy and
  statistical minimax rates,'' in \emph{Proc. of FOCS}.\hskip 1em plus 0.5em
  minus 0.4em\relax {IEEE} Computer Society, 2013, pp. 429--438.

\bibitem{Chatzikokolakis:13:PETS}
K.~Chatzikokolakis, M.~E. Andr{\'e}s, N.~E. Bordenabe, and C.~Palamidessi,
  ``{Broadening the scope of Differential Privacy using metrics},'' in
  \emph{Proc. of PETS}, ser. LNCS, vol. 7981.\hskip 1em plus 0.5em minus
  0.4em\relax Springer, 2013, pp. 82--102.

\bibitem{Alvim:11:ICALP}
M.~S. Alvim, M.~E. Andr\'es, K.~Chatzikokolakis, and C.~Palamidessi, ``{On the
  relation between Differential Privacy and Quantitative Information Flow},''
  in \emph{Proc. of ICALP}, ser. LNCS, vol. 6756.\hskip 1em plus 0.5em minus
  0.4em\relax Springer, 2011, pp. 60--76.

\bibitem{De:12:TCC}
A.~De, ``Lower bounds in differential privacy,'' in \emph{Proc. of TCC}.\hskip
  1em plus 0.5em minus 0.4em\relax Springer, 2012, pp. 321--338.

\bibitem{Cuff:16:CCS}
P.~Cuff and L.~Yu, ``Differential privacy as a mutual information constraint,''
  in \emph{Proc. of CCS}, ser. CCS '16.\hskip 1em plus 0.5em minus 0.4em\relax
  ACM, 2016, pp. 43--54.

\bibitem{Tripathy:17:ArXiv}
A.~Tripathy, Y.~Wang, and P.~Ishwar, ``Privacy-preserving adversarial
  networks,'' \emph{CoRR}, vol. abs/1712.07008, 2017.

\bibitem{Huang:17:Entropy}
C.~Huang, P.~Kairouz, X.~Chen, L.~Sankar, and R.~Rajagopal,
  ``\BIBforeignlanguage{English (US)}{Context-aware generative adversarial
  privacy},'' \emph{\BIBforeignlanguage{English (US)}{Entropy}}, vol.~19,
  no.~12, 2017.

\bibitem{Jia:18:USENIX}
J.~Jia and N.~Z. Gong, ``Attriguard: A practical defense against attribute
  inference attacks via adversarial machine learning,'' in \emph{27th {USENIX}
  Security Symposium ({USENIX} Security 18)}.\hskip 1em plus 0.5em minus
  0.4em\relax {USENIX} Association, 2018, pp. 513--529.

\bibitem{Hamm:17:JMLR}
J.~Hamm, ``Minimax filter: Learning to preserve privacy from inference
  attacks,'' \emph{J. Mach. Learn. Res.}, vol.~18, no.~1, pp. 4704--4734, 2017.

\bibitem{Hayes:18:NIPS}
J.~Hayes and O.~Ohrimenko, ``Contamination attacks and mitigation in
  multi-party machine learning,'' in \emph{Proc. of NIPS}.\hskip 1em plus 0.5em
  minus 0.4em\relax Curran Associates Inc., 2018, pp. 6604--6616.

\bibitem{Edwards:16:ICLR}
H.~Edwards and A.~J. Storkey, ``Censoring representations with an adversary,''
  in \emph{Proc. of ICLR}, 2016.

\bibitem{Ren:18:ECCV}
Z.~Ren, Y.~J. Lee, and M.~S. Ryoo, ``Learning to anonymize faces for privacy
  preserving action detection,'' in \emph{Proc. of ECCV}, ser. LNCS, vol.
  11205.\hskip 1em plus 0.5em minus 0.4em\relax Springer, 2018, pp. 639--655.

\bibitem{Beaulieu-Jones:19:CIRCOUTCOMES}
B.~K. Beaulieu-Jones, Z.~S. Wu, C.~Williams, R.~Lee, S.~P. Bhavnani, J.~B.
  Byrd, and C.~S. Greene, ``Privacy-preserving generative deep neural networks
  support clinical data sharing,'' \emph{Circulation: Cardiovascular Quality
  and Outcomes}, vol.~12, no.~7, p. e005122, 2019.

\bibitem{Belghazi:18:ICML}
M.~I. Belghazi, A.~Baratin, S.~Rajeshwar, S.~Ozair, Y.~Bengio, A.~Courville,
  and D.~Hjelm, ``Mutual information neural estimation,'' in \emph{Proceedings
  of the 35th Int. Conf. on Machine Learning}, ser. Proceedings of Machine
  Learning Research, vol.~80.\hskip 1em plus 0.5em minus 0.4em\relax PMLR,
  2018, pp. 531--540.

\bibitem{Nowozin:16:NIPS}
S.~Nowozin, B.~Cseke, and R.~Tomioka, ``f-gan: Training generative neural
  samplers using variational divergence minimization,'' in \emph{Proc. of
  NIPS}, 2016, pp. 271--279.

\bibitem{Basciftci:16:ITAW}
Y.~O. Basciftci, Y.~Wang, and P.~Ishwar, ``On privacy-utility tradeoffs for
  constrained data release mechanisms,'' in \emph{Proc. of ITA}.\hskip 1em plus
  0.5em minus 0.4em\relax {IEEE}, 2016, pp. 1--6.

\bibitem{Glorot:10:PMLR}
X.~Glorot and Y.~Bengio, ``Understanding the difficulty of training deep
  feedforward neural networks,'' in \emph{Proceedings of the Thirteenth Int.
  Conf. on AI and Statistics}, ser. Proceedings of Machine Learning Research,
  vol.~9.\hskip 1em plus 0.5em minus 0.4em\relax PMLR, 2010, pp. 249--256.

\bibitem{Keskar:18:CoRR}
N.~S. Keskar, D.~Mudigere, J.~Nocedal, M.~Smelyanskiy, and P.~T.~P. Tang, ``On
  large-batch training for deep learning: Generalization gap and sharp
  minima,'' \emph{CoRR}, vol. abs/1609.04836, 2016.

\bibitem{Gowalla}
J.~Leskovec and A.~Krevl, ``{The Gowalla dataset (Part of the SNAP
  collection)},'' \url{https://snap.stanford.edu/data/loc-gowalla.html}.

\bibitem{Alvim:12:CSF}
M.~S. Alvim, K.~Chatzikokolakis, C.~Palamidessi, and G.~Smith, ``Measuring
  information leakage using generalized gain functions,'' in \emph{Proc. of
  CSF}, 2012, pp. 265--279.

\bibitem{Mironov:17:CSF}
I.~Mironov, ``R{\'e}nyi differential privacy,'' in \emph{Proc. of CSF}, 2017,
  pp. 263--275.

\bibitem{Rubenstein:19:NIPS}
P.~K. Rubenstein, O.~Bousquet, J.~Djolonga, C.~Riquelme, and I.~O. Tolstikhin,
  ``Practical and consistent estimation of f-divergences,'' in \emph{Proc. of
  NIPS}, 2019, pp. 4072--4082.

\end{thebibliography}

\section{Appendix}

\subsection{Proofs of the results in the paper}
\ConvexityI*
\begin{proof}
Let us recall that 
\begin{equation}
X \leftrightarrow W \leftrightarrow Z \leftrightarrow Y.
\end{equation}
represents a Markov chain where:
\begin{itemize}
\item the relation between the two random variables $X$ and $W$ is defined by the
data distribution $P_{X,W}$,
\item the relation between $Z$ and $Y$ depends only on the chosen classifier according to $P_{Y|Z}$, 
\item the relation between $Z$ and $W$ can be described by the variable $P_{Z|W}$ 
\end{itemize}
If we consider $X$ as the secret input and $Y$ as the observable output of a stochastic channel,
the mutual information between the two random variable can be expressed as
\begin{equation}
I(X;Y) = g(P_{Y|X}).
\end{equation} 
We know from \cite{Cover:06:BOOK} that $g(\cdot)$ is a convex function wrt $P_{Y|X}$. 
We can express $P_{Y|X}$ as:
\begin{equation}
P_{Y|X}(y|x) = \frac{\sum_{zw}P_{X,W}(x, w)P_{Z|W}(z| w)P_{Y|Z}(y|z)}{\sum_{w}P_{X,W}(x, w)}.
\label{eq:channel_matrix}
\end{equation}
Eq.~\eqref{eq:channel_matrix} represents a linear function of the  variable $P_{Z|W}$ (all the other probabilities are constant). Hence $f(P_{Z|W})=g(h(P_{Z|W}))$ where $g(\cdot)$ is convex and 
$h(\cdot)$ is linear. The composition of a convex function with a linear one is a convex function and this concludes the proof.
\end{proof}

\equilibriumI*
\begin{proof}
Given that~\cref{eq:markov}
represents a Markov chain, 
$
X \leftrightarrow Z \leftrightarrow Y 
$
represents one as well.
From the data processing inequality it follows that: 
\begin{equation}
I(X;Y)\leq I(X;Z)\, .
\end{equation}
Hence we have:
\begin{equation}
\max_CI(X;Y)\leq  I(X;Z)\, ,
\end{equation}
and therefore:
\begin{equation}
\min_G\max_CI(X;Y)\leq \min_GI(X;Z)\, .
\end{equation}
\end{proof}

\vspace{0.5cm}
\LowerBound*
\renewcommand*{\arraystretch}{1.8}
\begin{proof}
\small
\begin{align*}
&B(X\mid Z)=\\
& \sum_z P_Z(z) (1-\max_x P_{X|Z} (x\mid z))\\ 
&= 1-\sum_z P_Z(z)  \max_x P_{X|Z} (x\mid z)\\
&= 1-\sum_y P_Y(y)   \sum_z P_{Z|Y}(z|y) \max_x P_{X|Z} (x\mid z)\\
&\leq 1-\sum_y P_Y(y)   \max_x  \sum_z P_{Z|Y}(z|y)P_{X|Z} (x\mid z)\\
&= 1-\sum_y P_Y(y)   \max_x  P_{X|Y} (x\mid y)\\
&= B(X\mid Y)\\
\end{align*}\\[-5ex]
\end{proof}
\renewcommand*{\arraystretch}{1}

\balance

\vspace{0.5cm}
\propMICE*
\begin{proof}
$P_{XW}$ is fixed, and   therefore $H(X)$ is fixed as well. 
Hence the goal of $C$ of maximizing $I(X;Y)$ reduces to maximizing $-H(X|Y)$. 
Consider two mechanisms, $P_{Z_1| W}$  and $P_{Z_2| W}$, and the distributions induced on  $X$ 
by $Z_1$ and $Z_2$ respectively, namely $P_{X| Z_1}$ and $P_{X| Z_2}$.
Consider the predictions $P_{Y_1| Z_1}$ and $P_{Y_1| Z_2}$ that $C$ obtains by 
minimizing  the cross entropy with  $P_{X| Z_1}$ and $P_{X| Z_2}$ respectively. 

It is well known that $\argmin_Q\CE(P,Q) = P$, hence we have $P_{Y_1| Z_1} =  P_{X| Z_1}$ and 
$P_{Y_2| Z_2} =  P_{X| Z_2}$. (Note that $X$, $Y_1$ and $Y_2$ all have the same domain $\calx$.)
Hence, taking into account that $X\leftrightarrow Z_1 \leftrightarrow Y_1$ and $X\leftrightarrow Z_2 \leftrightarrow Y_2$ (i.e., they are Markov chains),  we have:
\begin{equation*}\label{eq:conditional_ce}
\scriptsize
\begin{array}{c}
- H(X|Y_1)  \leq   - H(X|Y_2) \\ 
\\[-1ex]
 \mbox{iff}\\ 
\\[-1ex]
  \sum_z P_{Z_1}(z) \sum_{xy} \, P_{X|Z_1=z}(x|z)P_{Y_1|Z_1=z}(y|z)  \log P_{Y_1|Z_1=z}(y|z) \\ 
 \leq \\
 \sum_z P_{Z_2}(z) \sum_{xy} \, P_{X|Z_2=z}(x|z)P_{Y_2|Z_2=z}(y|z) \log P_{Y_2|Z_2=z}(y|z)\\ 
\\[-1ex]
 \mbox{iff}\\ 
\\[-1ex]
 \sum_z P_{Z_1}(z) \sum_{xy} \, P_{X|Z_1=z}(x|z)P_{X|Z_1=z}(y|z) \log P_{X|Z_1=z}(x|z)\\  
   \leq \\
  \sum_z P_{Z_2}(z) \sum_{xy} \, P_{X|Z_2=z}(x|z)P_{X|Z_2=z}(y|z)  \log P_{X|Z_2=z}(x|z)\\ 
\\[-1ex]
\mbox{iff}\\ 
\\[-1ex]
- H(X|Z_1 )  \leq   - H(X|Z_2).   
\end{array}
\end{equation*}
Finally, observe that 
\begin{equation}\label{eq:conditional_ce}
\scriptsize
-H(X|Z_1 ) \,\leq \, -H(X|Z_2) \quad \mbox{implies} \quad \max_{P_{Y_1|Z}} I(X; Y_1) \, \leq \, \max_{P_{Y_2|Z}}  I(X; Y_2) 
\end{equation}
and recall that $P_{Y_i|Z}$ is the prediction produced by $C$. 
\end{proof}

\changed{
\propCEB*
\begin{proof}
Let $P_{Y\mid Z} = \argmin_\mathit{C} \; \CE(X,Y)$ and let $f^*$ be defined as in \eqref{eqn:bestprediction}. We note that, for every $z\in\calz$:
\renewcommand*{\arraystretch}{1.8}
\[
\begin{array}{l}
\sum_{x} P_{X\mid Z}(x|z) \overline{\mathbb{1}}_{f^*}(x,z)\\ 
\begin{array}{cl}
=& {\displaystyle \sum_{x\neq f^*(z)}  P_{X|Z} (x,z)}\\  
=& {\displaystyle \sum_{x\neq\argmax_y P_{Y |Z} (y|z)}  P_{X|Z} (x,z)}\\  
=& 1- {\displaystyle \sum_{x=\argmax_y P_{Y |Z} (y|z)}  P_{X|Z} (x,z)}\\  
=&  1-P_{X|Z} (\argmax_x P_{X |Z} (x|z)\mid z))\\  
=& 1- \max_x P_{X|Z} (x\mid z))\\ 
\end{array}
\end{array}
\]
where the first equality is due to the definition of $\overline{\mathbb{1}}_{f^*}$, the second one is due to the definition of $f^*$, and the last but one follows from the fact that 
$P_{Y\mid Z} = \argmin_\mathit{C} \; \CE(X,Y)$ and therefore, by \autoref{prop:propMICE}, $P_{Y\mid Z} = P_{X\mid Z}$. 
Hence, we have:
{\small
\[
\begin{array}{l}
{R(f^*)}\\ 
\begin{array}{cl}
=& \sum_{xz} P_{X,Z}(x,z) \overline{\mathbb{1}}_f^*(x,z)\\ 
=& \sum_{xz} P_Z(z)   P_{X|Z}(x|z) \overline{\mathbb{1}}_f^*(x,z)\\  
=& \sum_{z} P_Z(z)   \sum_x P_{X|Z}(x|z) \overline{\mathbb{1}}_f^*(x,z)\\  
=& \sum_z P_Z(z)   (1-\max_x P_{X|Z} (x\mid z))\\
=& B(X\mid Z)  \qquad \mbox{(cfr. \autoref{def:BayesError})}
\end{array}
\end{array}
\]
}
\renewcommand*{\arraystretch}{1}
\end{proof}
}

\balance

\clearpage
\MakeEndNotes
\onecolumn
\MarkupsHowto 

\end{document}